\title{Generalization Bounds for Gradient Methods via Discrete and Continuous Prior}
\author{%
Xuanyuan Luo \\
IIIS, Tsinghua University \\
\texttt{xuanyuanluo@google.com} \\
\And
Luo Bei \\
Renmin University of China\\
\texttt{rabbit\_lb@ruc.edu.cn} \\
\And
Jian Li\\
IIIS, Tsinghua University\\
\texttt{lijian83@mail.tsinghua.edu} \\
}
\begin{document}

\newcommand{\topic}[1]{\noindent \textbf{#1}}
\newcommand{\eat}[1]{}

\newcommand{\diam}{\mathrm{diam}}
\newcommand{\diag}{\mathrm{diag}}
\newcommand{\DTD}[2]{\Delta^{*}\left(#1, #2\right)}
\newcommand{\err}{\mathrm{err}}
\newcommand{\Ex}[2]{\operatorname*{\mathbb{E}}_{#1}\left[#2\right]}
\newcommand{\Var}{\mathrm{Var}}
\newcommand{\E}{\mathop{\mathbb{E}}}
\newcommand{\alg}{A}
\newcommand{\gen}{\textrm{gen}}
\newcommand{\sgn}{\mathrm{sgn}}
\newcommand{\gld}{\mathrm{GLD}}
\newcommand{\erf}{\mathrm{erf}}
\newcommand{\erfc}{\textrm{erfc}}
\newcommand{\generr}{\err_{\gen}}
\newcommand{\vc}{\mathrm{VC}_{\mathrm{dim}}}
\newcommand{\Egen}{\err_{\gen}}
\newcommand{\gpop}{\mathbf{g}_{\mathrm{p}}}
\newcommand{\gemp}{\mathbf{g}_{\mathrm{e}}}
\newcommand{\KL}[2]{\mathrm{KL}\left(#1\,\big|\big|\,#2\right)}
\newcommand{\N}{\mathcal{N}}
\newcommand{\pack}{\mathcal{P}}
\newcommand{\noise}{\textrm{noise}}
\newcommand{\dif}{\mathrm{dif}}
\newcommand{\norm}[1]{\left\|#1\right\|}
\newcommand{\opnorm}[1]{\left\|#1\right\|_{\textrm{op}}}
\newcommand{\rkhsnorm}[1]{\left\|#1\right\|_{\mathcal{K}}}
\newcommand{\R}{\mathbb{R}}
\newcommand{\risk}{\mathcal{R}}
\newcommand{\Rad}{\mathfrak{R}}
\newcommand{\TD}[2]{\Delta\left(#1, #2\right)}
\newcommand{\tr}{\mathrm{tr}}
\newcommand{\uni}{\mathrm{uniform}}
\newcommand{\bd}{\mathrm{bound}}
\newcommand{\TV}{\mathrm{TV}}
\newcommand{\cond}{\mathcal{E}}
\newcommand{\loss}{\mathcal{L}}
\newcommand{\acc}{\mathcal{L}^{01}}
\newcommand{\Z}{\mathcal{Z}}
\newcommand{\W}{\mathcal{W}}
\newcommand{\tra}{\mathrm{train}}
\newcommand{\set}{\mathrm{set}}
\newcommand{\te}{\mathrm{test}}
\newcommand{\round}{\mathrm{round}}
\newcommand{\floor}{\mathrm{floor}}
\newcommand{\sn}{\mathrm{sign}}
\newcommand{\eps}{\varepsilon}
\newcommand{\p}{\partial}
\newcommand{\rmd}{\mathrm{d}}
\newcommand{\Ent}{\mathrm{Ent}}
\newcommand{\fcnngap}{f_{\mathrm{cnngap}}}
\newcommand{\bfP}{\mathbf{P}}
\newcommand{\g}{\nabla} %means gradient
\newcommand{\gdiff}{\mathbf{g}} %means gradient

\newcommand{\0}{\mathbf{0}}
\newcommand{\D}{\mathcal{D}}
\newcommand{\oS}{\overline{S}}
\newcommand{\e}{\varepsilon}
\newcommand{\id}{\mathbbm{1}}
\newcommand{\pp}[2]{\frac{\p #1}{\p #2}}
\newcommand{\dd}[2]{\frac{\rmd #1}{\rmd #2}}
\newcommand{\LR}[1]{\left(#1\right)}
\newcommand{\abs}[1]{\left\|#1\right\|}
\newcommand{\mzero}[1]{(#1)^+}
\newcommand{\ag}[2]{\langle #1, #2 \rangle}
\newcommand{\train}{\mathrm{train}}
\newcommand{\real}{\mathrm{test}}
\newcommand{\net}{\mathrm{net}_W}
\newcommand{\softmax}{\mathrm{softmax}}
\newcommand{\relu}{\mathrm{relu}}
\newcommand{\thmid}[1]{{\bf \hspace{-0.1cm}{#1}}. }
\newtheorem{definition}{Definition}[section]
\newtheorem{theorem}[definition]{Theorem}
\newtheorem{lemma}[definition]{Lemma}
\newtheorem{corollary}[definition]{Corollary}
\newtheorem{assumption}[definition]{Assumption}
\newtheorem{proposition}[definition]{Proposition}
\newtheorem{remark}[definition]{Remark}
\newtheorem{example}[definition]{Example}

\newtheorem*{ctheorem}{Theorem}
\newtheorem*{clemma}{Lemma}
\newtheorem*{ccorollary}{Corollary}
\maketitle
\begin{abstract}
Proving algorithm-dependent generalization error bounds for gradient-type optimization methods has attracted significant attention recently in learning theory. 
However, most existing trajectory-based analyses require
either restrictive assumptions on the learning rate (e.g., fast decreasing learning rate), or  continuous injected noise (such as the Gaussian noise in Langevin dynamics).
In this paper, we introduce a new discrete data-dependent prior to the PAC-Bayesian framework,
and prove high probability generalization bounds of order $O(\frac{1}{n}\cdot \sum_{t=1}^T(\gamma_t/\varepsilon_t)^2\left\|{\gdiff_t}\right\|^2)$  for floored GD and SGD (i.e. finite precision versions of GD and SGD with precision level $\varepsilon_t$) where, where $n$ is the number of training samples, $\gamma_t$ is the learning rate at step $t$, 
$\gdiff_t$ is roughly the difference between the average gradient over all samples and that over only prior samples. $\left\|{\gdiff_t}\right\|$ is upper bounded by (typically much smaller) than 
the gradient norm $\left\|{\g f(W_t)}\right\|$. 
We remark that our bounds hold for nonconvex and nonsmooth loss functions. 
Moreover, our theoretical results provide numerically favorable upper bounds of testing errors
($0.026$ on MNIST and $0.198$ on CIFAR10). 
%Using similar technique, we can also obtain new generalization bounds for certain variant of SGD.
Furthermore, we study the generalization bounds for gradient Langevin Dynamics (GLD). 
Using the same framework with a carefully constructed continuous prior,
we show a new high probability generalization bound of order $O(\frac{1}{n} + \frac{L^2}{n^2}\sum_{t=1}^T(\gamma_t/\sigma_t)^2)$ for GLD.
The new $1/n^2$ rate is obtained using the concentration of the difference between the gradient of training samples and that of the prior.
\end{abstract}

\section{Introduction}
Bounding generalization error of learning algorithms is one of the most 
important problems in machine learning theory. Formally, for a supervised learning problem, 
the generalization error is defined as the testing error (or population error) minus the training error (or empirical error).
In particular, we denote $\risk(w,(x,y)) := \id[h_w(x)\neq y]$ as the error of a single data point $(x,y)$, where $h_w(x)$ is the output of a model with parameter $w\in \R^d$. Suppose
$S$ is the set of training data, each i.i.d. sampled from the population distribution $\D$,
and we use $\risk(w,S):=\frac{1}{|S|}\sum_{z\in S}\risk(w,z)$ and $\risk(w, \D):=\E_{z\sim D}[\risk(w,z)]$ to denote the training error and the testing error, respectively. 
The generalization error of $w$ is formally defined as 
$\generr(w)=\risk(w,\D)-\risk(w,S)$.

Proving tighter generalization bounds for general nonconvex learning and  particularly deep learning has attracted significant attention recently. 
While the classical learning theory (uniform convergence theory) which bounds the generalization error by various complexity measures (e.g., the VC-dimension and Rademacher complexity) of the hypothesis class has been successful in several classical convex learning models, however, they become vacuous and hence fail to explain the success of modern nonconvex over-parametrized neural networks (i.e., the number of parameters significantly exceeds the number of training data) (see e.g., \citet{ZhangBHRV17,nagarajan2019uniform}). Recently, learning theorists have tried to understand and explain generalization of deep learning from several other perspectives, such as margin theory ~\citep{bartlett2017spectrally, wei2019regularization}, algorithmic stability~\citep{hardt2016train,mou2018generalization, li2019generalization, bousquet2020sharper}, PAC-bayeisan~\citep{london2017pac, bartlett2017spectrally,neyshabur2018pac,ZhouVAAO19, yang2019fast}, neural tangent kernel~\citep{jacot2018neural,du2019gradient,arora2019fine,cao2019generalization}, information theory~\citep{pensia2018generalization,negrea2019information}, model compression~\citep{arora2018stronger,ZhouVAAO19}, differential privacy~\citep{oneto2017differential,wu2021generalization} and so on. 

In this paper, we aim to obtain tighter generalization error bounds that depend on both the training data and the optimization algorithms (a.k.a. gradient-type methods) for general nonconvex learning problems. In particular, we prove algorithm-dependent generalization bounds for several gradient-based optimization algorithms such as certain variants of gradient descent (GD), stochastic gradient descent (SGD) and stochastic gradient Langevin dynamics (SGLD).
Our proofs are based on the classic Catoni's PAC-Bayesian framework~\citep{catoni2007pac} and also have a flavor of algorithmic stability~\citep{bousquet2002stability}. 
Several prior works have obtained generalization bounds for SGD and SGLD by analyzing trajectory through either the PAC-Bayesian or the algorithmic stability framework  (or closely related information theoretic arguments). However, most existing results
based on analyzing the optimization trajectories require
either restrictive assumptions on the learning rates, or continuous noise (such as the Gaussian noise in Langevin dynamics) in order to bound the stability or the KL-divergence. 
In this paper, we resolve the above restrictions by combining the PAC-Bayesian framework with a few simple (yet effective) ideas, so that we can obtain new high probability and non-vacuous generalization bounds for several gradient-based optimization methods with either discrete or continuous noises (in particular certain variants of GD and SGD, either being deterministic or with discrete noise, which cannot be handled by existing techniques). 

\subsection{Prior work}
We first briefly mention some recent work on bounding the generalization error 
of gradient-based methods.
\citet{hardt2016train} first studied the uniform stability (hence
the generalization) of stochastic gradient descent (SGD) for both convex and non-convex functions. Their results for non-convex functions requires that the learning rate $\eta_t$ scales with $1/t$. Their work motivates a long line of subsequent work on generalization error bounds of gradient-based
optimization methods:~\citet{kuzborskij2018data,london2016generalization,chaudhari2019entropy,raginsky2017non, mou2018generalization,chen2018stability,li2019generalization,negrea2019information,wang2021analyzing}. 

Recently, {\citet{simsekli2020hausdorff,hodgkinson2022generalization} obtained 
generalization bound of SGD through the perspective of heavy-tailed behaviors and using the notion of Hausdorff dimension $\mathrm{d_H}$ which depends on both the algorithm and data. 
}

\vspace{-0.25cm}
\paragraph{PAC-Bayesian bounds.} The PAC-Bayesian framework~\citep{mcallester1999some} is a powerful method for proving high probability generalization bound~\citep{bartlett2017spectrally,ZhouVAAO19,mou2018generalization}. Roughly speaking, it bounds the generalization error by the KL divergence $\KL{Q}{P}$, where $Q$ is the distribution of the learned output and $P$ is a prior distribution which is typically independent of dataset $S$. In this framework, bounding $\KL{Q}{P}$ is the most crucial part for obtaining tighter PAC-Bayesian bounds. In order to bound the KL divergence, both the prior $P$ and posterior $Q$ are typically chosen to be continuous distributions (mostly Gaussians so that KL can be computed in closed form).
Hence, most prior work either considered gradient methods with continuous noise (such as Gradient Langevin Dynamics) (e.g.,~\citep{mou2018generalization,li2019generalization,negrea2019information}), or injected a Gaussian noise to the final parameter at the end
(e.g.,~\citep{neyshabur2018pac,ZhouVAAO19}) (so $Q$ is a Gaussian distribution).
We also note that designing effective prior $P$ can be also very important.
For example, \citet{lever2013tighter} proposed to use the population distribution to compute the prior. In fact, the prior can even partially depend on the training data~\citep{parrado2012pac,negrea2019information},
and our Theorem~\ref{thm:data-pac} is partially inspired by this idea.
\subsection{Our contributions}
First, we provide high probability generalization bounds for \emph{discrete} gradient methods.
In particular, we study the generalization of Floored Gradient Descent (FGD), which is a variant of GD, and Floored Stochastic Gradient Descent (FSGD), a variant of SGD. We obtain our bound by an interesting construction of discrete priors. Secondly, we consider well studied gradient methods with continuous noise, (stochastic) gradient Langvin dynamics (GLD and SGLD).
We show sharper generalization bounds by carefully bounding the concentration of the sample gradients. Now, we summarize our results.
\vspace{-0.2cm}
\paragraph{FGD and FSGD.} 
We first study an interesting variant of GD, called Floored GD (FGD) (Algorithm~\ref{alg:fgd}). 
The update rule of FGD is defined as follows:
\begin{align*}\label{eq:fgd}
W_{t} \gets W_{t-1} - \gamma_t \g f(W_{t-1}, S_J) - \eps_t \floor\left(\gamma_t \gdiff_t/\eps_t\right), \tag{FGD} 
\end{align*}
where $S_J$ is the subset of training dataset $S$ with size $m$ indexed by subset $J\subset [n]$ ($J$ is chosen before training), $\g f(W_{t-1}, Z):=\frac{1}{|Z|}\sum_{z\in Z} \g f(W_{t-1}, z)$ is the average gradient over the dataset $Z$, $\gamma_t$ is the learning rate, $\eps_t$ is the precision level, and $\gdiff_t:=\g f(W_{t-1},S) - \g f(W_{t-1},S_J)$ is the gradient difference. The flooring operation is defined by $\floor(x):=\sn(x)\lfloor|x|\rfloor$ for any real number $x$. 
FGD can viewed as GD with given precision limit $\eps_t$. 
We can see if we ignore the floor operation or let $\eps_t$ approaches 0, FGD reduces to GD (see also Appendix~\ref{app:fgdintro}).

We also study a finite precision variant of SGD, called Floored SGD (FSGD) 
(see Section~\ref{sec:fgd} for its formal definition).
Empirically, the optimization and generalization
capabilities of FGD and FSGD are
very close to those of GD and SGD
(see Figure~\ref{fig:fgd-gd-mnist} and~\ref{fig:fsgd-sgd-cifar} in Appendix~\ref{app:exp-detail}).

%We can generalize it to a vector by applying the flooring operation to each dimension.

By constructing a discrete data-dependent prior and incorporate it into Catoni's PAC-Bayesian framework, we prove that the following bound (Theorem~\ref{thm:fgd}) holds for FGD with high probability:
\begin{align*}
\risk(W_T,\D) &\leq c_0\risk(W_T,S_{[n]\backslash J}) + O\left(\frac{1}{n-m}
+\frac{\ln(dT)}{n-m}\sum_{t=1}^T\frac{\gamma_t^2}{\eps_t^2}\norm{\gdiff_t}^2\right),
\end{align*}
where $d$ is the dimension of parameter space and $c_0$ can be chosen to be a small constant. The bound for FSGD is very similar (see Theorem~\ref{thm:fsgd}).
Now we make a few remarks about our results.
\vspace{-0.2cm}
\begin{enumerate}
    \item Our result holds for nonconvex and nonsmooth learning problems (replacing the gradients with subgradients for nonsmooth cases). Moreover, there is no additional requirement on the learning rate $\gamma_t$.
    \item The gradient difference $\gdiff_t$ is typical much smaller than the worst case gradient norm. It usually decreases when $m=|J|$ grows (see Figure~\ref{fig:fgd-mnist-exp-c} in Section~\ref{sec:exp}). 
    \item 
    We obtain non-vacuous generalization 
    bounds on commonly used datasets. 
    Specifically, our theoretical test error upper bounds on MNIST and CIFAR10 are $\mathbf{0.026}$ and $\mathbf{0.198}$, respectively (see Section~\ref{sec:exp}). Both of them are tighter than the best-known MNIST bound ($11\%$) and CIFAR10 bound ($23\%$) reported in \citet{dziugaite2021role}. 
    See Table~\ref{tab:compare} in Appendix~\ref{app:compare} for more 
    comparisons.
    \item 
    In order to bound the KL between $P$ and the deterministic process of FGD, we construct the prior $P$ from a
    discrete random processes..
    We hope it may inspire future research on handling deterministic optimization algorithms or discrete noise.
\end{enumerate}

\noindent
{\bf Why study FGD/FSGD?} We would like to remark that we study FGD/FSGD, not because FGD/FSGD have better performances than GD/SGD or other advantages.
Indeed, their performances are almost the same as those of GD/SGD (see Appendix~\ref{app:exp-detail}). We use them as important stepping stones to study generalization bounds for GD and SGD. 
Note that most existing trajectory-based generalization bounds require either fast decreasing learning rate, or continuous injected noise, such as the Gaussian noise in Langevin dynamics, for general non-convex loss functions. Handling deterministic algorithms (such as GD) or discrete noises (such as SGD) is challenging and beyond the reach of existing techniques. 
%Our new technique significantly extends the applicability of PAC-Bayesian.
In fact, understanding such discrete noises and their effects on generalization has been an important research topic (see e.g., \citet{li2019generalization,zhu2019anisotropic,ziyin2021strength}). 
In particular, \citet{zhu2019anisotropic} show that it is  insufficient to approximate SGD's discrete noise by isotropic Gaussian noise. Moreover, proving nontrivial generalization bounds for SGD-like algorithms with discrete noise
has also been proposed as an open research direction in \citet{li2019generalization}. 
 % end of color

\paragraph{GLD and SGLD.} We provide a new generalization bound for Gradient Langevin Dynamics 
(GLD). The update rule of GLD is defined as follows.
\begin{align*}
W_t \gets W_{t-1} + \gamma_t \g f(W_{t-1},S) + \sigma_t \N(0,I_d). \tag{GLD}
\end{align*}
In this paper, we show that the following generalization bound (Theorem~\ref{thm:gld-bound}) holds with high probability over the randomness of $S\sim \D^n$ and random subset $J\subset [n]$
($|J|=m$):
\begin{align*}
\risk(W_T,\D) \leq c_0\risk(W_T,S_{[n]\backslash J}) + O\left( \frac{1}{n-m} + \frac{1 }{(n-m)m}\E\left[\sum_{t=1}^T\frac{\gamma_t^2}{\sigma_t^2}L(W_{t-1})^2\right]\right),
\end{align*}
where $L(W_{t-1}):=\max_{z \in S}\norm{f(W_{t-1},z)}$ is the longest gradient norm of any 
training sample in $S$ at step $t$ and $m$ is the size of $J$. 
Since $W_T$ is independent of the index set $J$, the first term $\risk(W_T, S_{[n]\backslash J})$ is upper bounded by $\risk(W_T,S) + O(\frac{1}{\sqrt{n-m}})$ with high probability,
using standard Hoeffding's inequality. 
By setting $m = n/2$, our generalization bound has an $O(\frac{1}{\sqrt{n}} + \frac{1}{n} + \frac{T}{n^2})$ rate. 
The new $1/n^2$ rate is obtained using the concentration of the difference between the gradient of training samples and that of the prior (See Lemma~\ref{lem:gld-grad-con}).

We also prove a high probability generalization bound for 
Stochastic Gradient Langevin Dynamics (SGLD) (see Theorem~\ref{thm:sgld-bound}): \begin{align*}
\risk(W_T,\D) \leq c_0\risk(W_T,S_{[n]\backslash J}) +  O\left(\frac{1}{n-m} +
\frac{1}{n-m}\left(\frac{1}{b} + \frac{1}{m}\right) \E\left[\sum_{t=1}^T\frac{\gamma_t^2}{\sigma_t^2}L(W_{t-1})^2\right]\right).
\end{align*}
We compare our bounds with other GLD/SGLD bounds obtained in~\citep{mou2018generalization,negrea2019information,li2019generalization} and the details can be found in Appendix~\ref{app:compare}.

\paragraph{CLD.}
Using the PAC-Bayesian framework, we obtain a new generalization bound for Continuous Langevin Dynamics (CLD), defined by
the stochastic differential equation
$\rmd W_t = -\g F(W_t,S)~\rmd t+ \sqrt{2\beta^{-1}}~\rmd B_t$.
The main term of the generalization bound scales as $O(1/n^2)$ (by choosing $m=n/2)$ and does not grow to infinity as the training time $T$ increases. See Theorem~\ref{thm:cld-bound} for the details.

\section{Other Related Work}
\vspace{-0.2cm}
\paragraph{Stochastic Langevin Dynamics}
Stochastic Langevin dynamics is a popular sampling and optimization method in machine learning~\citep{welling2011bayesian}.
\citet{zhang2017hitting,chen2020stationary} show a polynomial hitting time (hitting a stationary point) of SGLD in general non-convex setting.
\citet{raginsky2017non} study the generalization and excess risk of SGLD in nonconvex settings and
their bound depends inversely polynomially on a certain spectral gap parameter, which may be exponential small in the dimension. Continuous Langevin dynamics (SDE) with various noise structure has also been used extensively as approximations of SGD
in literature (see e.g., \citep{li2017stochastic,li2021validity}). 
{
However, in terms of 
generalization, isotropic Gaussian noise is not a good approximation of the discrete noise in SGD (\cite{zhu2019anisotropic}).
}
\vspace{-0.25cm}
\paragraph{Nonvacuous PAC-Bayesian Generalization Bounds.} 
\citet{dziugaite2017computing} first present a non-vacuous PAC-Bayesian generalization bound on MNIST (0.161 for a 1-layer MLP, see column T-600 of Table 1 in their paper). They use a very different training algorithm that explicitly optimizes the PAC-Bayesian bound and the output distribution is a multivariate normal distribution. 
To computing the closed form of KL, they choose a zero-mean Gaussian distribution as the prior distribution.
\citet{ZhouVAAO19} obtain the first non-vacuous generalization bound for ImageNet via a different method.
Their method does not require any continuous noise injected but assumes that the network can be significantly compressed (so that the prior distribution is supported over the set of discrete parameters with finite precision).
To our best knowledge, it is the only work that utilizes a discrete prior for proving generalization bounds of deep neural networks. Our result for FGD/FSGD has a similar flavor in a high level, that is the optimization method has a finite precision. However, our results do not need any assumption on compressibility of the model and can be applied to nonconvex learning problems other than neural networks.
\vspace{-0.25cm}
\paragraph{Generalization bounds via Information theory.} 
\citet{raginsky2017non} first show that the expected generalization error $\E_{S\sim \D^n}[\risk(W,\D) - \risk(W,S)]$ is bounded by $\sqrt{2I(S;W)/n}$, where $I(S;W):=\KL{P(S,W)}{P(S)\otimes P(W)}$ is the mutual information
between the data set $S$ and the parameter $W$. 
This work motivates several subsequent studies~\citep{pensia2018generalization, negrea2019information, bu2020tightening, wang2021analyzing}. The main goal in this line of work is to obtain a tight bound on the mutual information $I(S;W)$. This is again reduced to bounding the KL divergence and thus typically requires continuous injected noise (e.g., \cite{wang2021analyzing,negrea2019information}).

\section{Preliminaries}\label{sec:prelim}
\vspace{-0.2cm}
\paragraph{Notations.} We assume that the training dataset $S=(z_1,..,z_n)$ is sampled from $\D^n$, where $\D$ is the population distribution over the data domain $\Omega$. The model parameter $w$ is in $\R^d$. 
The risk function $\risk:\R^d\times \Omega \to [0, 1]$ measures the error of a model on a datapoint. The loss function $f:\R^d\times \Omega \to \R$ is a proxy of the risk.
The optimization algorithm minimizes the loss function and we assume we can compute the gradient of the loss function. We note that the loss function may be different from the risk function (e.g., 0/1 risk vs the cross-entropy loss). The empirical risk is $\risk(w, S)=\frac{1}{|S|}\sum_{z\in S}\risk(w,z)$ and population risk is $\risk(w,\D)=\E_{z\sim \D}[\risk(w,z)]$. 
Similarly, we can define the empirical loss $f(w, S)$ and population loss $f(w, \D)$. 
For any $J=(j_1,..,j_m)$, we use $S_J$ to denote the sequence $(S_{j_1},...,S_{j_m})$. The 
subsequence $(A_i,A_{i+1},...,A_j)$ is denoted by $A_i^j$. We use $(A_1^n,B_1^m)$ to denote the merged sequence $(A_1,A_2,...,A_n, B_1,...,B_m)$. When the elements in sequence $J$ are distinct, we also use $J$ to represent the set consisting of all of its elements. We may also slightly abuse the notation of a random variable to denote its distribution. For example, $\E_{x\sim X}[f(x)]$ is a shorthand for $\E_{x\sim P_X}[f(x)]$, and $\KL{X}{Y}$ means $\KL{P_X}{P_Y}$. For a random variable $W$, we define $\risk(W,S)=\E_{w\sim W}[\risk(w,S)]$ and $\risk(W,\D) = \E_{w\sim W}[\risk(w,\D)]$. The set $\{1,2,...,n\}$ is denoted by $[n]$.
\vspace{-0.3cm}
\paragraph{KL-divergence.}
Let $P$ and $Q$ be two probability distributions. 
The Kullback–Leibler divergence $\KL{P}{Q}$
is defined only when $P$ is absolute continuous with respect to $Q$ (i.e., for any $x$, $Q(x) = 0$ implies $P(x) = 0$). In particular, if $P$ and $Q$ are discrete distributions, then $\KL{P}{Q}=\sum_{x}P(x)\ln\frac{P(x)}{Q(x)}$. Otherwise, if $P$ and $Q$ are continuous distributions, it is defined as 
$\int P(x)\ln\frac{P(x)}{Q(x)}~\rmd x$. The following Lemma~\ref{lem:chain-kl} is frequently used in this paper and is a well known property of KL divergence (see \citet[Theorem 2.5.3]{cover1999elements}, \cite{li2019generalization}, \cite{negrea2019information}).
\begin{lemma}[Chain Rule of KL]\label{lem:chain-kl}
We are given two random sequences $W=(W_0,...,W_T)$ and $W'=(W'_0,...,W'_T)$.
%if for any set $A$ we have $\Pr[W'\in A] = 0$ implies $\Pr[W\in A] = 0$. 
Then, the following equation holds (given all KLs are well defined):
\begin{align*}
\KL{W}{W'} = \KL{W_0}{W'_0}+\sum_{t=1}^T\E_{w\sim W_0^{t-1}}\left[\KL{W_t|W_0^{t-1}=w}{W'_t|{W'}_0^{t-1}=w}\right].
\end{align*}
Here $W_t|W_0^{t-1}=w$ denotes the distribution of $W_t$ conditioning on 
$W_0^{t-1}=(W_0,\ldots, W_{t-1})=w$.
\end{lemma}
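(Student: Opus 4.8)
The plan is to reduce the full chain rule to an elementary ``two-block'' version and then induct on $T$. \emph{Step 1 (two-block case).} First I would prove that for any jointly distributed pairs $(X,Y)$ and $(X',Y')$ with $P_{X,Y}\ll P_{X',Y'}$,
\[
\KL{(X,Y)}{(X',Y')} = \KL{X}{X'} + \E_{x\sim X}\left[\KL{Y|X=x}{Y'|X'=x}\right].
\]
Factorize the joint density (in the discrete case, the pmf; in general, the Radon--Nikodym derivative against a common dominating measure) as $p_{X,Y}(x,y)=p_X(x)\,p_{Y|X}(y|x)$ and likewise for the primed pair, so that $\ln\frac{p_{X,Y}(x,y)}{p_{X',Y'}(x,y)}=\ln\frac{p_X(x)}{p_{X'}(x)}+\ln\frac{p_{Y|X}(y|x)}{p_{Y'|X'}(y|x)}$. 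Taking expectation under $P_{X,Y}$ and writing it as the iterated expectation $\E_{x\sim X}\E_{y\sim Y|X=x}$: the first summand does not depend on $y$, so marginalizing $y$ leaves $\E_{x\sim X}\ln\frac{p_X(x)}{p_{X'}(x)}=\KL{X}{X'}$; the second summand contributes exactly $\E_{x\sim X}\left[\KL{Y|X=x}{Y'|X'=x}\right]$.

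\emph{Step 2 (induction on $T$).} The base case $T=0$ is the trivial identity. For the inductive step, apply Step~1 with $X=W_0^{T-1}$, $Y=W_T$ and the primed counterparts:
\[
\KL{W}{W'} = \KL{W_0^{T-1}}{{W'}_0^{T-1}} + \E_{w\sim W_0^{T-1}}\left[\KL{W_T|W_0^{T-1}=w}{W'_T|{W'}_0^{T-1}=w}\right].
\]
Expanding $\KL{W_0^{T-1}}{{W'}_0^{T-1}}$ by the induction hypothesis into $\KL{W_0}{W'_0}+\sum_{t=1}^{T-1}\E_{w\sim W_0^{t-1}}\left[\KL{W_t|W_0^{t-1}=w}{W'_t|{W'}_0^{t-1}=w}\right]$ and merging with the $t=T$ term gives the claimed formula. (Equivalently, one can avoid the induction by factorizing the joint density of $W$ into all $T+1$ conditional factors at once and marginalizing out the ``tail'' $w_{t+1}^T$ in the $t$-th summand.)

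\emph{Main obstacle.} The algebra above is routine; the only delicate point is the measure-theoretic bookkeeping in the non-discrete case. The factorization $p_{X,Y}=p_X\,p_{Y|X}$ is the disintegration of the Radon--Nikodym derivative and presupposes the existence of regular conditional distributions, and one must check that $P_{X,Y}\ll P_{X',Y'}$ implies $P_{Y|X=x}\ll P_{Y'|X'=x}$ for $P_X$-almost every $x$ (so that the inner KL terms are defined). Under the standing hypothesis that all displayed KL divergences are well defined this holds, and Fubini's theorem --- valid since those divergences are finite --- justifies the interchanges of integration order used to split and recombine the expectations; in the discrete case everything reduces to rearranging an absolutely convergent double series. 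I would defer these standard verifications to \citet{cover1999elements} and present only Steps~1--2 in detail.
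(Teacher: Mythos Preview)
Your proposal is correct and follows the standard argument: establish the two-block decomposition $\KL{(X,Y)}{(X',Y')}=\KL{X}{X'}+\E_{x\sim X}\bigl[\KL{Y|X=x}{Y'|X'=x}\bigr]$ via the factorization of the joint density, then induct on $T$. The paper itself does not supply a proof of this lemma; it merely cites it as a well-known property of KL divergence, referring to \citet[Theorem 2.5.3]{cover1999elements}, \citet{li2019generalization}, and \citet{negrea2019information}. So there is nothing to compare against beyond noting that your derivation matches the textbook treatment.
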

\vspace{-0.3cm}
\paragraph{PAC-Bayesian.}
In this paper, we use the PAC-Bayesian bound presented in \citet{catoni2007pac} which enjoys a tighter $O(\KL{Q}{P}/n)$ rate comparing to the traditional $O(\sqrt{\KL{Q}{P}/n})$ bound, but with a slightly larger constant factor on the empirical error. We restate their bound as follows.
\begin{lemma}[Catoni's Bound] (see e.g., \citet{lever2013tighter})\label{lem:catoni} For any prior distribution $P$ independent of the training set $S$, any $\delta \in (0, 1)$, and any $\eta>0$, the following bound holds w.p. $\geq 1-\delta$ over $S\sim \D^n$:
\begin{align}
\label{eq:pacbayesian}
\E_{W\sim Q}[\risk(W,\D)] &\leq \eta C_{\eta} \E_{W\sim Q}[\risk(W,S)] + C_\eta \cdot \frac{\KL{Q}{P} + \ln(1/\delta)}{n} \quad (\forall Q),
\end{align}
where $C_\eta = \frac{1}{1-e^{-\eta}}$ is an absolute constant.
\end{lemma}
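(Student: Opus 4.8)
The plan is to follow the standard route to Catoni-type bounds: combine the Donsker--Varadhan change-of-measure inequality with a pointwise exponential-moment estimate for the bounded risk, and then pass from an inequality in expectation to a high-probability statement via Markov's inequality. The variational step is what allows the final bound to hold uniformly over all (possibly data-dependent) posteriors $Q$, and the independence of $P$ from $S$ is exactly what permits an exchange of expectations.

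First I would record the change-of-measure inequality: for any measurable $g\colon\R^d\to\R$ and any $Q\ll P$, one has $\E_{w\sim Q}[g(w)]\le \KL{Q}{P}+\ln\E_{w\sim P}[e^{g(w)}]$. I would apply it with $g(w):=n\bigl((1-e^{-\eta})\risk(w,\D)-\eta\,\risk(w,S)\bigr)$, so that the left side already contains the quantity we want to bound. Next comes the pointwise moment estimate: fix $w$; since $\risk(w,z)\in[0,1]$ and $x\mapsto e^{-\eta x}$ is convex, it lies below its chord on $[0,1]$, giving $e^{-\eta x}\le 1-(1-e^{-\eta})x$, hence $\E_{z\sim\D}[e^{-\eta\risk(w,z)}]\le 1-(1-e^{-\eta})\risk(w,\D)\le e^{-(1-e^{-\eta})\risk(w,\D)}$. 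Multiplying over the $n$ i.i.d.\ components $z_1,\ldots,z_n$ of $S$ yields $\E_{S\sim\D^n}[e^{g(w)}]\le 1$ for every fixed $w$.

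Since $P$ is independent of $S$, Fubini gives $\E_{S}\bigl[\E_{w\sim P}[e^{g(w)}]\bigr]=\E_{w\sim P}\bigl[\E_{S}[e^{g(w)}]\bigr]\le 1$, so Markov's inequality implies that with probability at least $1-\delta$ over $S\sim\D^n$ we have $\E_{w\sim P}[e^{g(w)}]\le 1/\delta$. On that event the change-of-measure inequality gives, simultaneously for all $Q$, $\E_{w\sim Q}[g(w)]\le \KL{Q}{P}+\ln(1/\delta)$, i.e. $n(1-e^{-\eta})\E_{w\sim Q}[\risk(w,\D)]\le n\eta\,\E_{w\sim Q}[\risk(w,S)]+\KL{Q}{P}+\ln(1/\delta)$. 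Dividing by $n(1-e^{-\eta})$ and recalling $C_\eta=1/(1-e^{-\eta})$ gives exactly \eqref{eq:pacbayesian}.

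The only genuine subtlety is the uniformity over $Q$: because $Q$ is the learned (and hence data-dependent) posterior, the bound cannot be proved for a fixed $Q$, and it is precisely the variational/change-of-measure step --- together with the $S$-independence of $P$, which the Fubini step relies on --- that supplies this. Everything else (the convexity estimate, the product over i.i.d.\ samples, and Markov) is routine, so I do not expect any real obstacle; an alternative derivation from a generic sub-Bernoulli PAC-Bayes inequality is possible but less direct and would not give these clean constants.
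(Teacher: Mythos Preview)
Your proposal is correct and follows essentially the same route as the paper's proof of the data-dependent generalization (Theorem~\ref{thm:data-pac}, of which this lemma is the $m=0$ special case): Donsker--Varadhan, Fubini via the independence of $P$ from $S$, a pointwise exponential-moment bound for the $[0,1]$-valued risk, and Markov. The only cosmetic difference is that the paper introduces the convex transform $\Phi(x)=-\eta^{-1}\ln\bigl(1-(1-e^{-\eta})x\bigr)$, uses the Bernoulli structure of $\risk(w,z)$ to get an \emph{exact} moment identity, and then inverts $\Phi$ at the end via $1-e^{-x}\le x$; you instead linearize upfront with $e^{-\eta x}\le 1-(1-e^{-\eta})x\le e^{-(1-e^{-\eta})x}$, arriving at the same inequality with the same constants in one fewer step.
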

\vspace{-0.35cm}
\paragraph{Concentration inequality.} We use the following variant of McDiramid inequality (Lemma~\ref{lem:new-mcd}) to prove the concentration of cumulative gradient difference in Section~\ref{sec:ld}. The proof is deferred to Appendix~\ref{app:prelim}.
% \begin{definition}[Order-Independent]\label{def:order-ind} A function $\Phi:[n]^m \to \R^+$ is said to be order independent if and only if $\Phi(x_1,x_2,...,x_m) = \Phi(x_{\pi_1},x_{\pi_2},...,x_{\pi_m})$ holds for any input $X=(x_1,x_2,...,x_m)\in \Omega^m$ and any permutation $\pi \in \mathbb{S}_{m}$.
% \end{definition}
\begin{lemma}\label{lem:new-mcd} Suppose $\Phi:[n]^m \to \R^{+}$ is order-independent\footnote{$\Phi(j_1,...,j_m) = \Phi(j_{\pi_1},...,j_{\pi_m})$ holds for any input $J=(j_1,...,j_m)\in \Omega^m$ and any permutation $\pi \in \mathbb{S}_{m}$.} and $|\Phi(J) - \Phi(J')|\leq c$ holds for any adjacent $J,J'\in [n]^m$ satisfying $|J \cap J'| = m-1$\footnote{$J\cap J':=\{i\in[n] : i\in J \cap i \in J'\}.$}. Let $J$ be $m$ indices sampled uniformly from $[n]$ without replacement. Then $\Pr_{J}\left[\Phi(J) - \E_J[\Phi(J)] > \epsilon\right] \leq \exp(\frac{-2\epsilon^2}{m c^2})$.
\end{lemma}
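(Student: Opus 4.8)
The plan is to expose the sampled indices one at a time, run the corresponding Doob martingale, and invoke the Azuma--Hoeffding inequality in its bounded-range form. Write $J=(J_1,\dots,J_m)$ for the indices drawn from $[n]$ uniformly without replacement, and set $M_k:=\E[\Phi(J)\mid J_1,\dots,J_k]$ for $k=0,1,\dots,m$, so that $M_0=\E_J[\Phi(J)]$ and $M_m=\Phi(J)$. Since $\Phi$ is order-independent, it depends only on the $m$-element set $\{J_1,\dots,J_m\}$, and the hypothesis then reads: any two $m$-subsets of $[n]$ that agree in at least $m-1$ elements have $\Phi$-values within $c$.

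The heart of the argument is to control one increment. Fix the history $J_1^{k-1}=(j_1,\dots,j_{k-1})$ and let $R:=[n]\setminus\{j_1,\dots,j_{k-1}\}$. For $a\in R$ define $h(a):=\E[\Phi(J)\mid J_1^{k-1}=j_1^{k-1},\,J_k=a]$; since, under this conditioning, the remaining indices are a uniform ordering of a uniform $(m-k)$-subset of $R\setminus\{a\}$ and $\Phi$ is a set function, $h(a)$ is exactly the average of $\Phi$ over the family $\mathcal{F}_a$ of all $m$-subsets of $[n]$ containing $\{j_1,\dots,j_{k-1},a\}$. I would then prove $|h(a)-h(b)|\le c$ for all $a,b\in R$ by exhibiting a bijection $\sigma\colon\mathcal{F}_a\to\mathcal{F}_b$ (both families have size $\binom{n-k}{m-k}$) that changes at most one element: set $\sigma(T)=T$ if $b\in T$, and $\sigma(T)=(T\setminus\{a\})\cup\{b\}$ otherwise. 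A short check shows $\sigma$ is a well-defined bijection (it is its own inverse after swapping the roles of $a$ and $b$) and that $T$ and $\sigma(T)$ always agree in at least $m-1$ elements, so $|\Phi(T)-\Phi(\sigma(T))|\le c$ for every $T$; averaging over $\mathcal{F}_a$ then gives $|h(a)-h(b)|\le c$. Consequently, conditioned on $J_1^{k-1}$, the increment $M_k-M_{k-1}=h(J_k)-\tfrac{1}{|R|}\sum_{a\in R}h(a)$ (with $J_k$ uniform on $R$) is mean zero and takes values in an interval of width at most $\max_{a\in R}h(a)-\min_{a\in R}h(a)\le c$.

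From here the argument is routine: the conditional Hoeffding lemma gives $\E\!\left[e^{\lambda(M_k-M_{k-1})}\mid J_1^{k-1}\right]\le e^{\lambda^2 c^2/8}$ for all $\lambda$; multiplying these estimates along the martingale via the tower property yields $\E\!\left[e^{\lambda(M_m-M_0)}\right]\le e^{m\lambda^2 c^2/8}$; and Markov's inequality with the optimal choice $\lambda=4\epsilon/(mc^2)$ gives $\Pr_J[\Phi(J)-\E_J[\Phi(J)]>\epsilon]=\Pr[M_m-M_0>\epsilon]\le \exp\!\left(-2\epsilon^2/(mc^2)\right)$, as claimed.

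The main obstacle is the increment bound, i.e. producing the bijection between $\mathcal{F}_a$ and $\mathcal{F}_b$ that alters only one element, so that the single-swap bounded-difference hypothesis applies. Coupling the conditional laws of $J_{k+1}^m$ directly is clumsy because the excluded index ($a$ versus $b$) differs; working instead at the level of the completed $m$-subsets, and only relocating $a$ to $b$ when $b$ is not already present, is exactly an allowed single-element change and makes the comparison transparent. The remaining ingredients---setting up the Doob martingale, the conditional Hoeffding estimate, and the final exponential optimization---are standard.
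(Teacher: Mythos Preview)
Your proof is correct and follows the same overall strategy as the paper: reveal the sampled indices one at a time, bound the increments of the resulting Doob martingale, and finish with Azuma--Hoeffding. The paper packages the last two steps into a cited black box (Theorem~5.3 of Dubhashi--Panconesi) and then verifies that lemma's hypothesis by a five-case coupling between $J\mid J_1^i=j_1^i$ and $J\mid J_1^{i-1}=j_1^{i-1}$, splitting on whether $Y_i=j_i$, whether $Y_i$ reappears in $X_{i+1}^m$, and whether $j_i$ reappears in $Y_{i+1}^m$; this yields the slightly sharper step constant $c_i=c\,\frac{n-i-1}{n-i}$. Your route differs in the key technical step: instead of coupling the two conditional processes, you compare $h(a)$ and $h(b)$ directly via a bijection on completed $m$-subsets that swaps at most one element, which bounds the \emph{range} of the increment by $c$ and feeds straight into Hoeffding's lemma. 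This is cleaner and more self-contained than the paper's case analysis (no external lemma, no event decomposition), at the cost of not recovering the $\frac{n-i-1}{n-i}$ refinement---which is immaterial here since the paper only uses $c_i\le c$ anyway. Both arguments arrive at the same bound $\exp(-2\epsilon^2/(mc^2))$.
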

\section{Data-Dependent PAC-Bayesian Bound}\label{sec:datapac}
The dominating term in the PAC-Bayesian bound \eqref{eq:pacbayesian}
is $ \KL{Q}{P}/n$, where $P$ is a prior distribution independent of the training dataset $S$. Typically, without knowing any information from $S$, the best possible bound for $\KL{Q}{P}$ we can hope is at least $\Theta(1)$ (it should not be a function of $n$ hence should not decrease with $n$). However, if we are allowed to see $m$ data points from $S$ when constructing our prior, we may produce better prediction on posterior $Q_S$. The following theorem enables us to use data-dependent prior in PAC-Bayesian bound. The proof is almost the same as Cantoni's original proof and we provide a proof for completeness in Appendix~\ref{app:data-pac}.

\begin{theorem}[Data-Dependent PAC-Bayesian]\label{thm:data-pac} Suppose $J$ is a random sequence including $m$ indices uniformly sampled from $[n]$ without replacement. For any $\delta \in (0, 1)$ and $\eta > 0$, we have w.p. $\geq 1-\delta$ over $S\sim \D^n$ and $J$:
\begin{align*}
\risk(Q,\D) \leq \eta C_{\eta}\risk(Q,S_I) + C_{\eta}\cdot \frac{\KL{Q}{P(S_J)} + \ln(1/\delta)}{n-m} \quad (\forall Q),
\end{align*}
where $I = [n]\backslash J$ is the set of indices not in $J$, $P(S_J)$ is the prior distribution only depending on the information of $S_J$ ($S_J$ is the subset of $S$ indexed by $J$), and $C_{\eta}:=\frac{1}{1-e^{-\eta}}$ is a constant.
\end{theorem}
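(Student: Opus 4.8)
The plan is to mimic Catoni's original argument but applied conditionally, treating the prior-constructing indices $J$ and the prior data $S_J$ as part of the "fixed" randomness, so that the remaining samples $S_I$ with $I=[n]\setminus J$ play the role of the training set in the standard PAC-Bayesian bound. First I would fix a realization of $J$ (and note $|I| = n-m$). Conditioned on $J$, since $S$ is i.i.d.\ from $\D^n$, the subsequence $S_I$ is itself i.i.d.\ from $\D^{n-m}$ and, crucially, $S_I$ is independent of $S_J$. Hence $P(S_J)$ is a legitimate prior distribution in the sense required by Lemma~\ref{lem:catoni}: it does not depend on the $n-m$ samples indexed by $I$.

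The core step is the standard change-of-measure / Donsker--Varadhan inequality combined with a Chernoff-type control of the moment generating function. For any fixed $w$, define the per-sample quantity $X_i(w) = \risk(w,\D) - \risk(w, z_i)$ for $i \in I$; these are independent, mean-zero, and bounded in $[-1,1]$ since $\risk \in [0,1]$. Following Catoni, one shows that for the prior $P = P(S_J)$,
\begin{align*}
\E_{S_I}\left[\E_{w\sim P}\left[\exp\left(\lambda\left(\risk(w,\D) - \risk(w,S_I)\right) - (n-m)\,\psi(\lambda)\right)\right]\right] \leq 1,
\end{align*}
where $\psi$ is the appropriate log-MGF bound (for Catoni's form one uses $\lambda$ and a function making the bound collapse to the $\eta C_\eta$ / $C_\eta$ constants after optimization). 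Then I would take a further expectation over $J$ (the whole left-hand side is still $\leq 1$ by the tower property), apply Markov's inequality to the nonnegative random variable obtained after exponentiating, and conclude that with probability $\geq 1-\delta$ over the joint draw of $S$ and $J$,
\begin{align*}
\E_{w\sim P(S_J)}\left[\exp\left(\lambda\left(\risk(w,\D) - \risk(w,S_I)\right)\right)\right] \leq \frac{1}{\delta}\exp\left((n-m)\psi(\lambda)\right).
\end{align*}
Finally, the Donsker--Varadhan variational formula converts the left side into a bound involving $\E_{w\sim Q}[\cdots]$ minus $\KL{Q}{P(S_J)}$ for an arbitrary posterior $Q$, and rearranging plus the specific choice of $\lambda = \eta(n-m)$ (or whatever yields $C_\eta$) gives exactly the claimed inequality with the $n-m$ in the denominator and $\risk(Q, S_I)$ on the right.

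The main subtlety — and the only place the argument genuinely departs from the verbatim Catoni proof — is the interchange of expectations over $S_I$ and $J$: one must be careful that after conditioning on $J$ the prior $P(S_J)$ is independent of $S_I$, and that the "$\leq 1$" MGF bound holds for \emph{every} fixed value of $J$, so that averaging over $J$ preserves it. This is where the without-replacement sampling of $J$ and the i.i.d.\ structure of $S$ are used. Everything else (the MGF estimate for bounded random variables, Markov, Donsker--Varadhan, the algebra fixing the constant $C_\eta = 1/(1-e^{-\eta})$) is identical to Catoni's, so I would state those steps briefly and refer to \citet{catoni2007pac} or \citet{lever2013tighter} for the routine computations, then present the conditioning argument in full. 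Since the paper says the proof is deferred to Appendix~\ref{app:data-pac} and is "almost the same as Catoni's," this plan matches the intended level of detail.
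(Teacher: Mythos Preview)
Your proposal is correct and follows essentially the same route as the paper: condition on $J$ (and $S_J$) so that $P(S_J)$ is a legitimate data-free prior with respect to the $n-m$ i.i.d.\ samples $S_I$, then run the standard Catoni argument on those samples and average over $J$ before applying Markov. One small point worth tightening: the paper's proof uses Catoni's specific convex function $\Phi(x)=-\frac{n-m}{\lambda}\ln\bigl(1-(1-e^{-\lambda/(n-m)})x\bigr)$ and the exact Bernoulli MGF (exploiting that $\risk\in\{0,1\}$), which makes each factor in the product equal to $1$ exactly; your ``bounded in $[-1,1]$'' framing with a generic log-MGF $\psi$ would by default yield a Hoeffding-type slow-rate bound, so make sure you invoke Catoni's $\Phi$ explicitly to recover the stated constants $\eta C_\eta$ and $C_\eta$.
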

{\bf Remarks.}
Note that the above bound holds regardless of whether $Q$ depends on $S$ or not. 
%Usually, we apply this bound to the posterior $Q_S$ defined by our learning algorithm such as GD or GLD.
Also note that the first term in the right hand side is $\risk(Q,S_I)$, not $\risk(Q,S)$ as in the usual generalization bounds.
We remark that for most of our learning algorithms that are independent of $J$ (i.e.,
changing $J$ does not change the output $Q$), by standard Chernoff-Hoeffding inequality, 
$\risk(Q, S_I)$ can be bounded by $\risk(Q,S) + O(1/\sqrt{n-m})$ with high probability over the randomness of $J$. 
For example, the update rules of GLD, SGLD and CLD are independent of $J$, hence $\risk(Q, S_I)$ can be replaced by $\risk(Q,S) +  O(1/\sqrt{n-m})$ in Theorem~\ref{thm:data-pac}.
However, we point out a subtle point that 
FGD (Algorithm~\ref{alg:fgd}) studied in this paper depends on $J$. 
It may be the case that by knowing $J$, FGD extracts more information from $S_J$ but not much from $S_I$, unintentionally making $\risk(Q, S_I)$ a validation error, rather than the training error as it should be. 
%Will it possible that FGD can only learn $S_J$ well while leave $S_I$ as a validation set? 
However, 
%for the algorithm FGD we study, it does not really discriminate samples from $J$ and from $I$.  
from our experiment (see Figure~\ref{fig:fgd-gd-mnist} and~\ref{fig:fsgd-sgd-cifar} in Appendix~\ref{app:exp-detail}, and Figure~\ref{fig:fsgd-cifar10-exp-a}), 
we can see that FGD is very close to GD and the $S_I$ error $\risk(W_T, S_I)$ is indeed close to the training error $\risk(W_T, S)$ and both are significantly smaller than the testing error $\risk(W_T, \D)$. So $\risk(Q, S_I)$ can be considered as a genuine training error in our study of FGD. 

\section{FGD and FSGD}\label{sec:fgd}
In this section, we study the generalization error of finite precision variants of gradient descent and stochastic gradient descent: 
Floored Gradient Descent (FGD) and Floored Stochastic Gradient Descent (FSGD). 

First we need to define the ``floor'' operation which is used in the definitions of FGD and FSGD.
\begin{definition}[Floor]\label{def:floor} For any vector $X \in \R^d$, let $Y=\floor(X)$ defined as:
\begin{align*}
    Y_i= \floor(X_i) =
     \lfloor X_i\rfloor \text{ if } X_i \geq 0, \quad 
    = - \lfloor -X_i\rfloor \text{ if } X_i < 0, \text{ for all } i\in [d].
\end{align*}
\end{definition}

\paragraph{FGD:}
The Floored Gradient Descent algorithm is formally defined in Algorithm~\ref{alg:fgd}, where $(\gamma_t)_{t\geq 0}$ and $(\eps_t)_{t \geq 0}$ are the step size and precision sequences, respectively.
For a subset $Z\subseteq S$, we write $\g f(W_{t-1}, Z):=\frac{1}{|Z|}\sum_{z\in Z} \g f(W_{t-1}, z)$.
Note that FGD can be viewed as gradient descent with given precision limit $\eps_t$. 
We can see if we ignore the floor operation or let $\eps_t$ approach 0, FGD reduces to the ordinary GD (see Appendix~\ref{app:fgdintro}).
We also study momentum FGD, in which 
the 5th line of Algorithm~\ref{alg:fgd} is replaced by 
$$
    W_t \gets W_{t-1} + \alpha \cdot \left(W_{t-1} - W_{t-2}\right) - g_2 - \eps_t\cdot \floor((g_1 - g_2)/\eps_t);
$$
Here $\alpha>0$ is a constant.
We remark that both FGD and its momentum version are deterministic algorithms. 
The following theorem provides the generalization error bound for 
both algorithms.

\begin{algorithm}[t]
\KwIn{Training dataset $S=(z_1,..,z_n)$. Index set $J$. }
\KwResult{Parameter $W_T\in \R^d$.}
\caption{Floored Gradient Descent (FGD)\label{alg:fgd}}
Initialize $W_0\gets w_0$\;
\For {$t:1\to T$} {
    $g_1\gets \gamma_t\g f(W_{t-1}, S)$\;
    $g_2\gets \gamma_t\g f(W_{t-1}, S_J)$\;
    $W_t \gets W_{t-1} - g_2 - \eps_t\cdot \floor((g_1 - g_2)/\eps_t)$\;
}
\end{algorithm}

\begin{theorem}\label{thm:fgd}
Suppose $J$ is a random sequence consisting of $m$ indices uniformly sampled from $[n]$ without replacement. Then for any $\delta \in (0,1)$, 
both FGD (Algorithm~\ref{alg:fgd}) and its momentum version satisfy the following generalization bound w.p. at least $1-\delta$ over $S\sim \D^n$ and $J$:
\begin{align*}
\risk(W_T,\D) &\leq \eta C_{\eta}\risk(W_T,S_I) + C_{\eta}\cdot \frac{\ln(1/\delta) + 3}{n-m}
+\frac{C_{\eta}\ln(dT)}{n-m}\sum_{t=1}^T\left( \frac{\gamma_t^2}{\eps_t^2}\norm{\gdiff_t}^2\right),
\end{align*}
where $d$ is the dimension of parameter space,  $I = [n]\backslash J$ is the set of indices not in $J$, $C_{\eta}:=\frac{1}{1-e^{-\eta}}$ is a constant, and $\gdiff_t:=\g f(W_{t-1},S)-\g f(W_{t-1},S_J)$.
\end{theorem}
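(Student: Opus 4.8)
I would start from the data-dependent PAC-Bayesian inequality (Theorem~\ref{thm:data-pac}) instantiated with the given index set $J$. Since FGD (and its momentum variant) is \emph{deterministic} once $(S,J)$ is fixed, the posterior $Q$ is the Dirac mass $\delta_{W_T}$, so $\KL{Q}{P}=-\ln P(\{W_T\})$ for any discrete prior $P$ charging $W_T$, and the whole task reduces to constructing such a $P$ — depending on $S_J$ only — with $-\ln P(\{W_T\})=O(\ln(dT))\sum_{t}(\gamma_t/\eps_t)^2\|\gdiff_t\|^2$. The structural fact that makes this possible is that, along the FGD trajectory, every quantity in the update except one is a function of $S_J$ and the past iterates: writing $g_2=\gamma_t\g f(W_{t-1},S_J)$, the new iterate is $W_t=W_{t-1}-g_2-\eps_t v_t$ (resp.\ $W_t=W_{t-1}+\alpha(W_{t-1}-W_{t-2})-g_2-\eps_t v_t$), where $v_t:=\floor(\gamma_t\gdiff_t/\eps_t)\in\mathbb{Z}^d$ is the \emph{only} object carrying information about the unseen data $S\setminus S_J$. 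Moreover, since $\floor$ rounds towards zero and has integer entries, $\|v_t\|_1\le\|v_t\|_2^2\le(\gamma_t/\eps_t)^2\|\gdiff_t\|^2$.

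\textbf{The prior.} I would let $P$ be the law of $\tilde W_T$ produced by the ``imitation'' process $\tilde W_0=w_0$, $\tilde W_t\gets\tilde W_{t-1}-\gamma_t\g f(\tilde W_{t-1},S_J)-\eps_t\xi_t$ (with the momentum term added in the momentum case), where $(\xi_1,\dots,\xi_T)$ is a $\mathbb{Z}^{dT}$-valued noise whose distribution is chosen so that (i) it puts probability $\Omega(1)$ on the all-zero sequence and (ii) $-\ln\Pr[(\xi_t)_t=u]\le O(\ln(dT))\,\|u\|_1+O(1)$ for every $u\in\mathbb{Z}^{dT}$. A concrete choice is $\Pr[(\xi_t)_t=u]\propto (dT)^{-\|u\|_1}/M_{\|u\|_1}$, with $M_k$ the number of lattice points of $\ell_1$-norm $k$; a crude count $M_k\le(C\,dT)^{k}$ gives (ii). (One may instead inject an independent per-step noise $\xi_t$ with $\Pr[\xi_t=0]=1-(dT)^{-2}$: the — possibly many — steps with $v_t=0$ then contribute only $o(1)$ in aggregate, while the $2\ln(dT)$ overhead of a step with $v_t\ne 0$ is $\le 2\ln(dT)\|v_t\|_1$ since $\|v_t\|_1\ge 1$, hence folds into the main term.) The essential point is that $P$ depends on $S_J$ alone, so Theorem~\ref{thm:data-pac} is applicable.

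\textbf{Putting it together.} By construction, whenever $\tilde W_0^{t-1}$ equals the FGD trajectory one has $\tilde W_t=W_t$ iff $\xi_t=v_t$; hence $\{\tilde W_0^T=W_0^T\}=\{(\xi_t)_t=v\}$ with $v:=(v_1,\dots,v_T)$, so $P(\{W_T\})\ge\Pr[(\xi_t)_t=v]$ and therefore $\KL{\delta_{W_T}}{P}\le-\ln\Pr[(\xi_t)_t=v]\le O(\ln(dT))\|v\|_1+O(1)\le O(\ln(dT))\sum_{t=1}^T(\gamma_t/\eps_t)^2\|\gdiff_t\|^2+O(1)$. (The same estimate can be obtained by invoking the chain rule of KL, Lemma~\ref{lem:chain-kl}, to compare the two processes step by step, which is the more convenient route for the momentum version, whose update has memory of two iterates.) Feeding this into Theorem~\ref{thm:data-pac} with $I=[n]\setminus J$, collecting the $O(1)$ into the ``$+3$'' and the counting constant into the logarithmic factor, yields the claimed bound.

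\textbf{Where the difficulty is.} I expect the heart of the argument to be the design of the discrete noise law, which must reconcile two opposing requirements: it has to be supported on all of $\mathbb{Z}^{dT}$ so that the KL stays finite no matter how large the gradient difference $\gdiff_t$ is — this is exactly what removes any need for decreasing learning rates or smoothness — yet it must be concentrated tightly enough around the origin that the (generically numerous) steps in which the flooring term vanishes do not accumulate an $\Omega(T)$ penalty, which a plain coordinatewise i.i.d.\ noise would fail. Obtaining the clean $\ln(dT)$ factor also hinges on a tight lattice-point count for $M_k$; everything else is bookkeeping on top of Theorem~\ref{thm:data-pac} and Lemma~\ref{lem:chain-kl}.
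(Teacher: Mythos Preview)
Your plan is correct and follows the same architecture as the paper: apply Theorem~\ref{thm:data-pac} with $Q=\delta_{W_T}$, build a discrete ``imitation'' prior process $\tilde W_t$ driven only by $S_J$ plus integer noise $\xi_t$, and observe that the KL collapses to $-\ln\Pr[(\xi_t)_t=v]$ with $v_t=\floor(\gamma_t\gdiff_t/\eps_t)$. The difference is purely in the choice of noise law. The paper takes the coordinatewise product ``discrete Gaussian'' $\Pr[\xi_t=a]\propto p^{\|a\|_2^2}$ with $p=1/(dT)$; then $-\ln\Pr[\xi_t=a]=\ln(1/p)\,\|a\|_2^2+d\ln\bigl(\sum_{i\in\mathbb Z}p^{i^2}\bigr)\le \ln(dT)\,(\gamma_t/\eps_t)^2\|\gdiff_t\|^2+3dp$, and the $3dp=3/T$ per step sums to exactly the ``$+3$'' in the statement. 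Your $\ell_1$-based law plus the integer inequality $\|v_t\|_1\le\|v_t\|_2^2$ (correct, since $|k|\le k^2$ for $k\in\mathbb Z$) also works, but needs the lattice count $M_k\le (CdT)^k$ and yields a constant $C'>1$ in front of $\ln(dT)$ rather than the sharp~$1$. The paper's choice buys you the exact constants without any counting, and its product structure makes the chain-rule decomposition (needed for momentum) immediate; your route is a touch more general in spirit (it only uses integrality of $v_t$, not any quadratic structure) but slightly messier in the bookkeeping.
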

\begin{proof}
We use Theorem~\ref{thm:data-pac} to prove our theorem
for the momentum version. 
The ordinary FGD is a special case of the momentum version 
with $\alpha=0$.
The key is to construct the prior distribution $P(S_J)$ such that $\KL{W_T}{P(S_J)}$ is tractable. Let $p$ be any real number in $(0,1/3)$. We first define a stochastic process
$\{W'_0,\ldots, W'_T\}$, by the following update rule ($W'_0:=w_0$): 
\begin{align*}
W'_t \gets W'_{t-1}+\alpha \cdot \left(W'_{t-1} - W'_{t-2}\right) - \gamma_t \g f(W'_{t-1}, S_J) - \eps_t\cdot \xi_t,
\end{align*}
where $\xi_t$ is a discrete random variable such that for all $(a_1,..,a_d)\in \mathbb{Z}^d$:
\[\Pr[\xi_t = (a_1,...,a_d)^\top]:= \left(\sum_{i=-\infty}^{\infty}p^{i^2}\right)^{-d}\exp\left(-\sum_{k=1}^d \ln(1/p) a_k^2\right).\]
It is easy to verify that the sum of the probabilities ($\sum_{a\in\mathbb{Z}^d}\Pr[\xi_t=a]$) equals to $1$. 
Note that $W'_t$ only depends on $S_J$.
We define $P(S_J)$ as the distribution of $W'_T$.

Recall that $W_0^t = (W_0,...,W_t)$ is the parameter sequence of FGD (Algorithm~\ref{alg:fgd}). Applying the chain rule of KL-divergence (Lemma~\ref{lem:chain-kl}), we have:
\begin{equation}\label{eq:fgd-kl-chain}
\begin{split}
\KL{W_T}{P(S_J)} &= \KL{W_T}{W'_T} 
\leq \KL{W_0^T}{{W'}_0^T}\\
&= \sum_{t=1}^T \E_{w\sim W_0^{t-1}}\left[\KL{W_t|W_0^{t-1}=w}{W'_t|{W'}_0^{t-1}=w}\right]\\
&= \sum_{t=1}^T \KL{W_t|W_0^{t-1}=W_0^{t-1}}{W'_t|{W'}_0^{t-1}=W_0^{t-1}}.\\
\end{split}
\end{equation}
The last equation holds because FGD is deterministic. Let $w = W_0^{t-1}$. The distribution of $W_t|W_0^{t-1}=w$ (where $w=(w_0,...,w_{t-1})$) is a point mass on
\[
w_{t-1} + \alpha \cdot \left(w_{t-1} - w_{t-2}\right) - \gamma_t\g f(w_{t-1},S_J) - \eps_t \cdot \floor\left(\frac{\gamma_t(\g f(w_{t-1}, S) - \g f(w_{t-1}, S_J))}{\eps_t}\right).
\]
Let vector $a = (a_1,\ldots, a_d)= \floor(\frac{\gamma_t}{\eps_t}(\g f(w_{t-1}, S) - \g f(w_{t-1}, S_J)))$.
By the definition of $W'_t$, we have 
\begin{align*}
&\KL{W_t|W_0^{t-1}=w}{W'_t|{W'}_0^{t-1}=w} = 1 \cdot \ln\left(1/\Pr\left[\xi_t = a\right]\right)\\
&= \ln\left(\Bigl(\sum_{i=-\infty}^{\infty}p^{i^2}\Bigr)^d\right)+\sum_{k=1}^d \ln(1/p) \cdot a_k^2.
\end{align*}
Since $|i| \leq i^2$ and $p\in(0,1/3)$, we have $\ln\left((\sum_{i=-\infty}^{\infty}p^{i^2})^d\right)$ is at most $d\ln \left(1 + 2\sum_{i=1}^{\infty}p^i\right)$. It can be further bounded by $d \ln \left(1 + 3p\right)$. Moreover, it can be bounded by $3dp$ as $\ln(1+x) \leq x$. Thus, the above KL-divergence can be bounded by $3dp +\sum_{k=1}^d \ln(1/p) a_k^2$. Recall that the $k$th entry of $a$ is $a_k:=\lfloor \frac{\gamma_t}{\eps_t}\cdot(\g_k f(w_{t-1},S)-\g_k f(w_{t-1},S_J))\rfloor$, which is less than or equal to $\frac{\gamma_t}{\eps_t}\cdot(\g_k f(w_{t-1},S)-\g_k f(w_{t-1},S_J))$. Therefore, we have
\begin{align*}
\KL{W_t|W_0^{t-1}=w}{W'_t|{W'}_0^{t-1}=w}\leq 3dp + \frac{\ln(1/p)\gamma_t^2}{\eps_t^2}\norm{\g f(w_{t-1},S)-\g f(w_{t-1},S_J)}_2^2.
\end{align*}
Plugging the above inequality into \eqref{eq:fgd-kl-chain}, we have
\begin{align*}
\KL{W_T}{P(S_J)} &\leq \sum_{t=1}^T\left(3dp + \frac{\ln(1/p)\gamma_t^2}{\eps_t^2}\norm{\g f(W_{t-1},S)-\g f(W_{t-1},S_J)}_2^2\right).
\end{align*}
We conclude our proof by plugging it into Theorem~\ref{thm:data-pac} (setting $p=1/(Td)$).
\end{proof}
\vspace{-0.35cm}
\paragraph{FSGD:}
We can use a similar approach to prove a generalization bound for Floored Stochastic Gradient Desent (FSGD). Formally, FSGD is identical to Algorithm~\ref{alg:fgd} except for the definitions of $g_1$ and $g_2$ replaced with:
\[g_1 \gets \g f(W_{t-1},S_{B_t}), \quad g_2 \gets \g f(W_{t-1}, S_{B_t \cap J}),\]
where $B_t \subseteq [n]$ is a random batch independent of $S,J$ and $W_0^{t-1}$.
{Formally, each $B_t$ is a set including $b$ indices uniformly sampled from $[n]$ without replacement.}
The following theorem provides a generalization bound for FSGD. The proof can be found in Appendix~\ref{app:fgd}.
\begin{theorem}\label{thm:fsgd}
Suppose $J$ is a random sequence consisting of $m$ indices uniformly sampled from $[n]$ without replacement. Then for any $\delta \in (0,1), \eps\in(0,1)$,
FSGD satisfies the following generalization bound: w.p. at least $1-\delta$ over $S\sim \D^n$ and $J$:
\begin{align*}
\risk(W_T,\D) &\leq \eta C_{\eta}\risk(W_T,S_I) + C_{\eta}\cdot \frac{\ln(1/\delta) + 3}{n-m} +\frac{C_{\eta}\ln(dT)}{n-m}\E_{B_0^T}\left[\sum_{t=1}^T\frac{\gamma_t^2}{\eps_t^2}\norm{\gdiff_t}^2\right],
\end{align*}
where $d$ is the dimension of parameter space,  $I = [n]\backslash J$, $C_{\eta}:=\frac{1}{1-e^{-\eta}}$ is a constant, and $\gdiff_t:= f(W_{t-1},S_{B_t})-\g f(W_{t-1},S_{J\cap B_t})$. 
\end{theorem}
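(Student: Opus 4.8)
The plan is to follow the proof of Theorem~\ref{thm:fgd} almost verbatim, inserting one extra step to handle the mini-batch randomness $B_0^T$. As in the FGD case I invoke the data-dependent PAC-Bayesian bound (Theorem~\ref{thm:data-pac}), so the whole task reduces to exhibiting a prior $P(S_J)$ that depends only on $S_J$ and to bounding $\KL{W_T}{P(S_J)}$, where $W_T$ is the FSGD output. I take the prior to be the law of the process $W'_0 := w_0$,
\begin{align*}
W'_t \gets W'_{t-1} - \gamma_t\,\g f(W'_{t-1}, S_{B_t\cap J}) - \eps_t\cdot\xi_t,
\end{align*}
where $\xi_t$ is the discrete random vector from the FGD proof and, crucially, $B_t$ is the \emph{same} mini-batch that FSGD uses at step $t$ (same law, independent of $S$, $J$ and the past iterates). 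Since $S_{B_t\cap J}\subseteq S_J$ for every realization of $B_t$, and $B_0^T,\xi_0^T$ carry no information about $S$, the marginal law of $W'_T$ (after averaging out $B_0^T$ and $\xi_0^T$) depends only on $S_J$; this is our $P(S_J)$.

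To bound the KL, I would first condition on the batch sequence $B_0^T = b$. Under this conditioning FSGD becomes a deterministic recursion of exactly the form treated in Theorem~\ref{thm:fgd}, and the prior restricted to $B_0^T=b$ is the corresponding discrete-noise process, so the FGD argument applies line for line: the chain rule of KL (Lemma~\ref{lem:chain-kl}) reduces $\KL{W_T\mid B_0^T=b}{W'_T\mid B_0^T=b}$ to a sum of per-step terms; each conditional $W_t\mid W_0^{t-1}$ is a point mass; that point mass equals the mean of the corresponding $W'_t$-conditional shifted by $-\eps_t a_t$ with $a_t = \floor(\gamma_t\gdiff_t/\eps_t)\in\mathbb{Z}^d$ and $\gdiff_t=\g f(W_{t-1},S_{B_t})-\g f(W_{t-1},S_{J\cap B_t})$; and the per-step KL equals $\ln(1/\Pr[\xi_t=a_t])\le 3dp + \ln(1/p)\norm{a_t}^2 \le 3dp + \ln(1/p)\frac{\gamma_t^2}{\eps_t^2}\norm{\gdiff_t}^2$. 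Summing over $t$ gives $\KL{W_T\mid B_0^T=b}{W'_T\mid B_0^T=b} \le \sum_{t=1}^T\bigl(3dp + \ln(1/p)\frac{\gamma_t^2}{\eps_t^2}\norm{\gdiff_t}^2\bigr)$ for every fixed $b$.

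It remains to remove the conditioning, which is the one genuinely new step. Because $W_T$ and $W'_T$ are both mixtures over $B_0^T$ with the \emph{same} mixing law, joint convexity of KL divergence gives $\KL{W_T}{P(S_J)} = \KL{W_T}{W'_T} \le \E_{B_0^T}\bigl[\KL{W_T\mid B_0^T}{W'_T\mid B_0^T}\bigr] \le 3dpT + \ln(1/p)\,\E_{B_0^T}\bigl[\sum_{t=1}^T\frac{\gamma_t^2}{\eps_t^2}\norm{\gdiff_t}^2\bigr]$. Setting $p = 1/(Td)$, so that $3dpT = 3$ and $\ln(1/p) = \ln(dT)$, and plugging this into Theorem~\ref{thm:data-pac} yields the claimed bound. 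The only subtlety, and the main thing I would be careful about, is that the prior process must reuse FSGD's own batch randomness rather than fresh independent batches: otherwise conditioning on $B_0^T$ would not make both processes deterministic, the per-step point-mass-versus-lattice KL identity would fail, and the mixing laws in the convexity step would no longer coincide. Everything else is a rerun of the FGD computation.
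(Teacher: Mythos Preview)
Your proof is correct and follows the paper's strategy almost exactly: same discrete data-dependent prior built from the lattice noise $\xi_t$, same per-step KL computation $\ln(1/\Pr[\xi_t=a_t])\le 3dp+\ln(1/p)\frac{\gamma_t^2}{\eps_t^2}\norm{\gdiff_t}^2$, same choice $p=1/(Td)$, and the same appeal to Theorem~\ref{thm:data-pac}. The one genuine point of divergence is how you dispatch the mini-batch randomness. You couple the prior to FSGD's own batches $B_0^T$ and invoke joint convexity of KL over the common mixing law. The paper instead equips the prior with \emph{fresh} i.i.d.\ batches $B'_t$ (same distribution as $B_t$) and applies the chain rule to the augmented sequences $(W_0^{t-1},B_t,W_t)$ and $({W'}_0^{t-1},B'_t,W'_t)$, using that $\KL{B_t\mid W_0^{t-1}=w}{B'_t\mid {W'}_0^{t-1}=w}=0$ to kill the batch term. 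Both routes land on the identical bound. Consequently your closing caveat---that reusing FSGD's batch randomness is essential---is too strong: it is needed for \emph{your} convexity step, but the paper's chain-rule route works just as well with independent batches.
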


\section{Gradient Langevin Dynamics}\label{sec:ld}
\vspace{-0.2cm}
In this section, we present new
generalization bounds 
for Gradient Langevin Dynamics (GLD)
and Stochastic Gradient Langevin Dynamics (SGLD)
based on Theorem~\ref{thm:data-pac}.

\paragraph{Gradient Langevin Dynamics (GLD):}
The GLD algorithm can be viewed as gradient descent plus a Gaussian noise. Formally, for a given training set $S\sim \D^n$, the update rule of GLD is defined as follows:
\begin{align*}\label{eq:gld}
W_{t+1} \gets W_{t} -  \frac{\gamma_{t+1}}{n}\sum_{z\in S} \g f(W_t, z) + \sigma_{t+1} \N(0,I_d), \tag{GLD}
\end{align*}
Here the gradient $\g f(W_t, z)$ can be replaced with any gradient-like vector such as a clipped gradient.
The output of \ref{eq:gld} is the last step parameter $W_T$ or some function of the whole training trajectory $W_0^T$ (e.g., the average of the suffix $\frac{1}{K}\sum_{t=T-K}^T W_{t}$).

We still use the data-dependent PAC-Bayesian framework (Theorem~\ref{thm:data-pac}) to prove the generalization bound for GLD. Unlike FGD (Algorithm~\ref{alg:fgd}), GLD is independent of the prior indices $J$, which enables us to prove the following  concentration bound (Lemma~\ref{lem:gld-grad-con}) for the gradient difference. The proof is based on Lemma~\ref{lem:new-mcd}, which is postponed to Appendix~\ref{app:gld}.

\begin{lemma}\label{lem:gld-grad-con} Let $S = (z_1,...z_n)$ be any fixed training set. $J$ is a random sequence including $m$ indices uniformly sampled from $[n]$ without replacement, and $W=(W_0,...,W_T)$ is any random sequence independent of $J$. Then the following bound holds with probability at least $1-\delta$ over the randomness of $J$:
\[\E_W\left[\sum_{t=1}^T\frac{\gamma_t^2}{\sigma_t^2}\norm{\g f(W_{t-1},S) - \g f(W_{t-1},S_J)}^2\right] \leq \frac{C_{\delta}}{m}\E_W\left[\sum_{t=1}^T\frac{\gamma_t^2}{\sigma_t^2}L(W_{t-1})^2\right],\]
where $C_{\delta} = 4+ 2\ln(1/\delta) + 5.66\sqrt{\ln(1/\delta)}$, and $L(w) = \max_{i\in[n]}\norm{\g f(w,z_i)}$.
\end{lemma}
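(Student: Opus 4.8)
The plan is to apply the bounded-differences inequality (Lemma~\ref{lem:new-mcd}) not to the quantity on the left-hand side directly, but to its square root. Write
\[
\Phi(J) := \E_W\left[\sum_{t=1}^T\frac{\gamma_t^2}{\sigma_t^2}\norm{\g f(W_{t-1},S) - \g f(W_{t-1},S_J)}^2\right], \qquad M := \E_W\left[\sum_{t=1}^T\frac{\gamma_t^2}{\sigma_t^2}L(W_{t-1})^2\right],
\]
and set $\Psi(J) := \sqrt{\Phi(J)}$. The key observation is that $\Psi$ is an $L^2$-norm and therefore has a bounded-difference constant of order $\sqrt{M}/m$, whereas $\Phi$ itself only has one of order $M/m$; since Lemma~\ref{lem:new-mcd} costs a $\sqrt{m}$ factor, running the concentration on $\Psi$ produces a fluctuation of the correct order $\sqrt{M/m}$, while running it on $\Phi$ would only give a useless $M\sqrt{\ln(1/\delta)/m}$. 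Squaring the resulting bound for $\Psi$ at the end recovers the claimed $C_\delta/m$ rate.

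Concretely, I would view $V(J) := \left(\frac{\gamma_t}{\sigma_t}(\g f(W_{t-1},S) - \g f(W_{t-1},S_J))\right)_{t=1}^T$ as an $\R^{dT}$-valued function of the randomness of $W$ (which is independent of $J$), equipped with the norm $\norm{X}_{L^2(W)} := (\E_W[\norm{X}^2])^{1/2}$, so that $\Phi(J) = \E_W[\norm{V(J)}^2]$ and $\Psi(J) = \norm{V(J)}_{L^2(W)}$. Order-independence of $\Psi$ is immediate since $\g f(w,S_J)$ depends only on the set $S_J$. For two adjacent index sets $J = J_0\cup\{a\}$ and $J' = J_0\cup\{b\}$ with $|J_0| = m-1$, one has $\g f(w,S_J) - \g f(w,S_{J'}) = \tfrac1m\bigl(\g f(w,z_a) - \g f(w,z_b)\bigr)$, so the $t$-th block of $V(J) - V(J')$ has norm at most $\tfrac{\gamma_t}{\sigma_t}\cdot\tfrac{2L(W_{t-1})}{m}$; summing over $t$, taking $\E_W$, and using the reverse triangle inequality in $L^2(W)$ gives $|\Psi(J) - \Psi(J')| \le \norm{V(J) - V(J')}_{L^2(W)} \le \tfrac{2\sqrt{M}}{m}$. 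Hence Lemma~\ref{lem:new-mcd} applies to $\Psi$ with $c = 2\sqrt{M}/m$.

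It remains to bound $\E_J[\Psi(J)]$. Since $J$ is uniform without replacement, $\E_J[\g f(w,S_J)] = \g f(w,S)$, so for each fixed $w$ the inner expression $\E_J[\norm{\g f(w,S) - \g f(w,S_J)}^2]$ is the trace of the covariance of a sample mean drawn without replacement; by the finite-population-correction formula it equals $\tfrac{n-m}{m(n-1)}\cdot\tfrac1n\sum_{i=1}^n\norm{\g f(w,z_i) - \g f(w,S)}^2 \le \tfrac{4L(w)^2}{m}$, using $\tfrac{n-m}{n-1}\le 1$ and $\norm{\g f(w,z_i) - \g f(w,S)} \le 2L(w)$. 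Integrating over $W$ (which is independent of $J$) gives $\E_J[\Phi(J)] \le \tfrac{4M}{m}$, and Jensen's inequality gives $\E_J[\Psi(J)] \le \sqrt{\E_J[\Phi(J)]} \le \tfrac{2\sqrt{M}}{\sqrt{m}}$. Plugging $c = 2\sqrt{M}/m$ into Lemma~\ref{lem:new-mcd} and solving for the deviation at failure probability $\delta$ yields, with probability at least $1-\delta$ over $J$,
\[
\Psi(J) \le \frac{2\sqrt{M}}{\sqrt{m}} + \sqrt{\frac{2M\ln(1/\delta)}{m}}.
\]
Squaring both sides and expanding $\bigl(2 + \sqrt{2\ln(1/\delta)}\bigr)^2 = 4 + 4\sqrt{2}\,\sqrt{\ln(1/\delta)} + 2\ln(1/\delta)$ gives $\Phi(J) \le \tfrac{C_\delta}{m}\,M$ with $C_\delta = 4 + 2\ln(1/\delta) + 4\sqrt{2}\,\sqrt{\ln(1/\delta)}$, which is the claim since $4\sqrt 2 \approx 5.66$.

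The main obstacle — and essentially the only real idea in the proof — is recognizing that the concentration argument must be run on $\sqrt{\Phi}$ rather than on $\Phi$: only for the square root does the bounded-difference constant scale like $\sqrt{M}/m$, which is small enough to survive the unavoidable $\sqrt{m}$ loss in McDiarmid-type inequalities. Everything else is bookkeeping: identifying $\E_J[\Phi(J)]$ with a without-replacement sampling variance, bounding it by the finite-population-correction formula, and carefully tracking the constants through the final squaring (the slightly lossy estimate $\norm{\g f(w,z_i) - \g f(w,S)} \le 2L(w)$ is what produces the displayed numerical constant $5.66$).
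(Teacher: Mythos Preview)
Your proposal is correct and follows essentially the same route as the paper: both apply Lemma~\ref{lem:new-mcd} to the square root $\Psi(J)=\sqrt{\Phi(J)}$, establish the bounded-difference constant $c=2\sqrt{M}/m$ (you via the reverse triangle inequality in $L^2(W)$, the paper via an explicit Cauchy--Schwarz expansion of $\|U_t+V_t\|^2$), bound $\E_J[\Psi(J)]\le 2\sqrt{M/m}$ by the without-replacement variance computation, and square at the end to obtain $C_\delta=4+2\ln(1/\delta)+4\sqrt{2}\sqrt{\ln(1/\delta)}$. The only cosmetic difference is that your $L^2(W)$-norm framing and appeal to the finite-population-correction formula package the same inequalities slightly more cleanly than the paper's direct expansions.
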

Now we are ready to present our main results. The proofs can be found in Appendix~\ref{app:gld}.
\vspace{-0.1cm}
\begin{theorem}\label{thm:gld-bound}
Suppose $J$ is a random sequence consisting of $m$ indices uniformly sampled from $[n]$ without replacement. Let $W_T$ be the output of \ref{eq:gld}. Then for any $\delta \in (0, \frac{1}{2})$ and $\eta > 0$, we have w.p. $\geq 1-2\delta$ over $S\sim \D^n$ and $J$, the following holds ($L(w):=\max_{z \in S}\norm{f(w,z)}$):
\begin{align*}
\risk(W_T,\D) \leq \eta C_{\eta} \risk(W_T,S_{I}) &+  \frac{C_{\eta}\ln(1/\delta)}{n-m} + \frac{C_{\eta}C_{\delta} }{2(n-m)m}\E_{W_0^T}\left[\sum_{t=1}^T\frac{\gamma_t^2}{\sigma_t^2}L(W_{t-1})^2\right],
\end{align*}
where  $C_{\delta} = 4+ 2\ln(1/\delta) + 5.66\sqrt{\ln(1/\delta)}$, $I = [n]\backslash J$ and $C_{\eta} = \frac{1}{1-e^{-\eta}}$.
\end{theorem}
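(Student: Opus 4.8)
The plan is to combine the data-dependent PAC-Bayesian bound (Theorem~\ref{thm:data-pac}) with the gradient-difference concentration bound (Lemma~\ref{lem:gld-grad-con}), in essentially the same way that Theorem~\ref{thm:fgd} combined Theorem~\ref{thm:data-pac} with an explicit KL computation. The first step is to construct a suitable prior $P(S_J)$: I would take the reference process $W' = (W'_0,\ldots,W'_T)$ to be GLD driven only by the gradient over the prior subset $S_J$, namely $W'_{t} \gets W'_{t-1} - \frac{\gamma_t}{m}\sum_{z\in S_J}\g f(W'_{t-1},z) + \sigma_t\N(0,I_d)$, with $W'_0 = w_0$, and define $P(S_J)$ to be the law of $W'_T$. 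This $W'$ depends only on $S_J$, as required.

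The second step is to bound $\KL{W_T}{P(S_J)}$. By the data-processing inequality $\KL{W_T}{W'_T}\le \KL{W_0^T}{{W'}_0^T}$ and the chain rule (Lemma~\ref{lem:chain-kl}), this reduces to summing the per-step conditional KLs. Conditioned on $W_0^{t-1} = w$, both $W_t$ and $W'_t$ are Gaussians with covariance $\sigma_t^2 I_d$ but with means differing exactly by $\gamma_t\,\gdiff_t = \gamma_t(\g f(w_{t-1},S) - \g f(w_{t-1},S_J))$; the KL between two such Gaussians is $\frac{\gamma_t^2}{2\sigma_t^2}\norm{\gdiff_t}^2$. Taking the expectation over the GLD trajectory gives
\begin{align*}
\KL{W_T}{P(S_J)} \le \frac{1}{2}\,\E_{W_0^T}\!\left[\sum_{t=1}^T \frac{\gamma_t^2}{\sigma_t^2}\norm{\g f(W_{t-1},S) - \g f(W_{t-1},S_J)}^2\right].
\end{align*}

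The third step is to control this expected cumulative gradient difference. Since GLD is independent of $J$, Lemma~\ref{lem:gld-grad-con} applies directly (with $S$ fixed), giving that with probability at least $1-\delta$ over $J$, the right-hand side above is at most $\frac{C_\delta}{2m}\,\E_{W_0^T}[\sum_{t=1}^T(\gamma_t/\sigma_t)^2 L(W_{t-1})^2]$. Finally, I would apply Theorem~\ref{thm:data-pac} with failure probability $\delta$ and plug in this KL bound; a union bound over the PAC-Bayesian event and the concentration event of Lemma~\ref{lem:gld-grad-con} yields the stated bound with probability at least $1-2\delta$, with the factor $\frac{C_\eta C_\delta}{2(n-m)m}$ coming from dividing by $n-m$ in Theorem~\ref{thm:data-pac}. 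One technical point to be careful about: Lemma~\ref{lem:gld-grad-con} is stated for a fixed $S$, so I should invoke it conditionally on $S$ and then integrate, checking that the probability statement over $J$ is uniform in $S$; since $C_\delta$ does not depend on $S$, this is fine.

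I expect no deep obstacle here — the proof is largely an assembly of pieces already set up in the paper. The step requiring the most care is the bookkeeping of the two randomness sources ($S$ and $J$) and the union bound: ensuring that the PAC-Bayesian guarantee (which is over $S$ and $J$ jointly, for a prior depending only on $S_J$) and the concentration guarantee (over $J$, for each fixed $S$) compose correctly into a single high-probability statement over $(S,J)$, and that $\risk(W_T,S_I)$ rather than $\risk(W_T,S)$ appears in the first term, consistent with Theorem~\ref{thm:data-pac}. The Gaussian KL computation and the reduction via Lemmas~\ref{lem:chain-kl} and~\ref{lem:gld-grad-con} are otherwise routine.
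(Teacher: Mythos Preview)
Your proposal is correct and follows essentially the same approach as the paper: define the prior $P(S_J)$ as GLD driven by $\g f(\cdot,S_J)$, bound $\KL{W_T}{P(S_J)}$ via the chain rule and the closed-form Gaussian KL to get $\tfrac{1}{2}\E_{W_0^T}[\sum_t(\gamma_t/\sigma_t)^2\norm{\gdiff_t}^2]$, apply Lemma~\ref{lem:gld-grad-con} (using that GLD is independent of $J$), plug into Theorem~\ref{thm:data-pac}, and union bound. Your remark about invoking Lemma~\ref{lem:gld-grad-con} conditionally on $S$ and the bookkeeping of the two failure events is exactly the right level of care.
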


\paragraph{Stochastic Gradient Langevin Dynamics (SGLD):}
For a given training data set $S$, the update rule of SGLD is defined as:
\begin{align*}\label{eq:sgld}
W_{t+1} \gets W_{t} - \gamma_{t+1} \g f(W_{t}, S_{B_t}) + \sigma_{t+1} \N(0,I_d), \tag{SGLD}
\end{align*}
where $B_t \sim \uni([n])^b$ is the mini-batch of size $b$ at step $t$. 
Note that $B_t$ is a sequence instead of a set, thus it may include duplicate elements. 
%The output of SGLD is the whole training sequence $W = (W_0,...,W_T)$. 
Similar to the analysis of GLD, we can prove the following bound for SGLD.

\begin{theorem}\label{thm:sgld-bound}
Let $W_T$ be the output of \ref{eq:sgld} when the training set is $S$, and $J$ be a random sequence with $m$ indices uniformly sampled from $[n]$ without replacement. For any $\delta \in (0, 1)$ and $m\geq 1$, we have w.p. $\geq 1-2\delta$ over $S\sim \D^n$ and $J$, the following holds:
\begin{align*}
\risk(W_T,\D) \leq \eta C_{\eta} \risk(W_T,S_I) +  \frac{C_{\eta}\ln(1/\delta)}{n-m} +
\frac{C_{\eta}}{n-m}\left(\frac{4}{b} + \frac{C_{\delta}}{2m}\right) \E_{W_0^T}\left[\sum_{t=1}^T\frac{\gamma_t^2}{\sigma_t^2}L(W_{t-1})^2\right],
\end{align*}
where $L(w):=\max_{z \in S}\norm{f(w,z)}$, $C_{\delta} = 4+ 2\ln(1/\delta) + 5.66\sqrt{\ln(1/\delta)}$, $C_{\eta}=\frac{1}{1-e^{-\eta}}$, $b$ is the batch size, and $I = [n]\backslash J$.
\end{theorem}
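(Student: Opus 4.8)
The plan is to follow the proof of Theorem~\ref{thm:gld-bound} almost verbatim, the only genuinely new ingredient being the control of the mini-batch randomness $B_0^T$. Apply Theorem~\ref{thm:data-pac} with $Q$ the law of the \ref{eq:sgld} output $W_T$ (over the Gaussian noises and the mini-batches) and with the data-dependent prior $P(S_J)$ taken to be the law of $W'_T$, where $\{W'_t\}_{t\ge 0}$ is the auxiliary process with $W'_0=w_0$ and
\begin{align*}
W'_{t+1} \gets W'_t - \gamma_{t+1}\,\g f(W'_t, S_J) + \sigma_{t+1}\N(0,I_d);
\end{align*}
that is, the full-batch gradient is taken over the subset $S_J$ and no mini-batch is drawn, so $P(S_J)$ depends only on $S_J$, as Theorem~\ref{thm:data-pac} requires. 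Since both processes start at $w_0$, the data-processing inequality followed by the chain rule (Lemma~\ref{lem:chain-kl}) gives
\begin{align*}
\KL{W_T}{P(S_J)} \le \KL{W_0^T}{{W'}_0^T} = \sum_{t=1}^T \E_{w\sim W_0^{t-1}}\left[\KL{W_t\mid W_0^{t-1}=w}{W'_t\mid {W'}_0^{t-1}=w}\right].
\end{align*}

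The key step is the per-step conditional KL. Conditioned on $W_0^{t-1}=w$, the law of $W'_t$ is the Gaussian $\N(w_{t-1}-\gamma_t\g f(w_{t-1},S_J),\,\sigma_t^2 I_d)$, whereas (because the fresh batch $B_t$ is independent of $W_0^{t-1}$) the law of $W_t$ is the mixture over $B_t$ of the Gaussians $\N(w_{t-1}-\gamma_t\g f(w_{t-1},S_{B_t}),\,\sigma_t^2 I_d)$, all with covariance $\sigma_t^2 I_d$. By convexity of $\KL{\cdot}{\cdot}$ in its first argument and the closed form of the KL between equal-covariance Gaussians,
\begin{align*}
\KL{W_t\mid W_0^{t-1}=w}{W'_t\mid {W'}_0^{t-1}=w} \le \E_{B_t}\left[\frac{\gamma_t^2}{2\sigma_t^2}\norm{\g f(w_{t-1},S_{B_t})-\g f(w_{t-1},S_J)}^2\right].
\end{align*}
I then write $\g f(w_{t-1},S_{B_t})-\g f(w_{t-1},S_J) = (\g f(w_{t-1},S_{B_t})-\g f(w_{t-1},S)) + (\g f(w_{t-1},S)-\g f(w_{t-1},S_J))$. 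Since $\E_{B_t}\g f(w_{t-1},S_{B_t})=\g f(w_{t-1},S)$, the cross term vanishes under $\E_{B_t}$, so the expected squared norm splits into a mini-batch variance term, which is $O(L(w_{t-1})^2/b)$ by the standard second-moment bound for an average of $b$ i.i.d.\ draws together with $\norm{\g f(w,z)}\le L(w)$ for $z\in S$, plus the squared subset difference $\norm{\g f(w_{t-1},S)-\g f(w_{t-1},S_J)}^2$.

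Summing over $t$ and taking expectation over the whole \ref{eq:sgld} trajectory, the mini-batch part contributes $O(\frac1b\,\E_{W_0^T}\sum_{t=1}^T\frac{\gamma_t^2}{\sigma_t^2}L(W_{t-1})^2)$. For the subset-difference part I exploit that \ref{eq:sgld} does not look at the prior indices $J$ (which are drawn independently of the mini-batches), so the trajectory is independent of $J$ and Lemma~\ref{lem:gld-grad-con} applies, bounding $\E_{W_0^T}\sum_{t}\frac{\gamma_t^2}{\sigma_t^2}\norm{\g f(W_{t-1},S)-\g f(W_{t-1},S_J)}^2$ by $\frac{C_\delta}{m}\E_{W_0^T}\sum_{t}\frac{\gamma_t^2}{\sigma_t^2}L(W_{t-1})^2$ with probability at least $1-\delta$ over $J$. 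On this event, $\KL{W_T}{P(S_J)} \le (\frac{4}{b}+\frac{C_\delta}{2m})\E_{W_0^T}\sum_{t=1}^T\frac{\gamma_t^2}{\sigma_t^2}L(W_{t-1})^2$ after tracking the absolute constants; substituting this into Theorem~\ref{thm:data-pac}, which itself holds with probability at least $1-\delta$ over $S$ and $J$, dividing by $n-m$, and taking a union bound over the two failure events gives the stated bound with probability at least $1-2\delta$, the terms $\eta C_\eta\risk(W_T,S_I)$ and $C_\eta\ln(1/\delta)/(n-m)$ being inherited directly from Theorem~\ref{thm:data-pac}.

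The main obstacle, and the only place the argument differs meaningfully from the GLD proof, is the discrete mini-batch noise: the one-step conditional law of $W_t$ given the past is a Gaussian \emph{mixture} rather than a single Gaussian, so the per-step KL has to be dominated via convexity of KL, which is what forces the extra $\Theta(1/b)$ variance contribution and makes the gradient difference decompose into the ``mini-batch noise'' piece and the ``prior-subset'' piece; one must also check that the hypothesis of Lemma~\ref{lem:gld-grad-con}, namely that the trajectory is independent of $J$, is genuinely satisfied, which is exactly why $J$ is sampled independently of the mini-batch sequence. The remaining work (the Gaussian KL formula, the $O(L^2/b)$ second-moment estimate, and bookkeeping of the constants $4$ and $C_\delta/2$) is routine.
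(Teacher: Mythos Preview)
Your proposal is correct and follows the same overall scaffold as the paper (data-dependent PAC-Bayes via Theorem~\ref{thm:data-pac}, chain rule for KL, then Lemma~\ref{lem:gld-grad-con} for the subset-difference term), but you make a different and in fact cleaner choice of prior. The paper takes $P(S_J)$ to be the law of \emph{SGLD} run on $S_J$, i.e.\ $W'_{t+1}\gets W'_t-\gamma_t\g f(W'_t,S_{B'_t})+\sigma_t\N(0,I_d)$ with its own independent mini-batches $B'_t$; the one-step conditional laws of both $W_t$ and $W'_t$ are then Gaussian \emph{mixtures}, and the paper invokes joint convexity of KL, obtaining the decomposition $a+b+c$ with $a=g(S_{B_t})-g(S)$, $b=g(S_J)-g(S_{B'_t})$, $c=g(S)-g(S_J)$, which produces \emph{two} mini-batch variance terms (each $\le 4L^2/b$) and hence the $8L^2/b$ leading to the stated $4/b$. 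Your prior is \emph{GLD} on $S_J$ (full subset gradient, no mini-batch), so the prior's one-step conditional is a single Gaussian and you only need convexity of KL in the first argument; the decomposition has a single mini-batch variance term, and tracking the constants actually yields $2/b$ rather than $4/b$ (your ``$4$'' is a slight overcount, harmless for the stated bound). Either route works; yours is a bit simpler and marginally sharper.
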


\begin{remark} If the gradient norm is bounded\footnote{$\norm{\g f(w,z)} \leq L$ holds for all $w,z$} and we use a decaying learning rate schedule such as $\gamma_t \propto O(1/t)$, 
then the summation in our bound converges. 
Hence, under such a learning rate schedule, Theorem~\ref{thm:gld-bound} and ~\ref{thm:sgld-bound} imply the following test error bound for GLD or SGLD: $\risk(W_T,\D) \leq \eta C_{\eta}\risk(W_T,S_I)+\widetilde{O}(\frac{1}{n-m})$ which is independent of $T$,
where $\widetilde{O}$ hides some logarithmic factors.
\end{remark}

\begin{figure}[t]
    \centering
    \subcaptionbox{entire training path\label{fig:fgd-mnist-exp-a}}
      {\includegraphics[width=0.3\linewidth]{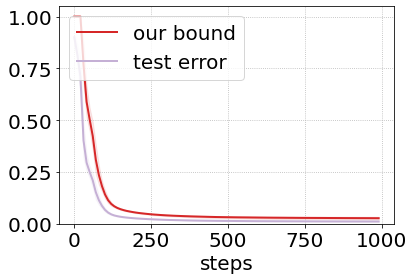}}
    \subcaptionbox{later stage\label{fig:fgd-mnist-exp-b}}
      {\includegraphics[width=0.3\linewidth]{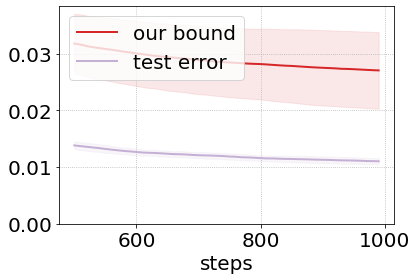}}
    \subcaptionbox{gradient difference\label{fig:fgd-mnist-exp-c}}
      {\includegraphics[width=0.3\linewidth]{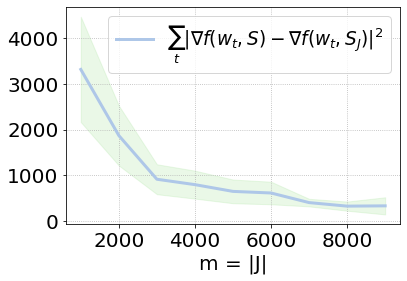}}
    \caption{MNIST + CNN + FGD. In (a) and (b), we plot the true test error
    and our bound (Theorem~\ref{thm:fgd} with $\eta=1.5,\delta=0.1$). 
    In (c), we  show how cumulative gradient difference decreases as $m$
    (the size of $J$) increases.}
\end{figure}
\begin{figure}[t]
    \centering
    \subcaptionbox{training and test errors (FSGD)\label{fig:fsgd-cifar10-exp-a}}
      {\includegraphics[width=0.3\linewidth]{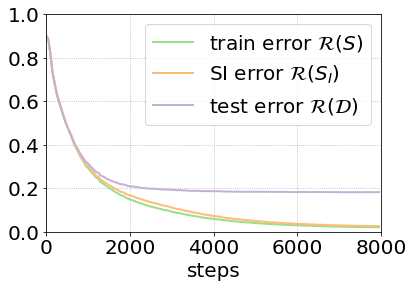}}
    % \subcaptionbox{bound\label{fig:fsgd-cifar10-exp-b}}
    %   {\includegraphics[width=0.22\linewidth]{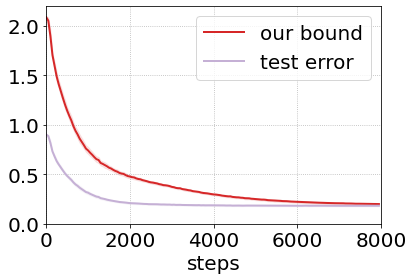}}
    \subcaptionbox{our bound\label{fig:fsgd-cifar10-exp-c}}
      {\includegraphics[width=0.3\linewidth]{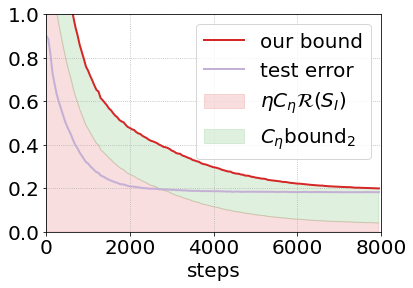}}
    \subcaptionbox{gradient difference \label{fig:fsgd-cifar10-exp-d}}
      {\includegraphics[width=0.3\linewidth]{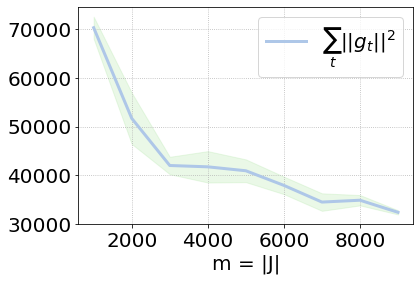}}
    \caption{CIFAR10 + SimpleNet + FSGD. In (a), we plot $\risk(W_T,S_I)$,
    $\risk(W_T, S)$ and the test error. We can see that 
    $\risk(W_T,S_I)$ is very close to $\risk(W_T, S)$. 
    In (b), we plot our theoretical bound (Theorem~\ref{thm:fsgd} with $\eta=2,\delta=0.1$). The red part corresponds to the first term of our bound (the empirical risk) and the green part corresponds to the rest. 
    The last step test error and our bound are $0.18$ and $\mathbf{0.198}$, respectively. In (c), we show how cumulative gradient difference decreases as $m$ (the size of $J$) increases.}
\end{figure}
\section{Experiment}\label{sec:exp}

In this section, we conduct experiments for FGD and FSGD on MNIST~\citep{lecun1998gradient} and CIFAR10~\citep{krizhevsky2009learning} to investigate the 
the optimization and generalization properties of FGD and FSGD,
and the numerical closeness between
our theoretical bounds and true test errors. 
Due to space limit, the detailed experimental setting 
and some additional experimental results can be found in Appendix~\ref{app:exp-detail}.

\paragraph{FGD/FSGD vs GD/SGD.} 
We first demonstrate that the training and testing curves of FGD and GD are nearly identical (we choose precision level $\eps=0.005$ or $0.004$). 
We also show that the same is true for FSGD vs SGD. 
Due to space limit, the figures are presented in Appendix~\ref{app:exp-detail} (Figure~\ref{fig:fgd-gd-mnist} and~\ref{fig:fsgd-sgd-cifar}).

\paragraph{Non-vacuous bounds.} For MNIST, we train a CNN ($d=1.4\cdot10^6$) by FGD with $\gamma_t=0.005\cdot 0.9^{\lfloor \frac{t}{150}\rfloor}$ and $\eps_t=0.005$ and momentum $\alpha=0.9$). The size $m = |J|$ is set to $n/2 = 30000$. As shown in Figure~\ref{fig:fgd-mnist-exp-a} and~\ref{fig:fgd-mnist-exp-b}, our bound (Theorem~\ref{thm:fgd} with $\eta=1.5, \delta=0.1$) tracks the testing error closely. At step $T=990$, our bound is $\mathbf{0.026}$ while the testing error is $0.011$. This is non-vacuous and tighter than best known $11\%$ MNIST bound reported in \citet{dziugaite2021role}. For CIFAR10, we train a SimpleNet~\citep{hasanpour2016lets} without BatchNorm and Dropout. The number of parameters $d$ is nearly $18\cdot 10^6$. We use FSGD to train our model. The learning rate $\gamma_t$ is set to $0.001\cdot 0.9^{\lfloor t/200\rfloor}$, the precision $\eps_t$ is set to $0.004$, and the momentum $\alpha$ is set to $0.99$. The batch size is $2000$. $m = |J|$ is set to $n/5 = 10000$. The result is shown in Figure~\ref{fig:fsgd-cifar10-exp-c}. We stop training at step $t=8000$ 
when the testing error is and $0.18$. At that time, our testing error bound is $\mathbf{0.198}$ which is non-vacuous and tighter than best known ${0.23}$ CIFAR10 bound reported in \citet{dziugaite2021role}.

\paragraph{Decrease of the gradient difference.}
Intuitively, the cumulative squared norm of gradient difference $\gdiff_t:=\g f(W_t, S) - \g f(W_t, S_J)$ should decrease as 
$m=|J|$ increases. Although we cannot prove a concentration like Lemma~\ref{lem:gld-grad-con} (i.e., $\norm{\gdiff_t}^2$ scales as $O(1/m)$), 
we can still observe that $\norm{\gdiff_t}^2$ decreases when $m$ increases. 
The results are depicted in Figure~\ref{fig:fgd-mnist-exp-c} and Figure~\ref{fig:fsgd-cifar10-exp-d}.

\paragraph{Random labels.}
We conduct the random label experiment designed in \citet{ZhangBHRV17}.
Our theoretical bounds can distinguish the datasets with different portion ($p$) of random labels. See Appendix~\ref{app:exp-detail}.
%As shown in Figure~\ref{fig:fgd-mnist-random} and ~\ref{fig:fsgd-cifar-random}, our bound correctly depicted the relative order of the true testing errors.

\section{Conclusion}

In this paper, we prove new generalization bounds for several gradient-based methods with either discrete or continuous noises based on carefully constructed data-dependent priors. 
Recall that FGD requires to compute the gradient difference for technical reasons. It would be more natural and desirable if we only need to compute the full gradient and rounded to the nearest grid point. An intriguing future direction is to free FGD/FSGD from the dependence of the prior subset $J$ so that we can apply the concentration on the gradient difference to obtain a tighter bound.
Of course, a major further direction is to obtain similar generalization bounds for vanilla GD and SGD, which remains to be an important open problem in this line of work.
Our technique can be useful for handling deterministic algorithms and discrete noises, but it seems that new technical ideas or assumptions are needed for tackling GD or SGD.

\section{Acknowledgements}
The authors would like to thank the anonymous reviewers for their constructive comments. The authors are supported in part by the National Natural Science Foundation of China Grant 62161146004, Turing AI Institute of Nanjing and Xi'an Institute for Interdisciplinary Information Core Technology. 
\bibliography{main}
\newpage
\appendix
\section{Floored Gradient Descent}
\label{app:fgdintro}
FGD is a finite precision variant of GD. It decomposes the full gradient $g_1:=\gamma_t \nabla f(W_t,S)$ (used in GD) into a sum of two parts, $g_2:=\gamma_t \nabla f(W_t, S_J)$ and $\Delta g:=\gamma_t (g_1-g_2)=\gamma_t\nabla f(W_t, S) - \gamma_t\nabla f(W_t,S_J)$, where 
$J\subseteq [n]$ is a subset fixed before training and
$S_J$ is the subset of training data corresponding to index set $J$.
Note that $S_J$ is the ``prior" dataset, rather than a mini-batch. 
Then we reduce the precision of $\Delta g$ to $\varepsilon_t$ by applying a floor-operation $\Delta' g:=\varepsilon_t \mathrm{floor}(\Delta g/\varepsilon_t)$. Hence, $g_2 + \Delta' g$ can be viewed as an approximation of full gradient $g_1=g_2 + \Delta g$. 

It is easy to see that if we ignore the floor operation or 
when $\varepsilon_t$ goes to $0$, FGD becomes GD. 
More concretely, recall that the update rule of FGD is (WLOG let $\gamma_t=1$):
$$
W_t \gets W_{t-1} - \nabla f(W_{t-1}, S_{J}) - \varepsilon_t\cdot \mathrm{floor}\left(\frac{\nabla f(W_{t-1}, S) - \nabla f(W_{t-1}, S_{J})}{\varepsilon_t}\right).
$$

(1) If we ignore the floor operation in the last term, the equation becomes
\begin{align*}
W_t & \gets W_{t-1} - \nabla f(W_{t-1}, S_{J}) - 
\varepsilon_t\cdot \left(\frac{\nabla f(W_{t-1}, S) - \nabla f(W_{t-1}, S_{J})}{\varepsilon_t}\right) \\
& =W_{t-1} - \nabla f(W_{t-1}, S_{J}) - (\nabla f(W_{t-1}, S) - \nabla f(W_{t-1}, S_{J})) \\
& =W_{t-1} - \nabla f(W_{t-1}, S).
\end{align*}
 
 (2) When $\varepsilon \rightarrow 0$,
 it is easy to see that 
$\lim_{\varepsilon \rightarrow 0} \varepsilon \cdot \mathrm{floor}(x/\varepsilon) = x$. Again, the FGD update rule reduces to
\begin{align*}
W_{t}  & \leftarrow W_{t-1} - \nabla f(W_{t-1}, S_{J}) - (\nabla f(W_{t-1}, S) - \nabla f(W_{t-1}, S_{J})) \\
& =W_{t-1} - \nabla f(W_{t-1}, S).
\end{align*}
The contribution of $\nabla f(W_{t-1}, S_J)$ is canceled out, and again 
FGD becomes GD.

\section{Comparison with Existing Work}\label{app:compare}

\subsection{Numerical bounds on MNIST and CIFAR10}

    The following table (Table~\ref{tab:compare}) lists some of existing theoretical test bounds
    for MNIST and CIFAR10.
    Typically, for both datasets, one can fit the training set very well
    and obtain a near zero training error.
    Hence, the generalization error bound is quite close to the 
    theoretical test error bound.
    There is no numerical experiment in \citet{mou2018generalization}
    and the number is excerpted from \cite{negrea2019information}.
    The first few rows are excerpted from \cite{nagarajan2019generalization}.
    
    \begin{table}[h]
    \centering
    \begin{tabular}{|c|c|c|c|c|}
     \hline
     Reference & Algorithms & Approach & MNIST & CIFAR10\\
     \hline
     \citet{harvey2017nearly} & Any & VC-dim &  large &  large\\
     \hline
     \citet{bartlett2017spectrally} & Any & Margin-based &  large &  large\\
     \hline
     \citet{golowich2018size} & Any & Rademacher Complexity &  large &  large\\
     \hline
    %  \citet{nagarajan2019generalization} & arbitrary & norm-based & very large & very large\\
    %  \cline{1-2}
     \citet{arora2018stronger} & Any & Compression& - &  large\\
     \hline
     \citet{dziugaite2017computing} & OPT-PAC & PAC-Bayes& 0.161 & - \\
     \hline 
     \citet{mou2018generalization} & SGLD & PAC-Bayes & $\approx$ 1.2 & - \\
     \hline
     \citet{li2019generalization} & GLD/SGLD & Bayes-Stability & $\approx$ 0.2 & $\approx$ 207 \\
     \hline
     \citet{zantedeschi2021learning} & Majority Votes & PAC-Bayes & $\approx$ 0.45 & - \\
     \hline
     \citet{ZhouVAAO19} & Any & PAC-Bayes& 0.46 & -\\
     \hline
     \citet{negrea2019information} & SGLD & Information theory & 0.21 & 41.13\\
     \hline
     \citet{haghifam2020sharpened} & SGLD & Information theory & $\approx$ 0.15 & $\approx$ 0.72\\
     \hline
     \citet{dziugaite2021role} & OPT-PAC & PAC-Bayes & 0.11 & 0.23 \\
     \hline
     \bf{Our bound} & FGD/FSGD & PAC-Bayes& \bf{0.026} & \bf{0.198}\\
     \hline
    \end{tabular}
    
    \vspace{0.2cm}
    \caption{Comparison with existing theoretical upper bounds of 
    test errors on MNIST and CIFAR10. 
    ``Any'' means that the bound only depends on the trained network, not the 
    training algorithm. ``OPT-PAC" means that the work optimizes a different 
    loss function that corresponds to the PAC-Bayesian bound.
    ``large'' means that the bound is far greater than 1 
    and ``-'' indicates the bound is not reported in that paper.
    \label{tab:compare}}
    \end{table}

\subsection{Comparison with Existing GLD/SGLD bounds}

We compare our bounds for GLD/SGLD (Theorem~\ref{thm:gld-bound} and~\ref{thm:sgld-bound}) with existing GLD/SGLD generalization bounds in prior work.
For GLD, \citet{mou2018generalization} provide 
a generalization bound in expectation based on the uniform stability framework,
which is of rate $O(\frac{L\sqrt{T}}{n})$, where $L$ is the global Lipschitz constant (ignoring the factors depending on $\gamma_t$ and $\sigma_t$).
Their bound can be tighten to $O(\frac{1}{n}\sqrt{\sum_{t=1}^T L^2_t})$ where $L_t$ is the gradient norm
at time $t$ (which is always less than $L$) ~\citep{li2019generalization}.
Their bounds can be converted to high probability bound with an additional factor $O(1/\sqrt{n})$ 
using the technique developed in~\citep{feldman2019high}. 
Bounds of similar orders have been also obtained through information theory 
\citep{wang2021analyzing, haghifam2020sharpened} and differential privacy \citep{wu2021generalization}.
Note that the main term in our bound is of order $O(\frac{1}{n^2}\sum_{t=1}^T L^2_t)$, which is quadratically better  than theirs if the bound is in $(0,1)$.
For SGLD, \citet{mou2018generalization} and \citet{li2019generalization} obtained similar bounds, but requires the assumption that the learning rate should be of order $O(1/L)$.
Our bound for SGLD does not require such assumption and is more favorable for large minibatch size $b$.
For small value of $b$ (say $b=O(1)$), our bound can be worse.

Another closely related work is~\citet{negrea2019information}.
They also use a data-dependent prior
and present an in-expectation bound based on information theory. 
They introduce a quantity called ``incoherence'' $\norm{\xi_t}$ that is defined somewhat similar to $\|\gdiff_t\|$
(it is also the norm of the different between two gradients defined by two different subsets of samples). They obtained an $O(\sqrt{\frac{1}{n-m}\sum_{t=1}^T\frac{\gamma_t^2}{\sigma_t^2}\E[\norm{\xi_t}^2]})$ bound for SGLD. By taking expectation, it can be further bounded by $O(\sqrt{\frac{1}{n}\sum_{t=1}^T(\frac{1}{b} + \frac{(n-m)}{n m})\frac{\gamma_t^2}{\sigma_t^2}V_t})$, where $V_t$ is a quantity about the same size as the variance of training gradients.
In fact, they obtain a worst case $O(\frac{(n-m)^2}{n^2}\sum_{t=1}^T\frac{L^2\gamma_t^2}{\sigma_t^2})$ bound for $\KL{Q}{P(S_J)}$, and if we plug it into Catoni's PAC-Bayesian bound (Theorem~\ref{thm:data-pac}), one can 
obtain an $O(\frac{1}{n-m} + \frac{(n-m)}{n^2}\sum_{t=1}^T\frac{\gamma_t^2}{\sigma_t^2}L^2)$ high probability bound (see Theorem~\ref{thm:negre} below for details).
To make the 2nd term in their bound have the same $O(1/n^2)$ rate as ours, one needs to set $n-m=O(1)$  which would result in a large first term $\frac{1}{n-m}=\Omega(1)$. 
Moreover, our construction of data-dependent prior $P(S_J)$ is also very different from theirs. Their idea is to use the gradients in $S_J$ to cancel out the gradients in $S$ while ours is based on the property that the mean gradient on $S_J$ is concentrated around the mean gradient of the whole dataset $S$.

{
\begin{theorem}\label{thm:negre} Suppose the loss $f$ is $L$-Lipschitz (i.e., $\norm{\g f(w,z)} \leq L$ holds for all $w,z$). Then for \ref{eq:gld}, we have the following bound holds w.p. at least $1-\delta$ over the randomness of $J$ and $S\sim \D^n$:
\begin{align*}
   \risk(W_T, \D) \leq \eta C_{\eta}\risk(W_T,S_{[n]\backslash J}) + O\left(\frac{\ln(1/\delta)}{n-m} + \frac{(n-m)}{n^2}\sum_{t=1}^T\frac{\gamma_t^2}{\sigma_t^2}L^2\right), 
\end{align*}
where $C_{\eta}=\frac{1}{1-e^{-\eta}}$.
\end{theorem}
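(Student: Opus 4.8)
The plan is to instantiate the data-dependent PAC-Bayesian bound (Theorem~\ref{thm:data-pac}) with a prior built in the spirit of \citet{negrea2019information}: run the \ref{eq:gld} recursion but replace the full-sample gradient $\g f(\cdot,S)$ by the prior-sample gradient $\g f(\cdot,S_J)$. Concretely, define the stochastic process $\{W'_0,\ldots,W'_T\}$ by $W'_0 := w_0$ and $W'_{t} \gets W'_{t-1} - \gamma_{t}\,\g f(W'_{t-1}, S_J) + \sigma_{t}\N(0,I_d)$, and let $P(S_J)$ be the law of $W'_T$. By construction $P(S_J)$ depends on $S$ only through $S_J$, so it is an admissible prior in Theorem~\ref{thm:data-pac}. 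Since both $W_0^T$ (the GLD trajectory on $S$) and ${W'}_0^T$ are Markov chains with the same initialization and Gaussian transition kernels of covariance $\sigma_t^2 I_d$, the data-processing inequality together with the chain rule of KL (Lemma~\ref{lem:chain-kl}) gives $\KL{W_T}{P(S_J)} \leq \KL{W_0^T}{{W'}_0^T} = \sum_{t=1}^T \E_{w\sim W_0^{t-1}}\!\left[\KL{W_t\mid W_0^{t-1}=w}{W'_t\mid {W'}_0^{t-1}=w}\right]$, with the $t=0$ term vanishing because $W_0 = W'_0 = w_0$.

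Next I would evaluate each per-step KL. Conditioned on a prefix $w=(w_0,\ldots,w_{t-1})$, the Markov property makes the law of $W_t$ equal to $\N(w_{t-1}-\gamma_t\g f(w_{t-1},S),\,\sigma_t^2 I_d)$ and that of $W'_t$ equal to $\N(w_{t-1}-\gamma_t\g f(w_{t-1},S_J),\,\sigma_t^2 I_d)$; these are two Gaussians with equal covariance, so the KL is $\frac{\gamma_t^2}{2\sigma_t^2}\norm{\g f(w_{t-1},S)-\g f(w_{t-1},S_J)}^2 = \frac{\gamma_t^2}{2\sigma_t^2}\norm{\gdiff_t}^2$. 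The key deterministic identity is that, writing $I=[n]\backslash J$ and $\g f(w, S_I):=\frac{1}{n-m}\sum_{i\in I}\g f(w,z_i)$, one has $\g f(w,S)=\frac{m}{n}\g f(w,S_J)+\frac{n-m}{n}\g f(w,S_I)$, hence $\g f(w,S)-\g f(w,S_J)=\frac{n-m}{n}\bigl(\g f(w,S_I)-\g f(w,S_J)\bigr)$. Since each $\g f(w,z)$ has norm at most $L$, both averages have norm at most $L$, so $\norm{\gdiff_t}\leq \frac{2(n-m)L}{n}$ uniformly in the prefix. Taking expectation over $W_0^{t-1}$ and summing over $t$ yields $\KL{W_T}{P(S_J)}\leq \frac{2(n-m)^2 L^2}{n^2}\sum_{t=1}^T\frac{\gamma_t^2}{\sigma_t^2}$.

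Finally I would substitute this KL estimate into Theorem~\ref{thm:data-pac}: with probability at least $1-\delta$ over $S$ and $J$, $\risk(W_T,\D)\leq \eta C_{\eta} \risk(W_T,S_I)+C_{\eta}\cdot\frac{\KL{W_T}{P(S_J)}+\ln(1/\delta)}{n-m}$; dividing the KL bound by $n-m$ produces exactly the term $\frac{2C_{\eta}(n-m)L^2}{n^2}\sum_{t=1}^T\frac{\gamma_t^2}{\sigma_t^2}$, i.e.\ the claimed $O\bigl((n-m)n^{-2}\sum_t(\gamma_t/\sigma_t)^2 L^2\bigr)$, while the $C_{\eta}\ln(1/\delta)/(n-m)$ term carries over directly; note that no extra failure probability is incurred because we use only the deterministic Lipschitz bound on $\gdiff_t$ rather than the concentration of Lemma~\ref{lem:gld-grad-con}. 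There is no genuinely hard step here — the argument is a clean specialization of the framework — so the only points needing care are (i) checking that the chain-rule/data-processing decomposition applies (both chains Markov, matched noise covariance, shared starting point), and (ii) getting the $\frac{n-m}{n}$ factor in the gradient identity right, since that factor is precisely what makes the bound scale with $n-m$ rather than with $n$ and hence reproduces Negrea et al.'s worst-case rate for $\KL{Q}{P(S_J)}$.
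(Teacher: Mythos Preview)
Your proof is correct and follows exactly the paper's two-step template: obtain the worst-case bound $\KL{W_T}{P(S_J)}=O\bigl(\frac{(n-m)^2}{n^2}\sum_{t}\frac{\gamma_t^2}{\sigma_t^2}L^2\bigr)$ and then plug it into Theorem~\ref{thm:data-pac}. The only minor difference is that the paper simply cites \citet[Section~3.1.1]{negrea2019information} for this KL bound, whereas you work it out explicitly using the ``GLD on $S_J$'' prior (which is actually the paper's own prior from the proof of Theorem~\ref{thm:gld-bound}, not literally Negrea et al.'s construction) together with the deterministic identity $\g f(w,S)-\g f(w,S_J)=\frac{n-m}{n}\bigl(\g f(w,S_I)-\g f(w,S_J)\bigr)$; both routes give the same $(n-m)^2/n^2$ scaling.
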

\begin{proof}
Using a data-dependent prior $P(S_J)$ defined in \citet[Section 3.1.1]{negrea2019information}, one can obtain an $O(\frac{(n-m)^2}{n^2}\sum_{t=1}^T\frac{\gamma_t^2}{\sigma_t^2}L^2)$ bound for $\KL{W_T}{P(S_J)}$ (see \citet[Section 3.1.1]{negrea2019information} for details). We conclude the proof by plugging it into Theorem~\ref{thm:data-pac}. \end{proof}

} 

\citet{mou2018generalization} also obtain a high probability PAC-Bayesian bound 
of rate $O(\sqrt{\frac{1}{n}\sum_{t=1}^T e^{-r_t}L_t^2})$ if there is an $\ell_2$-regularization in the loss (ignoring other factors depending on $\gamma_t$ and $\sigma_t$).
Here $e^{-r_t}<1$ is a decay factor depending on the regularization coefficient.
There is a similar decay factor in \citet{wang2021analyzing}'s bound (the fact comes from strong data processing inequalities). Note that without $\ell_2$-regularization, there is no such decay factor, and \citet{mou2018generalization}'s bound becomes $O(\sqrt{T/n})$, which is looser than ours. 
From technical perspective, they use Fokker Planck equation to track the time derivative of KL 
and Logarithmic Sobolev inequality to related KL with Fisher information. We also use these tools
for our generalization bound of Continuous Langevin dynamics (CLD) (see Appendix~\ref{sec:cld}), but the general proof idea is very different. 
\section{Omitted Proofs in
Section~\ref{sec:prelim}}\label{app:prelim}
In this section, we are going to prove a generalized McDiarmid's inequality (Lemma~\ref{lem:new-mcd}). To avoid frequently writing long expression $\Phi(j_1,j_2,...,j_i,J_{i+1},...,J_m)$, we briefly denoted it as $\Phi(j_1^i,J_{i+1}^m)$, where $j_l^r$ is a abbreviation of sequence $ (j_l, j_{l+1},...,j_r)$. Before proving it, we need the following lemma:
\begin{lemma}[Theorem~5.3 in \citet{dubhashi2009concentration}]\label{lem:non-iid} Let $J = (J_1,...,J_m)$ be any random sequence and $\Phi$ be a function of $J$. If for any $i\in [m]$ and any fixed $j_{1}^i = (j_1,..,j_i)$, it satisfies
\[\left|\E_{J_{i+1}^m}[\Phi(j_1^i,J_{i+1}^m)|J_1^i=j_1^i] - \E_{J_{i}^m}[\Phi(j_1^{i-1},J_{i}^m)|J_1^{i-1}=j_1^{i-1}]\right| \leq c_i.\]
Then, the following inequality holds: 
\[\Pr_J\left[\Phi(J) - \E_J[\Phi(J)] > \epsilon \right] \leq \exp\left(-\frac{2\epsilon^2}{\sum_{i=1}^m c_i^2}\right).\]
\end{lemma}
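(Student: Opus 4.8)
The plan is to run the classical Doob martingale (``exposure'') argument and close with an Azuma--Hoeffding style exponential tail bound. First I would introduce, for $i=0,1,\dots,m$, the random variable
\[
Z_i := \E_{J_{i+1}^m}\!\left[\Phi(J_1^i,J_{i+1}^m)\mid J_1^i\right],
\]
with the conventions $Z_0=\E_J[\Phi(J)]$ and $Z_m=\Phi(J)$. Relative to the filtration $\mathcal{F}_i:=\sigma(J_1^i)$, the tower property makes $(Z_i)_{i=0}^m$ a martingale, so $\Phi(J)-\E_J[\Phi(J)]=\sum_{i=1}^m D_i$ with increments $D_i:=Z_i-Z_{i-1}$ satisfying $\E[D_i\mid\mathcal{F}_{i-1}]=0$. (Integrability is harmless: in the intended applications the index domain is finite, and otherwise one tacitly assumes $\Phi$ integrable.) The statement is thereby reduced to a tail bound for the martingale-difference sum $\sum_{i=1}^m D_i$.

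Next I would read the hypothesis as a statement about the conditional law of each increment. Fixing $J_1^{i-1}=j_1^{i-1}$ turns $Z_{i-1}$ into the constant $z:=\E[\Phi\mid J_1^{i-1}=j_1^{i-1}]$, so $D_i=Z_i-z$ becomes a centered function of the single coordinate $J_i$. The key claim I would isolate is that, conditionally on $\mathcal{F}_{i-1}$, this $D_i$ takes values in an interval of width $c_i$ with $\mathcal{F}_{i-1}$-measurable endpoints. Granting it, the conditional form of Hoeffding's lemma (a centered random variable supported in an interval of length $c_i$ has moment generating function at most $\exp(\lambda^2 c_i^2/8)$) gives, for every $\lambda>0$,
\[
\E\!\left[e^{\lambda D_i}\mid\mathcal{F}_{i-1}\right]\le\exp\!\left(\tfrac{1}{8}\lambda^2 c_i^2\right)\quad\text{a.s.}
\]

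Then I would finish with the usual Chernoff peeling: writing $S_k=\sum_{i\le k}D_i$ and conditioning on $\mathcal{F}_{k-1}$ to extract the innermost factor, one obtains $\E[e^{\lambda S_m}]\le\exp(\tfrac{1}{8}\lambda^2\sum_{i=1}^m c_i^2)$; Markov's inequality then gives $\Pr_J[\Phi(J)-\E_J[\Phi(J)]>\epsilon]\le\exp(\tfrac{1}{8}\lambda^2\sum_i c_i^2-\lambda\epsilon)$, and the choice $\lambda=4\epsilon/\sum_i c_i^2$ yields exactly $\exp(-2\epsilon^2/\sum_{i=1}^m c_i^2)$.

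The step I expect to need the most care is precisely the width-$c_i$ claim for the conditional range of $D_i$, which is what produces the sharp constant $2$ in the exponent. Taken literally, a magnitude bound $|D_i|\le c_i$ only places $D_i$ conditionally in an interval of width $2c_i$, which alone would deliver only the weaker $\exp(-\epsilon^2/(2\sum c_i^2))$; recovering the tight exponent requires the stronger fact that the \emph{oscillation} of $x\mapsto\E[\Phi\mid J_1^{i-1}=j_1^{i-1},J_i=x]$ is at most $c_i$ (this immediately gives the interval of width $c_i$, since $z$ is an average of these values) — the form of the bounded-averaged-differences condition under which the factor-$2$ constant is available. Once this oscillation bound is in hand, the conditional Hoeffding estimate, the martingale peeling, and the Chernoff optimization are all routine.
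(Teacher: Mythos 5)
First, a point of reference: the paper does not prove this lemma at all --- it is imported as a black box (attributed to Theorem~5.3 of Dubhashi and Panconesi) and used only inside the proof of Lemma~\ref{lem:new-mcd}. Your Doob-martingale decomposition followed by a conditional Hoeffding/Chernoff argument is exactly the standard proof of such statements, so there is no competing ``paper proof'' to compare against; the only question is whether your argument closes under the hypothesis as written.

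It does not quite close, and you have put your finger on precisely the reason. The hypothesis bounds only the magnitude $|Z_i-Z_{i-1}|\le c_i$ of the Doob increments, which places $D_i$ conditionally in $[-c_i,c_i]$, an interval of length $2c_i$; Hoeffding's lemma then gives $\E\left[e^{\lambda D_i}\mid \mathcal{F}_{i-1}\right]\le e^{\lambda^2 c_i^2/2}$ and the tail $\exp\left(-\epsilon^2/(2\sum_i c_i^2)\right)$, a factor of $4$ weaker in the exponent than claimed. The width-$c_i$ oscillation property you defer is genuinely stronger than the stated hypothesis and cannot be derived from it: take $m=1$, $J_1$ uniform on $\{\pm 1\}$, $\Phi(J_1)=c_1 J_1$; the hypothesis holds with constant $c_1$, yet $\Pr[\Phi-\E\Phi>\epsilon]=1/2$ exceeds $\exp(-2\epsilon^2/c_1^2)$ for $\epsilon$ near $c_1$, so the lemma as transcribed is actually false with the constant $2$ in the numerator. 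The correct general conclusion under this hypothesis is $\exp\left(-\epsilon^2/(2\sum_i c_i^2)\right)$, which your argument does fully establish. This is a defect of the statement rather than of your strategy: for the downstream use in Lemma~\ref{lem:new-mcd} and Lemma~\ref{lem:gld-grad-con} the weaker exponent merely inflates the constant $C_\delta$, and in that application the stronger per-coordinate oscillation bound could in any case be verified directly by the same coupling argument the paper uses to check the averaged-differences condition, which would recover the factor $2$ if one insisted on it.
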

Now we are able to prove our result.
\begin{clemma}{\bf \ref{lem:new-mcd}}\, Suppose $\Phi:[n]^m \to \R^{+}$ is order-independent and $|\Phi(J) - \Phi(J')|\leq c$ holds for any adjacent $J,J'\in [n]^m$ satisfying $|J \cap J'| = m-1$. Let $J$ be $m$ indices sampled uniformly from $[n]$ without replacement. Then $\Pr_{J}\left[\Phi(J) - \E_J[\Phi(J)] > \epsilon\right] \leq \exp(\frac{-2\epsilon^2}{m c^2})$.
\end{clemma}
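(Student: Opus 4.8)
The plan is to instantiate the bounded-difference inequality for non-product measures (Lemma~\ref{lem:non-iid}) with the Doob martingale obtained by revealing the coordinates $J_1,\dots,J_m$ of the uniform-without-replacement sample one at a time. All that is needed is to check that each martingale increment is bounded by $c_i=c$; then $\sum_{i=1}^m c_i^2 = mc^2$ and the claimed tail bound is immediate. Throughout I will use that $\Phi$ is order-independent, so it is a function of the unordered set $\{J_1,\dots,J_m\}$, and the hypothesis reads $|\Phi(A)-\Phi(B)|\le c$ whenever the $m$-element sets $A,B$ satisfy $|A\cap B|=m-1$ (and $\Phi(A)=\Phi(B)$ when $A=B$).

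Fix $i\in[m]$ and a valid prefix $j_1^{i-1}$. For admissible $j_i\in[n]\setminus\{j_1,\dots,j_{i-1}\}$ set $g(j_i):=\E[\Phi(j_1^{i-1},j_i,J_{i+1}^m)\mid J_1^i=(j_1^{i-1},j_i)]$. The increment condition of Lemma~\ref{lem:non-iid} is $|g(j_i)-\E_{J_i}[g(J_i)\mid J_1^{i-1}=j_1^{i-1}]|\le c$, and since the subtracted term is an average of values of $g$, it suffices to prove $|g(j_i)-g(j_i')|\le c$ for all admissible $j_i\ne j_i'$; this pins the range of $g$ inside an interval of length $\le c$, which yields the increment bound with $c_i=c$.

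To compare $g(j_i)$ and $g(j_i')$ I will build a measure-preserving coupling between the conditional laws of the suffix $J_{i+1}^m$. Under $J_1^i=(j_1^{i-1},j_i)$ the suffix is a uniform ordered tuple of $m-i$ distinct elements of $U_1:=[n]\setminus\{j_1,\dots,j_{i-1},j_i\}$, and under $J_1^i=(j_1^{i-1},j_i')$ it is uniform over such tuples from $U_2:=[n]\setminus\{j_1,\dots,j_{i-1},j_i'\}$. Define $T$ on ordered tuples of distinct elements: if $j_i'$ does not occur in the tuple, $T$ acts as the identity; if $j_i'$ occurs (exactly once, at some position $k$), $T$ replaces that entry by $j_i$. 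One checks that $T$ maps $U_1$-tuples bijectively onto $U_2$-tuples (the swap of the roles of $j_i$ and $j_i'$ is its inverse), and since $|U_1|=|U_2|=n-i$ the two families of tuples are equinumerous, so $T$ pushes the uniform law forward to the uniform law. Now for a $U_1$-tuple $\tau$ compare the sets $A:=\{j_1,\dots,j_{i-1},j_i\}\cup\tau$ and $B:=\{j_1,\dots,j_{i-1},j_i'\}\cup T(\tau)$: when $j_i'\notin\tau$, $A$ and $B$ coincide except that $A$ carries $j_i$ and $B$ carries $j_i'$, so $|A\cap B|=m-1$ and $|\Phi(A)-\Phi(B)|\le c$; when $j_i'\in\tau$, one has $A=B=\{j_1,\dots,j_{i-1},j_i,j_i'\}\cup(\tau\setminus\{j_i'\})$ as sets, so $\Phi(A)=\Phi(B)$. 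Averaging over $\tau$ (equivalently over $T(\tau)$, by the pushforward property) gives $g(j_i)=\E_\tau[\Phi(A)]$ and $g(j_i')=\E_\tau[\Phi(B)]$, hence $|g(j_i)-g(j_i')|\le\E_\tau|\Phi(A)-\Phi(B)|\le c$.

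With $c_i=c$ for all $i$, Lemma~\ref{lem:non-iid} gives $\sum_i c_i^2=mc^2$ and therefore $\Pr_J[\Phi(J)-\E_J[\Phi(J)]>\epsilon]\le\exp(-2\epsilon^2/(mc^2))$, as required. The one genuinely delicate step is the coupling $T$: one must split on whether the replacement index $j_i'$ already appears among the later coordinates — in that case the two configurations become identical as sets, which is precisely why only a single application of the ``$|A\cap B|=m-1$'' hypothesis is ever needed — and then verify that $T$ is a bijection between the two equinumerous families of tuples, so that it transports uniform to uniform and the averaging step above is legitimate.
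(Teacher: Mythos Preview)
Your proof is correct and follows the same overall strategy as the paper: both invoke Lemma~\ref{lem:non-iid} and bound the Doob-martingale increments by coupling the conditional suffixes of the without-replacement sample. The executions differ, though. You first reduce the increment bound to a pairwise comparison $|g(j_i)-g(j_i')|\le c$ and then exhibit a single explicit bijection $T$ between suffix tuples, splitting into just two cases (whether $j_i'$ appears in the suffix or not). The paper instead compares $\E[\Phi\mid J_1^i=j_1^i]$ directly against $\E[\Phi\mid J_1^{i-1}=j_1^{i-1}]$ (so the $i$-th coordinate on one side is still random) and carries out a five-case analysis on the events $\{Y_i=j_i\}$, $\{Y_i\in X_{i+1}^m\}$, $\{j_i\in Y_{i+1}^m\}$. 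Your route is shorter and yields $c_i=c$; the paper's case analysis recovers the slightly sharper $c_i=c\,\frac{n-i-1}{n-i}$, but since $\sum_i c_i^2\le mc^2$ either way, both give the stated tail bound.
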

\begin{proof}
It suffices to verify that the conditions in Lemma~\ref{lem:non-iid} are satisfied. For any fixed $j_1^i = (j_1,...,j_i)$, let 
$X$ and $Y$ be two independent random variables with distribution equal to $J | J_1^i = j_1^i$ and 
$J | J_1^{i-1} = j_1^{i-1}$, respectively.

The goal is to find an upper bound $c_i$ for $|\E[\Phi(X)] - \E[\Phi(Y)]|$. 
We distinguish the following disjoint events.
\begin{itemize}
    \item $\cond_1: Y_i = j_i$.\\ Conditioning on this, we have $X_{i+1}^m$ and $Y_{i+1}^m$ share the same distribution. Notice that $X_1^i = Y_1^i$. Thus, we have:
    \[|\E[\Phi(X) - \Phi(Y)\,\,\,|\,\,\cond_1]| = 0.\]
    \item $\cond_2: Y_i \neq j_i\quad\bigcap\quad Y_i \notin X_{i+1}^m \quad\bigcap\quad j_i \notin Y_{i+1}^m$.\\ Conditioning on this, both suffixes $X_{i+1}^m$ and $Y_{i+1}^m$ are sampled from $[n]\backslash (j_1^i \cup J'_i)$. Thus their distributions are identical. We have
    \begin{align*}
    |\E[\Phi(X)|\cond_1] &- \E[\Phi(Y)|\cond_2]| \\
    &\leq \E_{j_{i+1}^m\sim X_{i+1}^m}[|\Phi(j_1^{i-1},j_i, j_{i+1}^m) - \Phi(j_1^{i-1},Y_i, j_{i+1}^m)|]\\
    &\leq c. \tag{Assumption}
    \end{align*}
    \item $\cond_3:Y_i \neq j_i\quad\bigcap\quad Y_i \notin X_{i+1}^m\quad\bigcap\quad j_i \in Y_{i+1}^m$.\\ Without loss of generality, we assume $Y_{i+1} = j_i$. Then $X_{i+1}^{m-1}$ and $Y_{i+2}^m$ share the same distribution. Moreover, we have $\set(X) \bigcap \set(Y) = m - 1$. The only different pair is $(X_m,Y_i)$. Since $\Phi$ is order-independent, we have
    \begin{align*}
    |\E[\Phi(X)|\cond_3] - \E[\Phi(Y)|\cond_3]|\leq c.
    \end{align*}
    \item $\cond_4: Y_i \neq j_i\quad\bigcap\quad Y_i \in X_{i+1}^m \quad\bigcap\quad j_i \notin Y_{i+1}^m$.\\ Similar to the previous situation, we can prove 
    \[|\E[\Phi(X)|\cond_4] - \E[\Phi(Y)|\cond_4]|\leq c.\]
    \item $\cond_5: Y_i \neq j_i\quad\bigcap\quad Y_i \in X_{i+1}^m \quad\bigcap\quad j_i \in Y_{i+1}^m$.\\ Without loss of generality, we assume $Y_{i+1}=j_i$ and $X_{i+1}=Y_i$. Then $X_{i+2}^m$ and $Y_{i+2}^m$ have the same distribution. It further implies
    \[|\E[\Phi(X)|\cond_5] - \E[\Phi(Y)|\cond_5]| = 0.\]
\end{itemize}
Putting these together, we have
\begin{align*}
|\E[\Phi(X)] - \E[\Phi(Y)]| &\leq \sum_{k=1}^5\Pr[\cond_k]\cdot \left|\E[\Phi(X)|\cond_k] - \E[\Phi(Y)|\cond_k]\right|\\
&\leq c\cdot (\Pr[\cond_2 \bigcup \cond_3 \bigcup \cond_4])\\
&= c\cdot (\Pr[Y_i\neq j_i] - \Pr[\cond_5])\\
&= c\cdot \left(\frac{n-i}{n-i+1} - \frac{n-i}{n-i+1}\cdot \frac{1}{n-i} \cdot \frac{1}{n-i}\right)\\
&= c \cdot \frac{n-i-1}{n-i}.
\end{align*}
Therefore, we can apply Lemma~\ref{lem:non-iid} with $c_i = \frac{c(n-i-1)}{n-i}$ to obtain:
\begin{align*}
\Pr_{J}\left[\Phi(J) - \E_J[\Phi(J)] > \epsilon \right]&\leq \exp\left(\frac{-2\epsilon^2}{\sum_{i=1}^m c_i^2}\right)
= \exp\left(\frac{-2\epsilon^2}{c^2\sum_{i=1}^m \frac{(n-i-1)^2}{(n-i)^2}}\right).
\end{align*}
\end{proof}
\section{Omitted Proofs in Section~\ref{sec:datapac}}\label{app:data-pac}
\begin{ctheorem}{\bf\ref{thm:data-pac}}\, Suppose $J$ is a random sequence including $m$ indices uniformly sampled from $[n]$ without replacement. For any $\delta \in (0, 1)$ and $\eta > 0$, we have w.p. $\geq 1-\delta$ over $S\sim \D^n$ and $J$:
\begin{align*}
\risk(Q,\D) \leq \eta C_{\eta}\risk(Q,S_I) + C_{\eta}\cdot \frac{\KL{Q}{P(S_J)} + \ln(1/\delta)}{n-m} \quad (\forall Q),
\end{align*}
where $I = [n]\backslash J$, $P(S_J)$ is the prior distribution only depending on the information of $S_J$, and $C_{\eta}:=\frac{1}{1-e^{-\eta}}$ is a constant.
\end{ctheorem}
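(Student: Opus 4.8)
The plan is to reduce the statement to the ordinary Catoni bound (Lemma~\ref{lem:catoni}) by conditioning on the prior data. First I would fix an arbitrary realization of the index sequence $J$ and of the prior subset $S_J$. Since $J$ is drawn independently of $S\sim\D^n$ (and the $m$ indices are distinct), conditioning on $(J,S_J)$ leaves the remaining $n-m$ coordinates $S_I$, where $I=[n]\setminus J$, distributed exactly as $\D^{n-m}$ and independent of $(J,S_J)$; in particular the prior $P(S_J)$ is then a fixed distribution that does not depend on $S_I$. Hence Lemma~\ref{lem:catoni}, applied with sample size $n-m$, training set $S_I$, and prior $P(S_J)$, gives: for every fixed value of $(J,S_J)$, with probability at least $1-\delta$ over $S_I$,
\begin{align*}
\risk(Q,\D)\leq \eta C_\eta \risk(Q,S_I) + C_\eta\cdot\frac{\KL{Q}{P(S_J)}+\ln(1/\delta)}{n-m}\qquad(\forall Q).
\end{align*}

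For completeness I would re-derive this conditional inequality rather than merely cite it, essentially repeating Catoni's argument on the sub-sample. By the change-of-measure (Donsker--Varadhan) inequality, for any measurable $h$ and any $Q$ one has $\E_{w\sim Q}[h(w)]\leq \KL{Q}{P(S_J)}+\ln\E_{w\sim P(S_J)}[e^{h(w)}]$. Taking $h(w)=-\eta(n-m)\risk(w,S_I)+(1-e^{-\eta})(n-m)\risk(w,\D)$ and applying the elementary bound $\E[e^{-\eta X}]\leq 1-(1-e^{-\eta})\E[X]\leq e^{-(1-e^{-\eta})\E[X]}$ for $X\in[0,1]$ coordinatewise over the $n-m$ samples, one gets $\E_{S_I}\E_{w\sim P(S_J)}[e^{h(w)}]\leq 1$; here the point is that $P(S_J)$ does not depend on $S_I$, so one may first integrate out $S_I$ for a fixed $w$. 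Markov's inequality then yields $\E_{w\sim P(S_J)}[e^{h(w)}]\leq 1/\delta$ with probability $\geq 1-\delta$ over $S_I$, and rearranging the change-of-measure inequality recovers the display. Note that this stays valid even when $Q=Q(S)$ depends on the whole dataset, since the only randomness entering the Markov step is the MGF term, which involves the $S_I$-independent prior alone.

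Finally I would un-condition: the failure event above has conditional probability at most $\delta$ for \emph{every} value of $(J,S_J)$, so by the tower property its unconditional probability over $S\sim\D^n$ and $J$ is also at most $\delta$, which is exactly the claim.

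I do not expect a serious obstacle here; the one point I would be most careful about is the independence bookkeeping — that $J$ is chosen independently of $S$, that conditioning on $(J,S_J)$ preserves the i.i.d.\ law of $S_I$, and that the posterior $Q$ may depend on all of $S$ (including $S_J$) while the prior $P(\cdot)$ is a function of $S_J$ only. Once this is pinned down, the remainder is Catoni's original proof applied verbatim to the sub-sample $S_I$ of size $n-m$.
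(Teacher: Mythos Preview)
Your proposal is correct and is essentially the same as the paper's proof: both apply Catoni's argument to the sub-sample $S_I$ of size $n-m$ after noting that, once $(J,S_J)$ is fixed, the prior $P(S_J)$ is independent of $S_I\sim\D^{n-m}$. The paper writes this out as a single joint expectation $\E_J\E_{S_J}\E_{w\sim P_J}\E_{S_I}[\cdot]\le 1$ using Catoni's convex function $\Phi$ and the exact Bernoulli MGF, while you phrase it as conditioning on $(J,S_J)$ and use the linearized bound $e^{-\eta X}\le 1-(1-e^{-\eta})X$ for $X\in[0,1]$ --- but these are the same computation organized differently.
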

The proof is almost the same as ~\citet{catoni2007pac}[Theorem 1.2.6]. We prove it here for completeness.
\begin{proof}
For any $\lambda>0$, define $\Phi(x):= -\frac{n-m}{\lambda}\ln(1-(1-e^{-\frac{\lambda}{n-m}})x)$. To simplify notation, let $P_J$ denote $P(S_J)$. The goal is to prove
\begin{equation}\label{eq:catoni-mmt}
\E_{S\sim \D^n, J}\left[\exp\left(\sup_{Q\ll P_J}(\lambda(\Phi (\risk(Q,\D)) - \risk(Q,S_I)) - \KL{Q}{P_J}\right)\right] \leq 1.
\end{equation}
If \eqref{eq:catoni-mmt} holds, we can apply Markov inequality to prove our theorem. Because for any random variable satisfying $\E[e^X] \leq 1$, we have $\Pr[X > \ln({1}/{\delta})] = \Pr[e^X > \frac{1}{\delta}]$ which is less than or equal to $\frac{\E[e^X]}{1/\delta} \leq \delta$. It further implies with probability at least $1-\delta$:
\begin{align}\label{eq:catoni-high-prob}
\Phi(\risk(Q,\D)) \leq \risk(Q,S_I) + \frac{\KL{Q}{P_J} + \ln(1/\delta)}{\lambda}.
\end{align}
Note that $\Phi(x):(0,1)\to (0,1)$ is an increasing function whose inverse is given by 
\begin{align*}
\Phi^{-1}(x) = \frac{1-\exp(-\frac{x\lambda}{n-m})}{1-\exp(-\frac{\lambda}{n-m})}.
\end{align*}
We can compose $\Phi^{-1}$ to both sides of \eqref{eq:catoni-high-prob} and use the basic inequality $1-\exp(-x) \leq x\,\,\,(\forall x>0)$ to prove our theorem (let $\eta = \frac{\lambda}{n-m}$).

It remains to prove \eqref{eq:catoni-mmt}. First it is easy to verify that $\Phi(x)$ is convex when $x \in (0, 1)$. Hence, for any $Q$, we have the following holds by Jensen's inequality:
\begin{align*}
\Phi(\risk(Q,\D)) = \Phi(\E_{w\sim Q}\risk(w,\D)) &\leq \E_{w\sim Q}[\Phi(\risk(w,\D))].
\end{align*}
Define $h(w):=\lambda (\Phi(\risk(w,\D)) - \risk(w,S_I))$. Then, the LHS of \eqref{eq:catoni-mmt} is less than or equal to
\begin{align*}
    &\E_{S,J}\left[\exp\left(\sup_{Q \ll P_J}(\E_{w\sim Q}[h(w)] - \KL{Q}{P_J})\right)\right]\\
    &= \E_{S,J}\left[\exp \left( \ln \E_{w\sim P_J}[\exp(h(w))]\right)\right].
\end{align*}
The last equation is due to Donsker and Varadhan's variational formula~\citep{donsker1983asymptotic}. Moreover, since the $J$ and $S$ are independent, and $S_I$ and $S_J$ are independent, it can be rewritten as
\[\E_J\E_{S_J\sim \D^m}\E_{S_I\sim \D^{n-m}}\E_{w\sim P_J}[\exp(h(w))].\]
Note that $S_I$ is independent of $P_J$. We have the above formula is equal to
\begin{align*}
\E_J\E_{S_J\sim \D^m}\E_{w\sim P_J}\E_{S_I\sim \D^{n-m}}[\exp(h(w))].
\end{align*}
For any fixed $w, J, S_J$, let random sequence $S_I = (z_1,..,z_{n-m})$. We have
\begin{align*}
\E_{S_I\sim \D^{n-m}}[\exp(h(w))] &\leq \prod_{i=1}^{n-m}\E_{z_i \sim \D}\left[\exp\left(\frac{\lambda}{n-m}(\Phi(\risk(w,\D)) - \risk(w,z_i))\right)\right]\\
&= \prod_{i=1}^{n-m} 1.
\end{align*}
We give a detailed proof for the last equation. Suppose $i\in [n]$ is fixed. Let $b$ denote the random variable $\risk(w,z_i)$. Note that $b$ is a Bernoulli random variable with mean $q:=\risk(w,\D)$. Thus, we can directly compute the multiplier term:
\begin{align*}
    &\exp\left(\frac{\lambda}{n-m}(\Phi(\risk(w,\D))\right)\E_{z_i
    \sim \D}\left[\frac{1}{\exp\left(\frac{\lambda}{n-m}\risk(w,z_i)\right)}\right] \\
    &= \frac{1}{1-(1-e^{-\frac{\lambda}{n-m}})q}\cdot \left(q e^{-\frac{\lambda}{n-m} \cdot 1} + (1-q) e^{0}\right)\\
    &=1.
\end{align*}
\end{proof}

\section{Omitted Proofs in Section~\ref{sec:fgd}}\label{app:fgd}

\paragraph{FSGD Proofs.}
We formally define FSGD in Algorithm~\ref{alg:fsgd}.  The only difference is that we sample a mini-batch $B_t$ before each step. Recall that each $B_t$ is a set including $b$ indices uniformly sampled from $[n]$ without replacement.

\begin{algorithm}
\KwIn{Training dataset $S=(z_1,..,z_n)$. Index set $J$. Momentum coefficient $\alpha$.}
\KwResult{Parameter $W_T\in \R^d$.}
\caption{Floored Stochastic Gradient Descent (FSGD) \label{alg:fsgd}}
Initialize $W_0\gets w_0$\;
\For {$t:1\to T$} {
    $B_t \gets$ a random mini-batch with size $n_{\text{batch}}$\;
    $g_1\gets \gamma_t\g f(W_{t-1}, S_{B_t})$\;
    $g_2\gets \gamma_t\g f(W_{t-1}, S_{J\cap B_t})$\;
    $W_t \gets W_{0}^{t-1} + \alpha \cdot \left(W_{t-1} - W_{t-2}\right) - g_2 - \eps_t\cdot \floor((g_1 - g_2)/\eps_t)$\;
}
\end{algorithm}
\begin{ctheorem}{\bf \ref{thm:fsgd}}\,
Suppose $J$ is a random sequence consisting of $m$ indices uniformly sampled from $[n]$ without replacement. Then for any $\delta \in (0,1), \eps\in(0,1)$,
FSGD satisfies the following generalization bound: w.p. at least $1-\delta$ over $S\sim \D^n$ and $J$:
\begin{align*}
\risk(W_T,\D) &\leq \eta C_{\eta}\risk(W_T,S_I) + C_{\eta}\cdot \frac{\ln(1/\delta) + 3}{n-m} +\frac{C_{\eta}\ln(dT)}{n-m}\E_{W_0^T,B_0^T}\left[\sum_{t=1}^T\frac{\gamma_t^2}{\eps_t^2}\norm{\gdiff_t}^2\right],
\end{align*}
where $d$ is the dimension of parameter space,  $I = [n]\backslash J$ includes indices out of $J$, $C_{\eta}:=\frac{1}{1-e^{-\eta}}$ is a constant, and $\gdiff_t:= f(W_{t-1},S_{B_t})-\g f(W_{t-1},S_{J\cap B_t})$ is the gradient difference. 
\end{ctheorem}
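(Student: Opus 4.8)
The plan is to mirror the proof of Theorem~\ref{thm:fgd} essentially verbatim, with the one genuinely new ingredient being the careful handling of the mini-batch randomness $B_1^T$ in the chain-rule expansion of the KL divergence. As in the FGD case, I would build a prior stochastic process $\{W'_0,\dots,W'_T\}$ driven by the same discrete noise $\xi_t$ (with $\Pr[\xi_t = a] \propto \exp(-\ln(1/p)\|a\|^2)$ for $p \in (0,1/3)$) \emph{and} the same mini-batch sequence $B_1^T$ as FSGD, via
\[
W'_t \gets W'_{t-1} + \alpha(W'_{t-1}-W'_{t-2}) - \gamma_t\, \g f(W'_{t-1}, S_{J\cap B_t}) - \eps_t\cdot \xi_t,
\]
and then define $P(S_J)$ to be the marginal law of $W'_T$ after integrating out $B_1^T$ and $\xi_1^T$. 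Since $B_t$ and $\xi_t$ are independent of the data, this marginal is a legitimate function of $S_J$ alone, so Theorem~\ref{thm:data-pac} applies; vanilla (non-momentum) FSGD is the $\alpha=0$ special case.

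The key step is bounding $\KL{W_T}{P(S_J)}$. First I would pass to the augmented sequences $(W_0,B_1,W_1,\dots,B_T,W_T)$ and $(W'_0,B_1,W'_1,\dots,B_T,W'_T)$; projecting onto the last coordinate, the data-processing inequality gives $\KL{W_T}{W'_T}$ at most the KL between these augmented sequences. Applying the chain rule (Lemma~\ref{lem:chain-kl}) to the augmented sequences, the increments coming from each $B_t$ vanish (the batches are drawn from the same distribution, independently of the past and of the data), and the increment coming from $W_t$, conditioned on the past and on $B_t=b$, is a KL between a point mass (FSGD is deterministic given all its randomness) and a shifted copy of the law of $\eps_t\xi_t$ — exactly the situation handled in the FGD proof — yielding $\ln(1/\Pr[\xi_t=a_t(b)])$ with $a_t(b) = \floor\!\bigl(\tfrac{\gamma_t}{\eps_t}(\g f(W_{t-1},S_b)-\g f(W_{t-1},S_{J\cap b}))\bigr)$.

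From there the estimates are identical to Theorem~\ref{thm:fgd}: using $|i|\le i^2$, $\ln(1+x)\le x$, and $|\floor(x)|\le |x|$ coordinatewise, each term is at most $3dp + \ln(1/p)\,\tfrac{\gamma_t^2}{\eps_t^2}\|\g f(W_{t-1},S_{B_t})-\g f(W_{t-1},S_{J\cap B_t})\|^2$. Summing over $t$ and taking expectation over the trajectory and batch randomness $(W_0^T,B_0^T)$ gives $\KL{W_T}{P(S_J)} \le 3dTp + \ln(1/p)\,\E_{W_0^T,B_0^T}\bigl[\sum_{t=1}^T \tfrac{\gamma_t^2}{\eps_t^2}\|\gdiff_t\|^2\bigr]$; choosing $p = 1/(Td)$ makes the first part equal to $3$ and $\ln(1/p)=\ln(dT)$, and plugging into Theorem~\ref{thm:data-pac} yields the claimed bound.

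The main obstacle — and the only place the argument departs from the deterministic FGD case — is the bookkeeping around $B_1^T$: one must check simultaneously that the prior process may \emph{share} the batch indices with FSGD (so the $B_t$-increments in the chain rule are exactly zero) while its \emph{output law} still depends only on $S_J$ (so Theorem~\ref{thm:data-pac} is applicable), and that the data-processing step from the augmented sequence back to $W_T$ is legitimate. Once the augmented-sequence setup is in place, the per-step computation is word-for-word the FGD one, so no new estimates are needed.
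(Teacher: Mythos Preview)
Your proposal is correct and follows essentially the same approach as the paper's proof: the paper likewise builds the prior process using the discrete noise $\xi_t$ together with an independent batch sequence $B'_t$ of the same law, applies the chain rule so that the $B_t$-versus-$B'_t$ KL increments vanish, and then reduces each remaining increment to $\ln(1/\Pr[\xi_t=a])$ exactly as in the FGD proof, finishing with $p=1/(Td)$. The only cosmetic difference is that you pass to a single augmented sequence $(W_0,B_1,W_1,\dots,B_T,W_T)$ and apply the chain rule once, whereas the paper first chain-rules over $W_0^T$ and then, at each step, inserts $B_t$ via a second (nested) chain-rule/data-processing step; the computations are identical.
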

\begin{proof}
The proof is similar to that of Theorem~\ref{thm:fgd}. 
We still use Theorem~\ref{thm:data-pac} to prove our theorem. Let $p$ be an arbitrary real number in $(0,1/3)$. We define $P(S_J)$ as the distribution of $W'_T$ obtained by the following update rule ($W'_0:=w_0$): 
\begin{align*}
W'_t \gets W'_{t-1}+\alpha \cdot \left(W'_{t-1} - W'_{t-2}\right) - \gamma_t \g f(W'_{t-1}, S_{J_\cap B'_t}) - \eps_t\cdot \xi_t,
\end{align*}
where $B'_t$ follows the same distribution as $B_t$, and  $\xi_t$ is a discrete random variable such that for all $(a_1,..,a_d)\in \mathbb{Z}^d$:
\[\Pr[\xi_t = (a_1,...,a_d)^\top]:= \left(\sum_{i=-\infty}^{\infty}p^{i^2}\right)^{-d}\exp\left(-\sum_{k=1}^d \ln(1/p) a_k^2\right).\]
By the chain-rule of KL divergence (Lemma~\ref{lem:chain-kl}), 
we have 
\begin{align*}
\KL{W_T}{W'_T} &\leq \KL{W_0^T}{{W'}_0^T}\\
&=\sum_{t=1}^T\E_{w \sim W_0^{t-1}}\left[\KL{W_t|W_0^{t-1}=w}{W'_t|{W'}_0^{t-1}=w}\right].
\end{align*}
Again by the chain-rule of KL divergence (the sequences are $(W_0^{t-1}, B_t, W_t)$ and $({W'}_0^{t-1}, B'_t, W'_t)$), we have for any $w$:
\begin{align*}
&\KL{W_t|W_0^{t-1}=w}{W'_t|{W'}_0^{t-1}=w} = \KL{B_t|W_0^{t-1}=w}{B'_t|{W'}_0^{t-1}=w}\\
&+ \E_{B\sim B_t}\left[\KL{W_t|(W_0^{t-1},B_t)=(w,B)}{W'_t|({W'}_0^{t-1},B'_t)=(w,B)}\right].
\end{align*}
Note that $\KL{B_t|W_0^{t-1}=w}{B'_t|{W'}_0^{t-1}=w}$ is equal to zero by definition of $P(S_J)$. Moreover, conditioning on $W_0^{t-1}={W'}_0^{t-1}=w$ and $B_t = B'_t = B$, we have the KL divergence between $W_t$ and $W'_t$ is equal to $\ln(1/\Pr[\xi_t=a])$, where $a = \floor(\frac{\gamma_t}{\eps_t}(\g f(w, S_{B_t}) - \g f(w, S_{J\cap B_t})))$. Applying the method used in the proof of Theorem~\ref{alg:fgd}, we can prove 
\[\ln(1/\Pr[\xi_t=a]) \leq 3dp + \frac{\ln(1/p)\gamma_t^2}{\eps_t^2}\norm{\g f(w, S_{B}) - \g f(w, S_{J\cap B})}_2^2.\]
Thus, the KL between posterior $W_T$ and prior $W'_T$ satisfies:
\begin{align*}
\KL{W_T}{W'_T} \leq 3Tdp + \frac{\ln(1/p)}{\eps_t^2}\sum_{t=1}^T\gamma_t^2\E\norm{\g f(W_{t-1}, S_{B_t}) - \g f(W_{t-1}, S_{J \cap B_t})}_2^2.
\end{align*}
We conclude our proof by plugging it into Theorem~\ref{thm:data-pac} (setting $p=1/(Td)$). 
\end{proof}
\section{Omitted Proofs in Section~\ref{sec:ld}}\label{app:gld}

\begin{clemma}{\bf \ref{lem:gld-grad-con}}\, Let $S = (z_1,...z_n)$ be any fixed training set. $J$ is a random sequence including $m$ indices uniformly sampled from $[n]$ without replacement, and $W=(W_0,...,W_T)$ is any random sequence independent of $J$. Then the following bound holds with probability at least $1-\delta$ over the randomness of $J$:
\[\E_W\left[\sum_{t=1}^T\frac{\gamma_t^2}{\sigma_t^2}\norm{\g f(W_{t-1},S) - \g f(W_{t-1},S_J)}^2\right] \leq \frac{C_{\delta}}{m}\E_W\left[\sum_{t=1}^T\frac{\gamma_t^2}{\sigma_t^2}L(W_{t-1})^2\right],\]
where $C_{\delta} = 4+ 2\ln(1/\delta) + 5.66\sqrt{\ln(1/\delta)}$, and $L(w) = \max_{i\in[n]}\norm{\g f(w,z_i)}$.
\end{clemma}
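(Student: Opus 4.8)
The plan is to regard the left-hand side as a scalar function $\Psi(J):=\E_W[\sum_{t=1}^T(\gamma_t/\sigma_t)^2\,\norm{\g f(W_{t-1},S)-\g f(W_{t-1},S_J)}^2]$ of the random index set $J$, bound its mean, and then prove concentration around the mean — the delicate point being that the concentration must be obtained at a rate matching the target $1/m$ decay. Write $M:=\E_W[\sum_{t=1}^T(\gamma_t/\sigma_t)^2 L(W_{t-1})^2]$; the goal is $\Psi(J)\le \frac{C_\delta}{m}M$. Since $\g f(w,S_J)=\frac1m\sum_{j\in J}\g f(w,z_j)$ depends on $J$ only as a set, $\Psi$ is order-independent. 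For fixed $w$ set $u_i(w):=\g f(w,z_i)-\g f(w,S)$, so $\sum_{i=1}^n u_i(w)=0$ and $\g f(w,S)-\g f(w,S_J)=-\frac1m\sum_{j\in J}u_j(w)$. Using the closed-form second moment for sampling $m$ indices uniformly without replacement (whose cross terms are negative since $\sum_i u_i(w)=0$), one obtains $\E_J\norm{\g f(w,S)-\g f(w,S_J)}^2=\frac{n-m}{mn(n-1)}\sum_{i}\norm{u_i(w)}^2\le\frac1m\cdot\frac1n\sum_i\norm{u_i(w)}^2\le\frac{L(w)^2}{m}$, where the last step uses $\sum_i\norm{u_i(w)}^2=\sum_i\norm{\g f(w,z_i)}^2-n\norm{\g f(w,S)}^2\le nL(w)^2$. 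Averaging over $W$ with the weights $(\gamma_t/\sigma_t)^2$ gives $\E_J[\Psi(J)]\le M/m$, hence $\E_J[\sqrt{\Psi(J)}]\le\sqrt{M/m}$ by concavity of the square root.

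The key step is the concentration. Applying Lemma~\ref{lem:new-mcd} directly to $\Psi$ is too lossy: changing one index of $J$ moves $\Psi$ by $\Theta(M/m)$, so the lemma only produces a deviation of order $M/\sqrt m$, which dominates the mean $M/m$ and makes the high-probability statement vacuous. Instead I would apply Lemma~\ref{lem:new-mcd} to $\sqrt{\Psi(J)}$. Regard $\sqrt{\Psi(J)}=\frac1m\norm{\sum_{j\in J}U_j}_{\mathcal H}$, where $\mathcal H$ is the Hilbert space of trajectory-indexed fields $h=(h_1,\dots,h_T)$ with $h_t$ a function of $(w_0,\dots,w_{t-1})$ and inner product $\langle h,h'\rangle_{\mathcal H}=\sum_{t=1}^T(\gamma_t/\sigma_t)^2\,\E_{W_0^{t-1}}\langle h_t,h'_t\rangle$, and $U_i\in\mathcal H$ has $t$-th component $(w_0,\dots,w_{t-1})\mapsto \g f(w_{t-1},z_i)-\g f(w_{t-1},S)$. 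This is order-independent in $J$, and for adjacent $J,J'$ differing in one index ($a\in J$, $b\in J'$) we get $|\sqrt{\Psi(J)}-\sqrt{\Psi(J')}|\le\frac1m\norm{U_a-U_b}_{\mathcal H}$; the crucial observation is that the $\g f(\cdot,S)$ terms cancel in $U_a-U_b$, whose $t$-component is just $\g f(W_{t-1},z_a)-\g f(W_{t-1},z_b)$, so $\norm{U_a-U_b}_{\mathcal H}^2\le 4M$ and the bounded-difference constant is $c=2\sqrt M/m$.

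Lemma~\ref{lem:new-mcd} then gives, with probability at least $1-\delta$ over $J$, $\sqrt{\Psi(J)}\le\sqrt{M/m}+c\sqrt{m\ln(1/\delta)/2}=\sqrt{M/m}\,(1+\sqrt{2\ln(1/\delta)})\le\sqrt{M/m}\,(2+\sqrt{2\ln(1/\delta)})$; squaring yields $\Psi(J)\le\frac{M}{m}(2+\sqrt{2\ln(1/\delta)})^2=\frac{M}{m}\bigl(4+4\sqrt2\,\sqrt{\ln(1/\delta)}+2\ln(1/\delta)\bigr)$, which is the claimed bound with $C_\delta=4+2\ln(1/\delta)+5.66\sqrt{\ln(1/\delta)}$. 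The main obstacle is precisely the rate mismatch noted above: because the target rate $1/m$ is of the same order as the coordinatewise sensitivity of $\Psi$, no bounded-difference argument can be run on $\Psi$ itself; one must recognize $\Psi(J)$ as the squared Hilbert-space norm of an average-without-replacement and apply concentration to its square root, so that the square root of the mean and the deviation are comparable and squaring costs only a $\log(1/\delta)$ factor. The only other point requiring care is the bounded-difference constant, where exploiting the cancellation of the full-sample gradient in $U_a-U_b$ is what keeps $c$ at $2\sqrt M/m$ rather than $4\sqrt M/m$.
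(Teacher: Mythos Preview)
Your proposal is correct and follows essentially the same route as the paper: apply Lemma~\ref{lem:new-mcd} to the \emph{square root} $\Phi(J)=\sqrt{\Psi(J)}$, obtain the bounded-difference constant $c=2\sqrt{M}/m$, bound $\E_J[\Phi(J)]$ via a sampling-without-replacement variance computation, and square at the end. Your Hilbert-space framing (triangle inequality for $\norm{\cdot}_{\mathcal H}$) is exactly the paper's Cauchy--Schwarz expansion of $\norm{U_t+V_t}^2$ rewritten more abstractly; and your explicit discussion of \emph{why} the bounded-difference argument must be run on $\sqrt{\Psi}$ rather than on $\Psi$ is a point the paper leaves implicit. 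One minor difference: your variance bound $\E_J\norm{\g f(w,S)-\g f(w,S_J)}^2\le L(w)^2/m$ (via $\sum_i\norm{u_i}^2=\sum_i\norm{\g f(w,z_i)}^2-n\norm{\g f(w,S)}^2\le nL(w)^2$) is a factor $4$ tighter than the paper's $4L(w)^2/m$, which is why you reach $(1+\sqrt{2\ln(1/\delta)})^2$ before loosening to match the stated $C_\delta$.
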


\begin{proof}
The idea is to prove a concentration bound for the following function $\Phi$ via a modified McDiarmid inequality (Lemma~\ref{lem:new-mcd}). Define function $\Phi:[n]^m \to \R^+$ as follows:
\begin{align*}
    \Phi(J) := \sqrt{\E_W\sum_{t=1}^T\frac{\gamma_t^2}{\sigma_t^2}\norm{\g f(W_{t-1},S) - \g f(W_{t-1},S_J)}^2}.
\end{align*}
Let $J$ and $J'$ be any two ``neighboring'' sequences satisfying $J \cap J' = m - 1$. It easy to verify that $\Phi(J)$ is order-independent. 
Define $U_t = \g f(W_{t-1},S) - \g f(W_{t-1}, S_J)$ and $V_t = \g f(W_{t-1},S_J) - \g f(W_{t-1},S_{J'})$. Note that $W$ is independent of $J$. We can prove an upper bound for $\Phi(J') - \Phi(J)$.
\begin{align*}
    \Phi(J')^2 &:=\E_W\sum_{t=1}^T\frac{\gamma_t^2}{\sigma_t^2}\norm{U_t + V_t}^2\\
    &= \E_W\sum_{t=1}^T\frac{\gamma_t^2}{\sigma_t^2}(U_t^{\top}U_t + V_t^{\top}V_t) + 2\E_W\sum_{t=1}^T\frac{\gamma_t^2}{\sigma_t^2}U_t^{\top}V_t\\
    &\leq \E_W\sum_{t=1}^T\frac{\gamma_t^2}{\sigma_t^2}(U_t^{\top}U_t + V_t^{\top}V_t) + 2\sqrt{\E_W\sum_{t=1}^T\frac{\gamma_t^2}{\sigma_t^2}U_t^{\top}U_t}\sqrt{\E_W\sum_{t=1}^T\frac{\gamma_t^2}{\sigma_t^2}V_t^{\top}V_t}\\
    &=\left(\sqrt{\E_W\sum_{t=1}^T\frac{\gamma_t^2}{\sigma_t^2}\norm{U_t}^2} + \sqrt{\E_W\sum_{t=1}^T\frac{\gamma_t^2}{\sigma_t^2}\norm{V_t}^2}\right)^2\\
    &\leq \left(\Phi(J) + \frac{2}{m}\sqrt{\E_W\left[\sum_{t=1}^T\frac{\gamma_t^2}{\sigma_t^2}L(W_{t-1})^2\right]}\right)^2.
\end{align*}
The last inequality holds because $J$ and $J'$ only differ in one element. Thus $\norm{V_t}^2 \leq \frac{1}{m^2}L(W_{t-1})^2$ holds for any $W$ and $t$.
It implies $\Phi(J') \leq \Phi(J) + \frac{2}{m}\sqrt{\E_W\left[\sum_{t=1}^T\frac{\gamma_t^2}{\sigma_t^2}L(W_{t-1})^2\right]}$. Similarly, we can prove $\Phi(J) \leq \Phi(J') + \frac{2}{m}\sqrt{\E_W\left[\sum_{t=1}^T\frac{\gamma_t^2}{\sigma_t^2}L(W_{t-1})^2\right]}$. Thus we have the following holds for any $J,J'$ differing in one element:
\[|\Phi(J) - \Phi(J')| \leq \frac{2}{m}\sqrt{\E_W\left[\sum_{t=1}^T\frac{\gamma_t^2}{\sigma_t^2}L(W_{t-1})^2\right]}.\]
Applying Lemma~\ref{lem:new-mcd}, we have for any $\epsilon > 0$:
\begin{align}\label{eq:gld-mcd}
    \Pr_J\left[\Phi(J)^2 \geq (\epsilon + \E_{J}[\Phi(J)])^2\right] 
    &= \Pr_J\left[\Phi(J) - \E_{J}[\Phi(J)] \geq \epsilon \right]\notag\\
    &\leq \exp\left(\frac{-2m\epsilon^2}{4\E_W\left[\sum_{t=1}^T\frac{\gamma_t^2}{\sigma_t^2}L(W_{t-1})^2\right]}\right).
\end{align}
It remains to control the expectation:
\begin{align*}
\E_{J}[\Phi(J)] &= \E_{J}\sqrt{\E_W\left[\sum_{t=1}^T\frac{\gamma_t^2}{\sigma_t^2}\norm{\g f(W_{t-1},S) - \g f(W_{t-1},S_J)}^2\right]}\\
&\leq \sqrt{\E_W\left[\sum_{t=1}^T\frac{\gamma_t^2}{\sigma_t^2}\E_{J}\norm{\g f(W_{t-1},S) - \g f(W_{t-1},S_J)}^2\right]} \tag{$W\perp J$} 
% \\
% &\leq \sqrt{\E_W\left[\sum_{t=1}^T\frac{\gamma_t^2}{\sigma_t^2}\frac{4 L(W_{t-1})^2}{m}\right]}.
\end{align*}
For any fixed $W=(W_0,..,W_T)$ and $t\leq T$, we define $g[i]:=\g f(W_{t-1},S) - \g f(W_{t-1}, z_i)$. Let $J = (J_1,...,J_m)$. We bound the variance of $\g f(W_{t-1}, S_J)$ as follows:
\begin{align*}
\E_{J}&\norm{\g f(W_{t-1},S) - \g f(W_{t-1},S_J)}^2 = \E_{J}\left[\left(\frac{1}{m}\sum_{i=1}^m g[J_i]\right)^\top \left(\frac{1}{m}\sum_{i=1}^m g[J_i]\right)\right]\\
&= \frac{1}{m^2} \sum_{i=1}^m\sum_{j=1}^m \E_{J_i,J_j}[g[J_i]^{\top} g[J_j]]\\
&=\frac{m}{m^2}\E_{J_1}[\norm{g[J_1]}^2] + \frac{m(m-1)}{m^2}\E_{J_1,J_2}[g[J_1]^\top g[J_2]]\\
&=\frac{1}{m}\E_{J_1}[\norm{g[J_1]}^2] + \frac{m-1}{m n(n-1)}\sum_{i=1}^n\sum_{j\neq i}[g[i]^\top g[j]]\\
&=\frac{1}{m}\E_{J_1}[\norm{g[J_1]}^2] + \frac{m-1}{m n(n-1)}\left(\sum_{i=1}^n\sum_{j=1 }^n[g[i]^\top g[j]]- \sum_{i=1}^n g[i]^\top g[i]\right)\\
&\leq \frac{4L(W_{t-1})^2}{m}. \tag{$\sum_{i=1}^n g[i] = 0$ and $g[i]^\top g[i] \geq 0$}
\end{align*}
Therefore, we have
\[\E_J[\Phi(J)] \leq \sqrt{\frac{4}{m}\E_{W}\left[\sum_{t=1}^T\frac{\gamma_t^2}{\sigma_t^2}L(W_{t-1})^2\right]}.\]
Plugging the above inequality into \eqref{eq:gld-mcd} and replacing $\epsilon$ with $\sqrt{\frac{\ln(1/\delta)4\E_{W}\left[\sum_{t=1}^T \frac{\gamma_t^2}{\sigma_t^2} L(W_{t-1})^2\right]}{2m}}$, we conclude that:
\begin{align*}
\Pr_J\left[\Phi(J)^2 \leq \frac{4+ 2\ln(1/\delta) + 5.66\sqrt{\ln(1/\delta)}}{m}\E_{W}\left[\sum_{t=1}^T\frac{\gamma_t^2}{\sigma_t^2}L(W_{t-1})^2\right]\right] \geq   1-\delta.
\end{align*}
\end{proof}
\begin{ctheorem}{\bf\ref{thm:gld-bound}}\,
Suppose $J$ is a random sequence consisting for $m$ indices uniformly sampled from $[n]$ without replacement. Let $W_T$ be the output of \ref{eq:gld}. Then for any $\delta \in (0, \frac{1}{2})$ and $\eta > 0$, we have w.p. $\geq 1-2\delta$ over $S\sim \D^n$ and $J$, the following holds:
\begin{align*}
\risk(W_T,\D) \leq \eta C_{\eta} \risk(W_T,S_{[n]\backslash J}) &+  \frac{C_{\eta}\ln(1/\delta)}{n-m} + \frac{C_{\eta}C_{\delta} }{2(n-m)m}\E_W\left[\sum_{t=1}^T\frac{\gamma_t^2}{\sigma_t^2}L(W_{t-1})^2\right],
\end{align*}
where $L(w):=\max_{z \in S}\norm{f(w,z)}$, $C_{\delta} = 4+ 2\ln(1/\delta) + 5.66\sqrt{\ln(1/\delta)}$, and $C_{\eta} = \frac{1}{1-e^{-\eta}}$.
\end{ctheorem}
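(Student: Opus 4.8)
The plan is to apply the data-dependent PAC-Bayesian bound (Theorem~\ref{thm:data-pac}) with a carefully constructed continuous (Gaussian-path) prior, and then combine it with the gradient-difference concentration bound (Lemma~\ref{lem:gld-grad-con}). First I would fix the training set $S$ and the random index subset $J$, and define the prior $P(S_J)$ as the distribution of $W'_T$, where $(W'_t)$ is the ``ghost'' Langevin process run with the gradient of the prior subset only:
\[
W'_{t} \gets W'_{t-1} - \tfrac{\gamma_t}{m}\textstyle\sum_{z\in S_J}\g f(W'_{t-1},z) + \sigma_t\,\N(0,I_d),
\qquad W'_0 := w_0 .
\]
Crucially, this process depends only on $S_J$, so $P(S_J)$ is a legitimate data-dependent prior in the sense of Theorem~\ref{thm:data-pac}. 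The posterior $Q$ is the distribution of $W_T$ from \ref{eq:gld}, which uses the full-sample gradient $\g f(\cdot,S)$.

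The next step is to bound $\KL{W_T}{P(S_J)}$. Since both processes are Markov chains driven by i.i.d.\ Gaussian noise, I would apply the chain rule of KL (Lemma~\ref{lem:chain-kl}) to the sequences $W_0^T$ and $(W')_0^T$; because the initializations agree ($W_0 = W'_0 = w_0$) the $t=0$ term vanishes, and each conditional term is the KL between two Gaussians with the same covariance $\sigma_t^2 I_d$ but means differing exactly by $\gamma_t\big(\g f(W_{t-1},S) - \g f(W_{t-1},S_J)\big) = \gamma_t\gdiff_t$. That KL equals $\tfrac{1}{2\sigma_t^2}\|\gamma_t\gdiff_t\|^2$, so
\[
\KL{W_T}{P(S_J)} \le \tfrac12\,\E_{W_0^T}\Big[\textstyle\sum_{t=1}^T \tfrac{\gamma_t^2}{\sigma_t^2}\,\big\|\g f(W_{t-1},S) - \g f(W_{t-1},S_J)\big\|^2\Big].
\]
(Here I'd be slightly careful: the expectation inside $\KL{W_0^T}{(W')_0^T}$ is over the posterior trajectory $W_0^{t-1}$, which is what appears on the right-hand side.) Then Lemma~\ref{lem:gld-grad-con}, applied with the posterior sequence $W=(W_0,\dots,W_T)$ (which is independent of $J$ since \ref{eq:gld} never uses $J$), upgrades this to
\[
\KL{W_T}{P(S_J)} \le \tfrac{C_\delta}{2m}\,\E_{W_0^T}\Big[\textstyle\sum_{t=1}^T \tfrac{\gamma_t^2}{\sigma_t^2} L(W_{t-1})^2\Big]
\]
with probability at least $1-\delta$ over $J$.

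Finally I would plug this into Theorem~\ref{thm:data-pac} (instantiated with the given $\eta$ and with failure probability $\delta$), obtaining a bound that holds with probability at least $1-\delta$ over $S\sim\D^n$ and $J$. A union bound over the two bad events (the Theorem~\ref{thm:data-pac} event and the Lemma~\ref{lem:gld-grad-con} event) gives the overall $1-2\delta$ guarantee, and dividing the KL term by $n-m$ and absorbing constants into $C_\eta$ yields exactly the stated inequality with the $\tfrac{C_\eta C_\delta}{2(n-m)m}$ coefficient. The main obstacle I anticipate is bookkeeping around the order of the expectations and the independence structure: one must verify that conditioning on $W_0^{t-1}$ in the chain rule genuinely leaves a single-step Gaussian whose only mismatch is the gradient term (the noise variances cancel), and that the posterior trajectory is independent of $J$ so that Lemma~\ref{lem:gld-grad-con} applies verbatim — both are true because \ref{eq:gld} never references $J$, but this is exactly the place where an analogous argument for FGD would break, so it deserves care.
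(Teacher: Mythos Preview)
Your proposal is correct and matches the paper's proof essentially step for step: the same ghost Langevin prior on $S_J$, the same chain-rule reduction to one-step Gaussian KLs yielding $\tfrac{\gamma_t^2}{2\sigma_t^2}\|\g f(W_{t-1},S)-\g f(W_{t-1},S_J)\|^2$, the same application of Lemma~\ref{lem:gld-grad-con} (leveraging that GLD is independent of $J$), and the same union bound to combine the two $\delta$ events. The only cosmetic difference is that no constants need to be ``absorbed'' into $C_\eta$; the factor $\tfrac{C_\eta C_\delta}{2(n-m)m}$ falls out directly from plugging the KL bound into Theorem~\ref{thm:data-pac}.
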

\begin{proof} {\bf of Theorem~\ref{thm:gld-bound}}
We use Theorem~\ref{thm:data-pac} to prove our theorem. The prior process is defined below.
\begin{align*} 
    W'_{t} \gets W'_{t-1} - \gamma_t \g f(W'_{t-1}, S_J) + \sigma_t \N(0, I_d).
\end{align*}
Then $P(S_J)$ is defined by the distribution of $W'_T$. The key is to bound the kl-divergence $\KL{W_T}{W'_T}$. Applying chain-rule of kl, we have
\begin{align*}
    \KL{W_T}{W'_T} &\leq \sum_{t=1}^T\E_{w\sim W_{t-1}}[\KL{W_t|W_{t-1}=w}{W'_t|W'_{t-1}=w}].
\end{align*}
Note that $\KL{W_t|W_{t-1}=w}{W'_t|W'_{t-1}=w}$ is equal to $\KL{\N(\mu,\sigma_t^2 I)}{\N(\mu',\sigma_t^2 I)}$, where $\mu = w - \gamma_t \g f(w,S)$ and $\mu' = w - \gamma_t \g f(w,S_J)$. One can directly compute the kl divergence of these two gaussian distributions (see e.g., \citet[Section 9]{duchi2007derivations}) to obtain
\[\KL{W_t|W_{t-1}=w}{W'_t|W'_{t-1}=w} = \frac{\norm{\mu-\mu'}^2}{2\sigma_t^2} = \frac{\gamma_t^2}{2\sigma_t^2}\norm{\g f(w,S) - \g f(w,S_J)}^2.\]
Putting this together, we have
\begin{align*}
\KL{W_T}{W'_T} \leq \E_{W}\left[\sum_{t=1}^T\frac{\gamma_t^2}{2\sigma_t^2}\norm{\g f(w,S) -\g f(w,S_J)}^2\right].
\end{align*}
Recall that $W=(W_1,..,W_T)$ is the training trajectory w.r.t. $S$. By Lemma~\ref{lem:gld-grad-con}, we can infer that w.p. at least $1-\delta$ over $J$, the above term is at most
\[\frac{C_{\delta}}{2m}\E_W\left[\sum_{t=1}^T\frac{\gamma_t^2}{\sigma_t^2}L(W_{t-1})^2\right].\]
We conclude our proof by using an union bound over $S$ and $J$.
\end{proof}

\begin{ctheorem}{\bf \ref{thm:sgld-bound}}\,
Let $W_T$ be the output of \ref{eq:sgld} when the training set is $S$, and $J$ be a random sequence with $m$ indices uniformly sampled from $[n]$ without replacement. For any $\delta \in (0, 1)$ and $m\geq 1$, we have w.p. $\geq 1-2\delta$ over $S\sim \D^n$ and $J$, the following holds:
\begin{align*}
\risk(W_T,\D) \leq \eta C_{\eta} \risk(W_T,S_I) +  \frac{C_{\eta}\ln(1/\delta)}{n-m} +
\frac{C_{\eta}}{n-m}\left(\frac{4}{b} + \frac{C_{\delta}}{2m}\right) \E_{W_0^T}\left[\sum_{t=1}^T\frac{\gamma_t^2}{\sigma_t^2}L(W_{t-1})^2\right],
\end{align*}
where $L(w):=\max_{z \in S}\norm{f(w,z)}$, $C_{\delta} = 4+ 2\ln(1/\delta) + 5.66\sqrt{\ln(1/\delta)}$, $C_{\eta}=\frac{1}{1-e^{-\eta}}$, $b$ is the batch size, and $I = [n]\backslash J$.
\end{ctheorem}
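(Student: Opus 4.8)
The plan is to mirror the proof of Theorem~\ref{thm:gld-bound}, augmenting it to absorb the extra mini-batch randomness of \ref{eq:sgld} in the spirit of the FSGD proof (Theorem~\ref{thm:fsgd}). I would apply Theorem~\ref{thm:data-pac} with posterior $Q$ equal to the law of $W_T$, and take as the data-dependent prior $P(S_J)$ the law of $W'_T$ generated by $W'_t \gets W'_{t-1} - \gamma_t \g f(W'_{t-1}, S_J) + \sigma_t \N(0, I_d)$ with $W'_0 = w_0$. This prior process uses the \emph{deterministic} average gradient over the whole subset $S_J$ (no mini-batch), so it depends only on $S_J$ and fresh Gaussian noise; to line the two trajectories up for the chain rule I would also let the prior draw independent $B'_t \sim \uni([n])^b$ that it simply ignores.

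The core is to bound $\KL{W_T}{P(S_J)}$. Interleaving the batch variables, I would apply the chain rule of KL (Lemma~\ref{lem:chain-kl}) to the sequences $(W_0, B_1, W_1, \ldots, B_T, W_T)$ and $(W'_0, B'_1, W'_1, \ldots, B'_T, W'_T)$. The KL of the $B_t$-steps vanishes since $B_t$ and $B'_t$ share the same law independent of the past; conditioned on $W_{t-1}=w$ and $B_t=B'_t=B$, both one-step conditionals are isotropic Gaussians with common covariance $\sigma_t^2 I$, so their KL is $\tfrac{\gamma_t^2}{2\sigma_t^2}\norm{\g f(w, S_B) - \g f(w, S_J)}^2$. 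Writing $\g f(w, S_B) - \g f(w, S_J) = \bigl(\g f(w, S_B) - \g f(w, S)\bigr) + \bigl(\g f(w, S) - \g f(w, S_J)\bigr)$ and taking $\E_{B_t}$, the cross term cancels because the first summand is mean-zero over the fresh draw of $B_t$ while the second does not depend on $B_t$ (it is a function of $W_{t-1}$, which is measurable w.r.t.\ $B_0^{t-2}$); hence $\E_{B_t}\norm{\cdot}^2$ splits into the mini-batch gradient variance, bounded by an $O(L(w)^2/b)$ term via the standard with-replacement variance estimate (this is where the $\tfrac{4}{b}$ in the statement comes from, after tracking the constant), plus $\norm{\g f(w, S) - \g f(w, S_J)}^2$. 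Summing over $t$ and taking $\E_{W_0^T}$ gives $\KL{W_T}{P(S_J)} \le \tfrac{4}{b}\E_W\bigl[\sum_t \tfrac{\gamma_t^2}{\sigma_t^2} L(W_{t-1})^2\bigr] + \tfrac12 \E_W\bigl[\sum_t \tfrac{\gamma_t^2}{\sigma_t^2}\norm{\g f(W_{t-1},S) - \g f(W_{t-1},S_J)}^2\bigr]$.

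Next I would invoke Lemma~\ref{lem:gld-grad-con}, which is legitimate because the \ref{eq:sgld} trajectory $W=(W_0,\ldots,W_T)$ is a function of $S$ and $B_0^{T-1}$ and is therefore independent of $J$: with probability $\ge 1-\delta$ over $J$ the gradient-difference sum is at most $\tfrac{C_\delta}{m}\E_W\bigl[\sum_t \tfrac{\gamma_t^2}{\sigma_t^2}L(W_{t-1})^2\bigr]$. Combining, $\KL{W_T}{P(S_J)} \le \bigl(\tfrac{4}{b} + \tfrac{C_\delta}{2m}\bigr)\E_W\bigl[\sum_t \tfrac{\gamma_t^2}{\sigma_t^2}L(W_{t-1})^2\bigr]$ on that event. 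Plugging this into Theorem~\ref{thm:data-pac} and taking a union bound over its own failure event (probability $\delta$ over $S,J$) and the Lemma~\ref{lem:gld-grad-con} event (probability $\delta$ over $J$) yields the stated bound with probability $\ge 1-2\delta$, with $C_\eta = 1/(1-e^{-\eta})$ appearing exactly as in Catoni's argument.

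I expect the main obstacle to be bookkeeping rather than a deep difficulty: one must apply the chain rule to the correctly interleaved sequence so that the mini-batch contributes nothing to the KL, and invoke the mean-zero cancellation of the cross term with the right conditioning (the mini-batch $B_t$ is fresh relative to $W_{t-1}$). A secondary design point is keeping the prior dependent only on $S_J$ — using the full average gradient on $S_J$ rather than $S_{J\cap B_t}$ both avoids the degenerate empty-intersection case and keeps the $1/m$ term in precisely the form handled by Lemma~\ref{lem:gld-grad-con}. Tracking the exact constant in front of $1/b$ is the only genuinely delicate numerical step.
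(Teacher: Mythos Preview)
Your proposal is correct, but it takes a genuinely different route from the paper's own proof. The paper defines the prior $P(S_J)$ as \emph{SGLD} run on the subset $S_J$, i.e.\ $W'_{t+1}\gets W'_t-\gamma_t\g f(W'_t,S_{B'_t})+\sigma_t\N(0,I_d)$ with fresh mini-batches $B'_t$ drawn from $J$; it then bounds the one-step KL by joint convexity, $\KL{\E_{B_t}q^{B_t}}{\E_{B'_t}p^{B'_t}}\le \E_{B_t,B'_t}\KL{q^{B_t}}{p^{B'_t}}$, and decomposes $\g f(w,S_{B_t})-\g f(w,S_{B'_t})$ into three pieces $a+b+c$ with $a=\g f(w,S_{B_t})-\g f(w,S)$, $b=\g f(w,S_J)-\g f(w,S_{B'_t})$, $c=\g f(w,S)-\g f(w,S_J)$. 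The two independent mean-zero mini-batch terms $a,b$ each contribute a $4L(w)^2/b$ variance, which after the $\tfrac12$ from the Gaussian KL yields the $4/b$ in the statement. Your prior is instead \emph{GLD} on $S_J$ (deterministic full gradient over $S_J$, no prior mini-batches), and you linearize the chain rule over the interleaved sequence rather than invoking joint convexity. This eliminates the $b$-term entirely, so your decomposition has only two pieces and your honest constant is $2/b$, not $4/b$; your bound is therefore strictly tighter than the theorem as stated, and your remark that ``this is where the $\tfrac{4}{b}$ comes from'' undersells your own argument. Both approaches feed the surviving $\norm{\g f(W_{t-1},S)-\g f(W_{t-1},S_J)}^2$ term into Lemma~\ref{lem:gld-grad-con} and finish with the same union bound. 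Your choice of prior is arguably cleaner (no empty-intersection or reindexing issues, no need for joint convexity), at the cost of being slightly less ``symmetric'' with the posterior.
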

\begin{proof} Similar to Theorem~\ref{thm:gld-bound}, we  Theorem~\ref{thm:data-pac} to bound the generalization by the KL from the posterior to a data-dependent prior $P(S_J)$.
We define the prior distribution $P(S_J)$ as the output distribution of \ref{eq:sgld} trained on $S_J$. Formally, it is the distribution of $W'_T$ defined below:
\begin{align*}
  W'_{t+1} \gets W'_{t} - \gamma_t \g f(W_{t}, S_{B'_t}) + \sigma_t \N(0,I_d), 
\end{align*}
where $B'_t \sim \uni([n])^b$ is the mini-batch indices at step $t$. It is independent of other random variables including $W_{0}^{t-1}, {W'}_{0}^{t-1}$, $B_0^{t-1}$ and ${B'}_0^{t-1}$.
For any fixed $S$,
let $W$ and $W'$ be the training trajectory of posterior and prior, respectively. By the chain-rule of KL-divergence, we have
\begin{equation}\label{eq:sgld-kl-chain}
\KL{W_T}{P(S_J)} \leq \sum_{t=1}^{T}\E_{w_{t-1}\sim W_{t-1}}[\KL{W_t |W_{t-1} = w_{t-1}}{W'_t | W'_{t-1} = w_{t-1}}].
\end{equation}
For any fixed $w_{t-1}$, let $q$ and $p$ be the pdfs of $W_t|W_{t-1} = w_{t-1}$ and $W'_t|W'_{t-1} = w_{t-1}$, respectively. By the definition of \ref{eq:sgld}, we have $q = \E_{B_t}[q^{B_t}]$ and $p = \E_{B'_t}[p^{B'_t}]$, where 
\[q^{B_t} = \N(w_{t-1} - \gamma_t\g f(w_{t-1}, B_t),\sigma_t^2 I),\]
\[p^{B'_t} = \N(w_{t-1} - \gamma_t\g f(w_{t-1}, B'_t),\sigma_t^2 I).\]
By the convexity of KL-divergence, we can apply Jensen's inequality to obtain
\begin{align*}
\KL{q}{p} &= \KL{\E_{B_t,B'_t}[q^{B_t}]}{\E_{B_t,B'_t}[p^{B'_t}]}\\
&\leq \E_{B_t,B'_t}\left[\KL{q^{B_t}}{p^{B'_t}}\right]\\
&\leq \E_{B_t,B'_t}\left[\frac{\gamma_t^2\norm{\g f(w_{t-1}, B_t) - \g f(w_{t-1}, B'_t)}_2^2}{2\sigma_t^2}\right].
\end{align*}
For convenience, we define $g(A) := \frac{1}{|A|}\sum_{z\in A}\g f(w_{t-1}, z)$ for any $A \subseteq S$. Moreover, let $a$, $b$ and $c$ be $g(S_{B_t}) - g(S)$, $g(S_J) - g(S_{B'_t})$, and $g(S) - g(S_J)$, respectively. Then we can rewrite the above inequality as
\begin{align*}
\KL{q}{p} &\leq \frac{\gamma_t^2}{2\sigma_t^2}\E_{B_t,B'_t}[\norm{g(S_{B_t}) - g(S_{B'_t})}_2^2]\\
&= \frac{\gamma_t^2}{2\sigma_t^2}\E_{B_t,B'_t}[\norm{g(S_{B_t}) - g(S) + g(S_J) - g(S_{B'_t}) + g(S) - g(S_J)}_2^2]\\
&\leq \frac{\gamma_t^2}{2\sigma_t^2}\E_{B_t,B'_t}[\norm{a + b + c}_2^2]\\
&\leq \frac{\gamma_t^2}{2\sigma_t^2}\E_{B_t,B'_t}[a^{\top}a + a^{\top}(b+c) + b^{\top}b + b^{\top}(a+c) + c^{\top}c + c^{\top}(a + b)]\\
&=\frac{\gamma_t^2}{2\sigma_t^2}\E_{B_t,B'_t}[a^{\top}a + b^{\top}b + c^{\top}c].\\ 
\end{align*}
The last step is because $a = g(S_{B_t}) - g(S)$ is independent of $b=g(S_J) - g(S_{B'_t})$ and $\E[a] = \E[b] = 0$. Note that $\E[a^{\top}a] = \Var[g(S_{B_t})]$ is at most $\frac{4L(w_{t-1})^2}{b}$. Similarly, we can show that $\E[b^{\top}b] \leq \frac{4L(w_{t-1})^2}{b}$. Since $c^{\top}c$ is a constant when $S$ and $w_{t-1}$ is fixed, we have the following bound:
\begin{align*}
\KL{q}{p} \leq \frac{\gamma_t^2}{2\sigma_t^2}\left(\frac{8L(w_{t-1})^2}{b} + \norm{\g f(w_{t-1}, S) - \g f(w_{t-1}, S_J)}_2^2\right).
\end{align*}
Plugging the above inequality into \eqref{eq:sgld-kl-chain}, we have
\begin{align*}
\KL{W}{W'} \leq \sum_{t=1}^T\E_{w_{t-1}\sim W_{t-1}}&\left[\frac{4\gamma_t^2L(w_{t-1})^2}{b\sigma_t^2} + \frac{\gamma_t^2}{2\sigma_t^2}\norm{\g f(w_{t-1}, S) - \g f(w_{t-1}, S_J)}_2^2\right].
\end{align*}
Lemma~\ref{lem:gld-grad-con} shows that with probability at least $1-\delta$ over $J$ the following holds:
\begin{align*}
\E_{W_0^T}\left[\sum_{t=1}^T\frac{\gamma_t^2}{\sigma_t^2}\norm{\g f(w_{t-1}, S) - \g f(w_{t-1}, S_J)}_2^2\right] \leq \frac{C_{\delta}}{m}\E\left[\sum_{t=1}^T\frac{\gamma_t^2}{\sigma_t^2}L(W_{t-1})^2.\right]
\end{align*}
The KL divergence satisfies the following bound w.p $\geq 1-\delta$ over $J$:
\begin{align*}
\KL{W_T}{P(S_J)} \leq \E_{W_0^T}\left[\sum_{t=1}^T\frac{\gamma_t^2}{\sigma_t^2} \left(\frac{4}{b} + \frac{C_{\delta}}{2m}\right)L(W_{t-1})^2\right].
\end{align*}
We conclude our proof by plugging it into Theorem~\ref{thm:data-pac} and applying an union bound.
\end{proof}

\section{Continuous Langevin Dynamics}\label{sec:cld}
If we let the step size $\gamma_t$ approach 0, 
\ref{eq:gld} would become a coninuous diffusion process 
called Continuous Langevin Dynamics (CLD). Formally, for any fixed $S$, it is defined by
the following stochastic differential equation:
\begin{align*}\label{eq:cld}
    \rmd W_t = -\g F(W_t,S)~\rmd t+ \sqrt{2\beta^{-1}}~\rmd B_t, \quad W_0\sim \mu_0, \tag{CLD}
\end{align*}
where $F(w,S):=f(w,S) + \frac{\lambda}{2}\norm{w}^2$, $(B_t)_{t\geq 0}$ is the standard Brownian motion, and $\mu_0$ is the initial distribution. The loss function $F$ is the sum of a bounded original loss $f$ and a $\ell_2$-regularization. The main result of this section is the $O(\frac{1}{n} + \frac{1}{n^2})$ generalization bound (Theorem~\ref{thm:cld-bound}) for CLD. Before proving our main theorem, we first introduce two important mathematical tools.
\begin{lemma}[Fokker-Planck Equation] (see e.g.,~\citet{risken1996fokker} or \citet[Appendix C]{mou2018generalization})\label{lem:fokker}
For any fixed $S$, let $p(\cdot, t)$ be the pdf of $W_t$ defined in \ref{eq:cld}. 
The time evolution of $p(w,t)$ follows the Fokker-Planck equation: 
\begin{equation*}
\pp{p(w,t)}{t} = \frac{1}{\beta}\Delta p(w, t) - \g \cdot (p(w,t)\g F(\cdot, S)),
\end{equation*}
where $\Delta = \g\cdot\g$ is the Laplace operator, and $\g$ is the gradient operator w.r.t the first argument ($w$).
\end{lemma}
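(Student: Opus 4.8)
The statement is the classical Fokker--Planck (forward Kolmogorov) equation for the diffusion \ref{eq:cld}, so the plan is to reproduce the standard It\^o-calculus derivation. First I would fix $S$, write $b(w):=-\g F(w,S)$ for the drift and $\sigma:=\sqrt{2\beta^{-1}}$ for the (constant scalar) diffusion coefficient, and pass to the weak formulation: for an arbitrary test function $\phi\in C_c^{\infty}(\R^d)$, It\^o's formula applied to $\phi(W_t)$ along \ref{eq:cld} gives
\[
\rmd\,\phi(W_t) = \g\phi(W_t)\cdot b(W_t)\,\rmd t + \tfrac{\sigma^2}{2}\Delta\phi(W_t)\,\rmd t + \sigma\,\g\phi(W_t)\cdot\rmd B_t .
\]
Taking expectations annihilates the It\^o integral (it is a mean-zero martingale, its integrand being bounded since $\phi$ has compact support), which yields $\tfrac{\rmd}{\rmd t}\int_{\R^d}\phi(w)\,p(w,t)\,\rmd w = \int_{\R^d}\bigl(\g\phi(w)\cdot b(w) + \tfrac1\beta\Delta\phi(w)\bigr)p(w,t)\,\rmd w$.

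Next I would integrate by parts on the right-hand side; since $\phi$ has compact support there are no boundary terms, so both derivatives move onto $p$, giving $\int_{\R^d}\phi(w)\,\p_t p(w,t)\,\rmd w = \int_{\R^d}\phi(w)\bigl(\tfrac1\beta\Delta p(w,t) - \g\cdot(p(w,t)\,\g F(w,S))\bigr)\rmd w$ for every test $\phi$. Hence $p$ solves the Fokker--Planck equation in the distributional sense. Finally, because the diffusion matrix $\sigma^2 I_d$ is uniformly elliptic and the drift $-\g F(\cdot,S)$ is smooth (here $F$ is a smooth loss $f$ plus the quadratic regularizer $\tfrac{\lambda}{2}\norm{w}^2$), classical parabolic regularity makes $p(\cdot,\cdot)$ smooth on $\R^d\times(0,\infty)$, so the distributional identity is in fact the claimed pointwise PDE.

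The main obstacle is not conceptual but the analytic bookkeeping: justifying that $\tfrac{\rmd}{\rmd t}$ commutes with the spatial integral and that taking expectations kills the It\^o integral (both by dominated convergence, using the compact support of $\phi$ together with non-explosion of \ref{eq:cld}, which holds thanks to the coercivity supplied by the $\ell_2$ term), and then the regularity upgrade in the last step. Since all of this is standard, I would — as the paper does — ultimately defer to \citet{risken1996fokker} and \citet[Appendix C]{mou2018generalization} for the full details.
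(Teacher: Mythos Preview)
Your derivation is the standard It\^o-calculus route and is correct; the paper itself does not supply a proof of this lemma at all, simply citing \citet{risken1996fokker} and \citet[Appendix C]{mou2018generalization} as you also do at the end. So there is nothing to compare against, and your proposal is fine.
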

The following Log-Sobelev inequality for $p_t$ 
is proven in \citet[Lemma 16]{li2019generalization}, which bounds the Fisher information from below by the KL divergence.
\begin{lemma}[Log-Sobelev Inequality (LSI) for CLD]
\label{lem:lsi} Suppose $f(w,z)$ is $C$-bounded
(i.e., $|f(w,z)|\leq C$ holds for all $w,z$). Let $p_t$ be the pdf of $W_t$ in \ref{eq:cld} with $W_0 \sim \N(0, \frac{1}{\lambda \beta}I_d)$.
Then, we have for any probability density function $q$ that is absolutely continuous w.r.t. $p_t$,
the following inequality holds:
\[\KL{q}{p_t}\leq \frac{\exp(8\beta C)}{2\lambda \beta}\int_{\R^d}\norm{\g \ln\frac{q(w)}{p_t(w)}}^2 q(w)~\rmd w.\]
\end{lemma}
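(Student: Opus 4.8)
The plan is to derive the stated log-Sobolev inequality by exhibiting every time marginal $p_t$ of \ref{eq:cld} as a bounded log-density perturbation of a strongly log-concave Gaussian, and then invoking the Holley--Stroock perturbation lemma. First I would identify the unique invariant (Gibbs) density of \ref{eq:cld}: since the drift is $-\g F(\cdot,S)$ with $F(w,S)=f(w,S)+\frac{\lambda}{2}\norm{w}^2$ and the diffusion coefficient is $\sqrt{2\beta^{-1}}$, the stationary density is $\pi(w)\propto\exp(-\beta F(w,S))=\exp(-\beta f(w,S))\,\phi(w)$, where $\phi$ is exactly the density of $\N(0,\frac{1}{\lambda\beta}I_d)$ --- the initialization in the statement. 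Because $-\ln\phi$ is $\lambda\beta$-strongly convex, the Bakry--\'Emery criterion gives the log-Sobolev inequality $\KL{q}{\phi}\le\frac{1}{2\lambda\beta}\int_{\R^d}\norm{\g\ln\frac{q(w)}{\phi(w)}}^2 q(w)\,\rmd w$ for every density $q$ absolutely continuous w.r.t. $\phi$.

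Next I would control $p_t$ pointwise relative to $\phi$, uniformly in $t$. Writing $h_t:=p_t/\pi$, the Fokker--Planck equation of Lemma~\ref{lem:fokker} rewrites as $\p_t h_t=\frac{1}{\beta}\Delta h_t-\g F(\cdot,S)\cdot\g h_t$, a parabolic equation with no zeroth-order term; hence the maximum principle yields $\inf_w h_0(w)\le h_t(w)\le\sup_w h_0(w)$ for all $t\ge0$. Since $h_0=p_0/\pi=\phi/\pi$ and $\pi\propto e^{-\beta f}\phi$ with $|\beta f|\le\beta C$, we get $e^{-2\beta C}\le h_0\le e^{2\beta C}$, so $e^{-2\beta C}\le p_t/\pi\le e^{2\beta C}$ for all $t$; combined with $e^{-2\beta C}\le\pi/\phi\le e^{2\beta C}$ (again because $|\beta f|\le\beta C$), this gives $e^{-4\beta C}\le p_t(w)/\phi(w)\le e^{4\beta C}$ for every $w$ and every $t\ge0$.

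Finally I would write $p_t=\frac{1}{Z_t}e^{-U_t}\phi$ with $U_t:=-\ln(p_t/\phi)$, whose oscillation $\sup_w U_t-\inf_w U_t$ is at most $8\beta C$ by the previous paragraph. Applying the Holley--Stroock perturbation lemma to the log-Sobolev inequality for $\phi$ established above multiplies the constant by $e^{\mathrm{osc}(U_t)}\le e^{8\beta C}$, so $\KL{q}{p_t}\le\frac{\exp(8\beta C)}{2\lambda\beta}\int_{\R^d}\norm{\g\ln\frac{q(w)}{p_t(w)}}^2 q(w)\,\rmd w$, which is the claimed bound.

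The step I expect to be the main obstacle is the uniform-in-$t$ pointwise comparison of $p_t$ with the Gaussian $\phi$. One cannot bypass it by a curvature argument: $f$ is only assumed bounded, so its Hessian is uncontrolled and the Bakry--\'Emery condition $\mathrm{CD}(\rho,\infty)$ with $\rho>0$ fails for $F$ in general, which also means that $p_t$ for finite $t$ need not be strongly log-concave. The maximum-principle route above does the job but needs some care with the regularity of $h_t$ and its decay at infinity to make the comparison rigorous; once $p_t\asymp\phi$ is in hand, the remaining ingredients --- the Gaussian log-Sobolev inequality via Bakry--\'Emery and the Holley--Stroock perturbation lemma --- are entirely standard, and it only remains to bookkeep the constants down to $\exp(8\beta C)/(2\lambda\beta)$.
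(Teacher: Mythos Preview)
The paper does not supply its own proof of this lemma; it merely cites Lemma~16 of \citet{li2019generalization}. Your proposal is correct and is, in essence, the argument in that reference: (i) the Gaussian $\phi=\N(0,\frac{1}{\lambda\beta}I_d)$ satisfies an LSI with constant $\frac{1}{2\lambda\beta}$ by Bakry--\'Emery; (ii) the ratio $h_t=p_t/\pi$ obeys the backward-type equation $\partial_t h_t=\frac{1}{\beta}\Delta h_t-\g F\cdot\g h_t$ with no zeroth-order term, so the parabolic maximum principle confines $h_t$ between $\inf h_0$ and $\sup h_0$, yielding $e^{-4\beta C}\le p_t/\phi\le e^{4\beta C}$; (iii) Holley--Stroock then inflates the LSI constant by at most $e^{8\beta C}$. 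Your flagging of the maximum-principle step as the only place needing care (smoothness and decay of $h_t$ so that extrema are attained, or an approximation argument) is exactly right; the rest is bookkeeping.
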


Applying Theorem~\ref{thm:data-pac} to CLD, we can obtain the following corollary.
The proof for bounding KL uses similar idea developed in \citet{li2019generalization}[Theorem 15].

\begin{corollary}\label{cor:cld-data} Assume the original loss function $f(w,z)$ is $C$-bounded (i.e., $|f(w,z)| \leq C$ holds for any $w$ and $z$), and the initial distribution satisfies $\rmd \mu_0 = \frac{1}{Z}e^{-\frac{\lambda\beta \norm{w}_2}{2}}~\rmd w$. Let $Q_S$ be the distribution of $W_T$ in \ref{eq:cld}. Let $J$ be a random sequence include $m$ indices uniformly sampled from $[n]$ without replacement. Then with probability at least $1-\delta$ over the randomness of $S\sim \D^n$ and $J$, the following holds:
\begin{align*}
\risk(Q_S,\D) \leq &\eta C_{\eta} \risk(Q_S,S_I) + \frac{C_{\eta}\ln(1/\delta)}{n-m}\\
&+\frac{C_{\eta}\beta}{2(n - m)}\int_{0}^T \exp\left(\frac{\lambda (t - T)}{e^{8\beta C}}\right)\E_{w\sim W_{t}}[\norm{\g F(w,S)- \g F(w,S_J)}_2^2] ~\rmd t,
\end{align*}
where $C_{\eta}:=\frac{1}{1- e^{-\eta}}$ is a constant.
\end{corollary}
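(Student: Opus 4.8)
The plan is to apply the data-dependent PAC-Bayesian bound (Theorem~\ref{thm:data-pac}) with the posterior $Q_S$ equal to the law of $W_T$ in \ref{eq:cld}, and with the prior $P(S_J)$ equal to the law of $W'_T$, where $(W'_t)_{t\ge 0}$ is the CLD diffusion driven by the \emph{partial} loss $F(w,S_J)=f(w,S_J)+\frac{\lambda}{2}\norm{w}^2$, started from the same $\mu_0$ and the same Brownian motion. Since $F(\cdot,S_J)$ depends only on $S_J$, this is a legitimate $S_J$-measurable prior, and the whole task reduces to bounding $\KL{Q_S}{P(S_J)}=\KL{p_T}{p'_T}$, where $p_t,p'_t$ denote the densities of $W_t,W'_t$.

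To bound that KL I would track its time derivative along the two Fokker--Planck flows (Lemma~\ref{lem:fokker}). Differentiating $g(t):=\KL{p_t}{p'_t}$ under the integral sign and integrating by parts, the diffusion terms produce a relative Fisher information and the drift terms produce a cross term, yielding the standard entropy-dissipation identity
\[
g'(t)=-\frac{1}{\beta}\int_{\R^d}\norm{\g\ln\tfrac{p_t}{p'_t}}^2 p_t~\rmd w+\int_{\R^d}\ag{\g F(w,S_J)-\g F(w,S)}{\g\ln\tfrac{p_t(w)}{p'_t(w)}}p_t(w)~\rmd w .
\]
Applying Young's inequality $\ag{u}{v}\le\frac{1}{2\beta}\norm{u}^2+\frac{\beta}{2}\norm{v}^2$ to the cross term absorbs half of the relative Fisher information, leaving
\[
g'(t)\le-\frac{1}{2\beta}\int_{\R^d}\norm{\g\ln\tfrac{p_t}{p'_t}}^2 p_t~\rmd w+\frac{\beta}{2}\,\E_{w\sim W_t}\left[\norm{\g F(w,S)-\g F(w,S_J)}^2\right].
\]

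Next I would invoke the log-Sobolev inequality (Lemma~\ref{lem:lsi}) with reference density $p'_t$ and test density $q=p_t$; this is legitimate because $f(\cdot,S_J)$, being an average of $C$-bounded functions, is itself $C$-bounded, so the hypothesis of Lemma~\ref{lem:lsi} holds for the partial-loss diffusion. This gives $\int\norm{\g\ln(p_t/p'_t)}^2 p_t~\rmd w\ge \frac{2\lambda\beta}{e^{8\beta C}}g(t)$, hence the linear differential inequality $g'(t)\le -\frac{\lambda}{e^{8\beta C}}g(t)+\frac{\beta}{2}\E_{w\sim W_t}[\norm{\g F(w,S)-\g F(w,S_J)}^2]$. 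Since $p_0=p'_0=\mu_0$ we have $g(0)=0$, so Grönwall's lemma yields
\[
\KL{p_T}{p'_T}\le\frac{\beta}{2}\int_0^T\exp\left(\frac{\lambda(t-T)}{e^{8\beta C}}\right)\E_{w\sim W_t}\left[\norm{\g F(w,S)-\g F(w,S_J)}^2\right]~\rmd t .
\]
Plugging this bound on $\KL{Q_S}{P(S_J)}$ into Theorem~\ref{thm:data-pac} (which carries the $\frac{C_{\eta}}{n-m}$ factor and the $\ln(1/\delta)$ term) immediately gives the stated corollary.

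The main obstacle is the functional-analytic rigor of the KL-derivative step: justifying differentiation under the integral, the integration-by-parts identity, and the vanishing of boundary terms at infinity (which is where the $\ell_2$-regularization and the Gaussian-type initialization $\mu_0$ help, by keeping $p_t$ rapidly decaying). I would either impose the standard mild regularity/confinement assumptions or cite the Fokker--Planck and entropy-dissipation machinery used for the analogous GLD/CLD analyses in \citet{mou2018generalization,li2019generalization}; once that identity is in hand, the remaining steps (Young's inequality, the LSI, Grönwall, and the PAC-Bayesian substitution) are routine.
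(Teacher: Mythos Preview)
Your proposal is correct and matches the paper's own proof essentially step for step: the same prior $P(S_J)=\mathrm{Law}(W'_T)$ from the $S_J$-driven CLD, the same Fokker--Planck/entropy-dissipation identity for $\tfrac{\rmd}{\rmd t}\KL{p_t}{p'_t}$, the same Young's-inequality splitting to absorb half the relative Fisher information, the same application of Lemma~\ref{lem:lsi} to the $S_J$-diffusion (legitimized by the $C$-boundedness of $f(\cdot,S_J)$), and the same Gr\"onwall step before plugging into Theorem~\ref{thm:data-pac}. The paper likewise does not spell out the regularity/boundary-term justification you flag as the main obstacle, so your caveat there is fair but not a divergence from the paper's argument.
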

\begin{proof}
Let $P(S_J):= Q_{S_J}$ be the output distribution of CLD when training data is $S_J$. From Theorem~\ref{thm:data-pac}, we can see that 
\begin{equation}\label{eq:cld-data-pac}
\risk(Q_S,\D) \leq \eta C_{\eta} \risk(Q_S,S_I) + C_{\eta}\cdot \frac{\KL{Q_S}{P(S_J)} + \ln(1/\delta)}{n-m}.
\end{equation}
The key is to control $\KL{Q_S}{P(S_J)}$. Let $W = (W_t)_{t\geq 0}$ and $W' = (W'_t)_{t\geq 0}$ be the training processes when trained on $S$ and $S_J$, respectively. Let $q_t$ and $p_t$ be the probability density function of $W_t$ and $W'_t$, respectively. Note that $\KL{Q_S}{P(S_J)}$ is equal to $\KL{q_T}{p_T}$. We first compute the upper bound of its derivative $\dd{}{t}\KL{q_t}{p_t}$ w.r.t. time $t$. 
\begin{equation}
\label{eq:dKLdtpart1}
\begin{split}
\frac{\rmd }{\rmd  t} \KL{q_t}{p_t}
&= \frac{\rmd }{\rmd  t} \int_{\R^d} q_t \log\frac{q_t}{p_t} ~\rmd  w\\
&= \int_{\R^d}\left(\frac{\rmd  q_t}{\rmd  t} \log \frac{q_t}{p_t} + q_t \cdot \frac{p_t}{q_t} \cdot \frac{\frac{\rmd q_t}{\rmd  t}p_t - q_t \frac{\rmd p_t}{\rmd  t}}{p_t^2}\right) ~\rmd  w\\
&= \int_{\R^d}\left(\frac{\rmd q_t}{\rmd  t} \log\frac{q_t}{p_t}\right) ~\rmd w- \int_{\R^d}\left(\frac{q_t}{p_t}\frac{\rmd p_t}{\rmd  t}\right)~\rmd w
\end{split}
\end{equation}
According to Fokker-Planck Equation (Lemma~\ref{lem:fokker}), we can compute the derivative of CLD pdfs w.r.t time $t$:
\begin{gather}
\frac{\p q_t}{\p t} = \frac{1}{\beta} \Delta q_t + \g \cdot (q_t\g F(\cdot,S))\notag,\quad\quad
\frac{\p p_t}{\p t} = \frac{1}{\beta} \Delta p_t + \g \cdot (p_t\g F(\cdot, S_J))\notag.
\end{gather}
It follows that
\begin{align*}
I &:= \int_{\R^d}\left(\frac{\rmd q_t}{\rmd  t} \log\frac{q_t}{p_t}\right) ~\rmd w\\
&= \int_{\R^d} \left(\frac{1}{\beta} \Delta q_t + \g \cdot (q_t\g F(w,S))\right)\log\frac{q_t}{p_t} ~\rmd w \\
&= \frac{-1}{\beta}\int_{\R^d}\langle \g \log \frac{q_t}{p_t}, \g q_t\rangle ~\rmd w - \int_{\R^d}\langle \g \log \frac{q_t}{p_t}, q_t\g F(w,S)\rangle ~\rmd w \tag{integration by parts},
\end{align*}
and 
\begin{align*}
J &:= \int_{\R^d}\left(\frac{q_t}{p_t} \frac{\rmd  p_t}{\rmd  t}\right) ~\rmd w\\
&= \int_{\R^d}\frac{q_t}{p_t}\left(\frac{1}{\beta}\Delta p_t + \g \cdot (p_t\g F(w,{S_J}))\right) ~\rmd w\\
&= \frac{-1}{\beta} \int_{\R^d}\langle \g \frac{q_t}{p_t}, \g p_t\rangle ~\rmd w - \int_{\R^d} \langle \g \frac{q_t}{p_t}, p_t \g F(w,{S_J})\rangle ~\rmd w \tag{integration by parts}.
\end{align*}
Together with \eqref{eq:dKLdtpart1}, we have 
\begin{align*}
\frac{\rmd}{\rmd t} \KL{q_t}{p_t}
&= I - J\\
&= \frac{-1}{\beta}\int_{\R^d}\left(\langle\frac{\g q_t}{q_t} - \frac{\g p_t}{p_t},\g q_t\rangle - \langle\frac{\g q_t}{p_t} - \frac{q_t \g p_t}{p_t^2},\g p_t\rangle\right) ~\rmd w\\
&-\int_{\R^d}\left(\langle\g \log\frac{q_t}{p_t},q_t \g F(w,{S})\rangle - \frac{q_t}{p_t}\langle \g \log \frac{q_t}{p_t}, p_t \g F(w,{S_J})\rangle\right) ~\rmd w\\
&= \frac{-1}{\beta}\int_{\R^d} q_t \left\|\g \log \frac{q_t}{p_t}\right\|_2^2 ~\rmd w + \int_{\R^d}q_t\langle \g \log \frac{q_t}{p_t},\g F(w,{S}) - \g F(w,{S_J})\rangle ~\rmd w\\
&\leq \frac{-1}{2\beta}\int_{\R^d}q_t\left\|\g \log \frac{q_t}{p_t}\right\|_2^2 ~\rmd w + \frac{\beta}{2}\int_{\R^d} q_t \left\|\g F(w,{S}) - \g F(w,{S_J})\right\|_2^2 ~\rmd w.
\end{align*}
The last step holds because $\langle \mathbf{a}/\sqrt{\beta}, \mathbf{b}\sqrt{\beta}\rangle \leq \frac{\left\|\mathbf{a}\right\|_2^2}{2\beta} + \frac{\beta\left\|\mathbf{b}\right\|_2^2}{2}$.
By the Log-Sobolev inequality for CLD (Lemma~\ref{lem:lsi}), we have
\[\int_{\R^d}q_t\left\|\g \log \frac{q_t}{p_t}\right\|_2^2 ~\rmd w \geq \frac{2\lambda \beta}{\exp(8\beta C)}\KL{q_t}{p_t}.\]
Hence the derivative satisfies the following bound:
\[\dd{}{t}\KL{q_t}{p_t} \leq \frac{-\lambda}{\exp(8\beta C)}\KL{q_t}{\pi_t} +  \frac{\beta}{2}\E_{W_t}[\norm{\g F(W_t,S)- \g F(W_t, S_J)}_2^2].\]
Let $\alpha = \frac{\lambda}{e^{8\beta C}}$, $y(t):= \KL{q_t}{p_t}$, and $g(t) = \frac{\beta}{2}\E_{W_t}[\norm{\g F(W_t,S)- \g F(W_t, S_J)}_2^2]$. Then we can rewrite the above inequality as 
\[y(t)' \leq -\alpha y(t) + g(t), \qquad y(0) = 0.\]
Solving this inequality, we have
\begin{align*}
\KL{q_T}{p_T} &\leq \frac{\beta}{2}\int_{0}^T \exp\left(\frac{\lambda (t - T)}{e^{8\beta C}}\right)\E_{W_{t}}[\norm{\g F(W_t,S)- \g F(W_t, S_J)}_2^2] ~\rmd t.
\end{align*}
\end{proof}
The following Lemma~\ref{lem:cld-grad-con} demonstrates that the integral of the gradient difference $\norm{\g F_S - \g F_{S_J}}_2^2$ enjoys a concentration property like Lemma~\ref{lem:gld-grad-con}.
\begin{definition}[Lipschitz] A differentiable function is $L$-Lipschitz if and only if $\norm{\g_w f(w,z)} \leq L$ holds for any $w\in \R^d$.
\end{definition}
\begin{lemma}\label{lem:cld-grad-con}
Suppose the loss function $f$ is $L$-Lipschitz. Let $S\in \Z^n$ be any fixed training set, and $W=(W_t)_{t \in [0, T]}$ be any random process. For any $\alpha>0$, we have the following bound holds w.p. $\geq 1-\delta$ over the randomness of $J$ ($m$ indices sampled from $[n]$ without replacement):
\[\E_{W}\left[\int_{0}^T e^{\alpha(t-T)}\norm{f(W_t, S) - f(W_t, S_J)}_2^2~\rmd t\right] \leq \frac{C_{\delta}L^2(1-e^{-\alpha T})}{\alpha m},\]
where $C_{\delta} = 4+ 2\ln(1/\delta) + 5.66\sqrt{\ln(1/\delta)}$.
\end{lemma}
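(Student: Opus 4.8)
The plan is to mirror the proof of Lemma~\ref{lem:gld-grad-con}, replacing the discrete weighted sum by the weighted integral $\int_0^T e^{\alpha(t-T)}(\cdot)\,\rmd t$ and applying the modified McDiarmid inequality (Lemma~\ref{lem:new-mcd}) to the map
\[
\Phi(J):=\sqrt{\E_W\left[\int_0^T e^{\alpha(t-T)}\norm{\g f(W_t,S)-\g f(W_t,S_J)}_2^2~\rmd t\right]}.
\]
Since $S_J$ depends only on the set $\{j_1,\dots,j_m\}$ and not on the order of the indices, $\Phi$ is order-independent. It is convenient to regard $\Phi(J)$ as the norm of the ``residual gradient field'' $U^J:\,(t,W)\mapsto\g f(W_t,S)-\g f(W_t,S_J)$ in the Hilbert space $\mathcal{H}:=L^2\big([0,T],e^{\alpha(t-T)}\,\rmd t\big)\otimes L^2(W)$; the $L$-Lipschitz assumption makes the integrand pointwise bounded by $(2L)^2e^{\alpha(t-T)}$, so $U^J\in\mathcal{H}$ and Tonelli's theorem justifies every interchange of $\E_W$, $\int_0^T\rmd t$, and (later) $\E_J$.

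The bounded-difference estimate follows from the triangle inequality in $\mathcal{H}$: for neighboring $J,J'$ with $|J\cap J'|=m-1$, write $J=(J\cap J')\cup\{a\}$ and $J'=(J\cap J')\cup\{b\}$, so that $U^J_t-U^{J'}_t=\tfrac1m\big(\g f(W_t,z_b)-\g f(W_t,z_a)\big)$, which has Euclidean norm at most $2L/m$. Hence
\[
|\Phi(J)-\Phi(J')|\le\norm{U^J-U^{J'}}_{\mathcal{H}}\le\frac{2L}{m}\sqrt{\int_0^T e^{\alpha(t-T)}~\rmd t}=\frac{2L}{m}\sqrt{\frac{1-e^{-\alpha T}}{\alpha}}=:c,
\]
and Lemma~\ref{lem:new-mcd} gives $\Pr_J[\Phi(J)\ge\E_J[\Phi(J)]+\epsilon]\le\exp(-2\epsilon^2/(mc^2))$ for every $\epsilon>0$.

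It remains to control $\E_J[\Phi(J)]$. By Jensen's inequality (concavity of $\sqrt{\cdot}$) together with $W\perp J$,
\[
\E_J[\Phi(J)]\le\sqrt{\E_W\left[\int_0^T e^{\alpha(t-T)}\,\E_J\norm{\g f(W_t,S)-\g f(W_t,S_J)}_2^2~\rmd t\right]}.
\]
For fixed $w$, the inner expectation is the sampling-without-replacement variance of the empirical mean gradient over $m$ points; the exact double-sum computation from the proof of Lemma~\ref{lem:gld-grad-con} (using $\sum_i g[i]=0$ with $g[i]:=\g f(w,S)-\g f(w,z_i)$ and $g[i]^\top g[i]\ge0$, and $\norm{g[i]}\le 2L$) bounds it by $4L^2/m$. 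Writing $K:=L^2(1-e^{-\alpha T})/(\alpha m)$, this gives $\E_J[\Phi(J)]\le 2\sqrt K$ and $c^2=4K/m$; choosing $\epsilon=\sqrt{2K\ln(1/\delta)}$ makes $\exp(-2\epsilon^2/(mc^2))=\delta$, so with probability $\ge 1-\delta$ over $J$,
\[
\Phi(J)^2\le\big(\E_J[\Phi(J)]+\epsilon\big)^2\le K\big(2+\sqrt{2\ln(1/\delta)}\big)^2=K\big(4+2\ln(1/\delta)+4\sqrt2\,\sqrt{\ln(1/\delta)}\big),
\]
and since $4\sqrt2\le5.66$ this is at most $C_\delta L^2(1-e^{-\alpha T})/(\alpha m)$, the claimed bound.

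I do not anticipate a genuine obstacle: the argument is a direct transcription of the GLD proof, and the only points needing care are (i) the Fubini/Tonelli interchanges, which are harmless because the $L$-Lipschitz bound makes the integrand uniformly bounded on $[0,T]$, and (ii) bookkeeping the weight $\int_0^T e^{\alpha(t-T)}\,\rmd t=(1-e^{-\alpha T})/\alpha$ so that it enters the McDiarmid step, the variance step, and the final constant identically, thereby reproducing $C_\delta$ exactly.
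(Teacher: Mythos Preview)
Your proposal is correct and follows essentially the same approach as the paper's proof: define the same $\Phi(J)$, establish the bounded-difference constant $c=\frac{2L}{m}\sqrt{(1-e^{-\alpha T})/\alpha}$, apply Lemma~\ref{lem:new-mcd}, bound $\E_J[\Phi(J)]$ via Jensen and the same without-replacement variance computation, and combine. The only cosmetic difference is that you phrase the bounded-difference step as a triangle inequality in the Hilbert space $\mathcal{H}$, whereas the paper expands $\Phi(J')^2$ and applies Cauchy--Schwarz explicitly; these are the same computation.
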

\begin{proof}
Define function $\Phi :[n]^m \to \R^+$ as follows:
\[\Phi(J):= \sqrt{\E_{W}\left[\int_{0}^T e^{\alpha(t-T)}\norm{f(W_t, S) - f(W_t, S_J)}_2^2~\rmd t\right]}.\]
Let $J$ and $J'$ be any two ``neighboring'' index-sets. In other words, they should satisfy $J \cap J' = m-1$. Similar to the proof of Lemma~\ref{lem:gld-grad-con}, we first show that $|\Phi(J)-\Phi(J')|$ is small. Formally, define $U_t = \g f(W_{t-1}, S) - \g f(W_{t-1}, S_J)$ and $V_t = \g f(W_{t-1},S_J) - \g f(W_{t-1}, S_{J'})$. We have
\begin{align*}
\Phi(J')^2 &:= \E_{W}\left[\int_{0}^T e^{\alpha(t-T)}\norm{U_t + V_t}_2^2~\rmd t\right]\\
&= \E_{W}\left[\int_{0}^T e^{\alpha(t-T)}\left(U_t^\top U_t + V_t^\top V_t \right)~\rmd t\right] + 2\E_{W}\left[\int_{0}^T e^{\alpha(t-T)}U_t^\top V_t ~\rmd t\right]\\
&\leq \E_{W}\left[\int_{0}^T e^{\alpha(t-T)}\left(\norm{U_t}_2^2 + \norm{V_t}_2^2 \right)~\rmd t\right]\\ 
&\qquad+ 2\sqrt{\E_{W}\left[\int_{0}^T e^{\alpha(t-T)}\norm{U_t}^2_2 ~\rmd t\right]}\sqrt{\E_{W}\left[\int_{0}^T e^{\alpha(t-T)}\norm{V_t}^2_2 ~\rmd t\right]}\\
&= \left(\sqrt{\E_{W}\left[\int_{0}^T e^{\alpha(t-T)}\norm{U_t}^2_2 ~\rmd t\right]} + \sqrt{\E_{W}\left[\int_{0}^T e^{\alpha(t-T)}\norm{V_t}^2_2 ~\rmd t\right]}\right)^2\\
&= \left(\Phi(J) + \sqrt{\E_{W}\left[\int_{0}^T e^{\alpha(t-T)}\norm{V_t}^2_2 ~\rmd t\right]}\right)^2.
\end{align*}
For any fixed $W$, we have 
\begin{align*}
\int_{0}^T e^{\alpha(t-T)}\norm{U_t}_2^2~\rmd t &\leq \int_{0}^T e^{\alpha(t-T)}\frac{4L^2}{m^2}~\rmd t\\
&= \frac{4L^2(1 - e^{\alpha T})}{\alpha m^2}.
\end{align*}
Plugging it into the above inequality, we obtain
\[\Phi(J')^2 \leq \left(\Phi(J) + \frac{2L}{m}\sqrt{\frac{1 - e^{-\alpha T}}{\alpha}}\right)^2.\]
The other direction can be proved in a same way. Therefore, for any $J$ and $J'$ that are different in only one element, we have:
\[|\Phi(J) - \Phi(J')| \leq \frac{2L}{m}\sqrt{\frac{1 - e^{\alpha T}}{\alpha}}.\]
Applying Lemma~\ref{lem:new-mcd}, one can infer that for any $\epsilon > 0$:
\[\Pr_J\left[ \Phi(J) - \E_{J}[\Phi(J)]\geq \epsilon\right] \leq \exp\left(\frac{-2m\epsilon^2}{4L^2(1-e^{-\alpha T})/\alpha}\right).\]
It further implies that
\begin{equation}\label{eq:cld-mcd}
\Pr_J\left[ \Phi(J)^2 \geq (\epsilon + \E_{J}[\Phi(J)])^2\right] \leq \exp\left(\frac{-2m\epsilon^2}{4L^2(1-e^{-\alpha T})/\alpha}\right).    
\end{equation}
It remains to bound the expectation:
\begin{align*}
\E_J[\Phi(J)] &= \E_J\sqrt{\E_{W}\left[\int_{0}^T e^{\alpha(t-T)}\norm{f(W_t, S) - f(W_t, S_J)}_2^2~\rmd t\right]}\\
&\leq \sqrt{\E_{W}\left[\int_{0}^T e^{\alpha(t-T)}\E_J[\norm{f(W_t, S) - f(W_t, S_J)}_2^2]~\rmd t\right]}\\
&\leq \sqrt{\int_{0}^T e^{\alpha(t-T)}\frac{4L^2}{m}~\rmd t} \tag{a}\\
&= \frac{2L}{\sqrt{m}}\sqrt{\frac{1-e^{-\alpha T}}{\alpha}},
\end{align*}
Plugging it into \eqref{eq:cld-mcd} and replacing $\epsilon$ with $\sqrt{\frac{4L^2(1-e^{-\alpha T})/\alpha\cdot \ln(1/\delta)}{2m}}$, we can conclude the proof. It remains to prove (a) in the above inequality. For any fixed $W$ and $t\in [0,T]$, we define $g[i]:=\g f(W_t,S) - \g f(W_t, z_i)$. Let $J = (J_1,...,J_m)$. We bound the variance of $\g f(W_t, S_J)$ as follows:
\begin{align*}
\E_{J}&\norm{\g f(W_t,S) - \g f(W_t,S_J)}^2 = \E_{J}\left[\left(\frac{1}{m}\sum_{i=1}^m g[J_i]\right)^\top \left(\frac{1}{m}\sum_{i=1}^m g[J_i]\right)\right]\\
&= \frac{1}{m^2} \sum_{i=1}^m\sum_{j=1}^m \E_{J_i,J_j}[g[J_i]^{\top} g[J_j]]\\
&=\frac{m}{m^2}\E_{J_1}[\norm{g[J_1]}^2] + \frac{m(m-1)}{m^2}\E_{J_1,J_2}[g[J_1]^\top g[J_2]]\\
&=\frac{1}{m}\E_{J_1}[\norm{g[J_1]}^2] + \frac{m-1}{m n(n-1)}\sum_{i=1}^n\sum_{j\neq i}[g[i]^\top g[j]]\\
&=\frac{1}{m}\E_{J_1}[\norm{g[J_1]}^2] + \frac{m-1}{m n(n-1)}\left(\sum_{i=1}^n\sum_{j=1 }^n[g[i]^\top g[j]]- \sum_{i=1}^n g[i]^\top g[i]\right)\\
&\leq \frac{4L^2}{m}. \tag{$\sum_{i=1}^n g[i] = 0$ and $g[i]^\top g[i] \geq 0$}
\end{align*}
\end{proof}
Now we are ready to prove our generalization bound for CLD.
\begin{theorem}\label{thm:cld-bound}
Assume the original loss function $f(w,z)$ is $C$-bounded (i.e. $|f(w,z)| \leq C$ holds for all $w,z$), and $W_0 \sim \N(0,\frac{1}{\lambda \beta} I_d)$. Let $W_T$ be the output of \ref{eq:cld}. Then, for any $\delta\in (0,1)$ and $\eta > 0$, we have the following inequality holds with probability at least $1-2\delta$ over the randomness of $S\sim \D^n$ and $J$ ($m$ indices uniformly sampled from $[n]$ without replacement):
\begin{align*}
\risk(W_T,\D) \leq & \eta C_{\eta} \risk(W_T,S_I) + \frac{C_{\eta}\ln(1/\delta)}{n-m}+\frac{C_{\eta}C_{\delta}\beta L^2\cdot e^{8\beta C}\left(1-\exp({-\frac{\lambda T}{e^{8\beta C}}})\right)}{2\lambda (n - m)m},
\end{align*}
where $C_{\delta} = 4 + 2 \ln (1/\delta) + 5.66\sqrt{\ln (1/\delta)}$, $C_\eta = \frac{1}{1- e^{-\eta}}$, and $I=[n]\backslash J$.
\end{theorem}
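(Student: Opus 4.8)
The plan is to obtain Theorem~\ref{thm:cld-bound} by composing two results already established for CLD: Corollary~\ref{cor:cld-data}, which gives a data-dependent PAC-Bayesian bound whose KL term is an exponentially time-weighted integral of the squared gradient difference between $S$ and $S_J$, and Lemma~\ref{lem:cld-grad-con}, which shows (via the modified McDiarmid inequality, Lemma~\ref{lem:new-mcd}) that this integral is $O(1/m)$ with high probability over the prior index set $J$. Throughout, $L$ denotes the Lipschitz constant of $f$ (i.e.\ $\norm{\g f(w,z)}\le L$ for all $w,z$), which is implicit in the statement.

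First I would apply Corollary~\ref{cor:cld-data}: setting $\alpha:=\lambda/e^{8\beta C}$, with probability at least $1-\delta$ over $S\sim\D^n$ and $J$,
\[\risk(W_T,\D) \le \eta C_{\eta}\risk(W_T,S_I) + \frac{C_{\eta}\ln(1/\delta)}{n-m} + \frac{C_{\eta}\beta}{2(n-m)}\int_0^T e^{\alpha(t-T)}\,\E_{w\sim W_t}\bigl[\norm{\g F(w,S)-\g F(w,S_J)}_2^2\bigr]\,\rmd t.\]
Next I would record two simple observations. Since $F(w,S)=f(w,S)+\tfrac{\lambda}{2}\norm{w}^2$, the $\ell_2$ regularizer is independent of the dataset and cancels in the difference, so $\g F(w,S)-\g F(w,S_J)=\g f(w,S)-\g f(w,S_J)$; thus the Lipschitz bound on $\g f$ directly controls this quantity. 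Moreover the CLD trajectory $W=(W_t)_{t\in[0,T]}$ trained on the full set $S$ is independent of $J$, since $J$ enters only through the prior process $W'$. Hence Lemma~\ref{lem:cld-grad-con} applies to this $W$ with this $\alpha$: with probability at least $1-\delta$ over $J$,
\[\E_W\!\left[\int_0^T e^{\alpha(t-T)}\norm{\g f(W_t,S)-\g f(W_t,S_J)}_2^2\,\rmd t\right] \le \frac{C_{\delta}L^2(1-e^{-\alpha T})}{\alpha m}.\]

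Finally I would union-bound the two failure events (each of probability at most $\delta$) to work on a joint event of probability at least $1-2\delta$, substitute the second display into the integral of the first, and simplify using $1/\alpha = e^{8\beta C}/\lambda$ and $1-e^{-\alpha T}=1-\exp(-\lambda T/e^{8\beta C})$, which turns the third term into
\[\frac{C_{\eta}\beta}{2(n-m)}\cdot\frac{C_{\delta}L^2(1-e^{-\alpha T})}{\alpha m} = \frac{C_{\eta}C_{\delta}\beta L^2 e^{8\beta C}\bigl(1-\exp(-\lambda T/e^{8\beta C})\bigr)}{2\lambda(n-m)m},\]
yielding exactly the claimed inequality.

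There is no substantial obstacle remaining, since the analytic work is absorbed into Corollary~\ref{cor:cld-data} and Lemma~\ref{lem:cld-grad-con}; the points requiring care are purely bookkeeping: verifying the regularizer cancellation so that the $L$-Lipschitz hypothesis transfers from $\g f$ to $\g F$-differences, checking that the trajectory $W$ is genuinely independent of $J$ (so the concentration lemma is legitimately applicable), and correctly carrying out the union bound and the $\alpha=\lambda/e^{8\beta C}$ substitution.
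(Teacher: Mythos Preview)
Your proposal is correct and follows essentially the same approach as the paper: apply Corollary~\ref{cor:cld-data}, observe that the regularizer cancels in the gradient difference so Lemma~\ref{lem:cld-grad-con} with $\alpha=\lambda/e^{8\beta C}$ bounds the integral, and union-bound the two $\delta$-failure events. The paper's proof is terser on the bookkeeping (independence of $W$ from $J$, regularizer cancellation) that you spell out, but the structure is identical.
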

\begin{proof}
Let $W = (W_t)_{t \in [0,T]}$ be the training trajectory of CLD when dataset is $S$. Applying Lemma~\ref{lem:cld-grad-con} with $\alpha = \frac{\lambda }{e^{8\beta C}}$, we have w.p. $\geq 1-\delta$ over $J$:
\begin{align*}
&\int_{0}^T \exp\left(\frac{\lambda(t-T)}{e^{8\beta C}}\right) \E_{W_t}[\norm{\g F(W_t,S) - \g F(W_t,S_J)}_2^2]~\rmd t\\
&= \E_W\left[\int_{0}^T \exp\left(\frac{\lambda(t-T)}{e^{8\beta C}}\right) \norm{\g f(W_t, S) - \g f(W_t,S_J)}_2^2~\rmd t\right]\\
&\leq \frac{C_{\delta}L^2\cdot e^{8\beta C}\left(1-\exp({-\frac{\lambda T}{e^{8\beta C}}})\right)}{\lambda m}.
\end{align*}
we conclude our proof by plugging it into Corollary~\ref{cor:cld-data} and use an union bound over $S$ and $J$.
\end{proof}
\section{Experimental Details}\label{app:exp-detail}

We train our model on a single server equipped with Intel Xeon CPU (2.40GHZ, 16 cores), 256G memory, and GeForce GTX 1080 Ti (11G) GPU.
\paragraph{Models.} For MNIST experiments, we use a CNN defined as follows (conv kernel size is $5\times 5$):
\begin{table}[H]
\centering
\begin{tabular}{c| c |c |c}
     \hline
     1 & 2 & 3 & 4 \\\hline
     conv(32) + relu & conv(512) & fc(1024) + relu & fc(10)\\
     conv(32) + relu & relu & &\\
     maxpool(2) & & &\\
     \hline
\end{tabular}
\end{table}
For CIFAR10 experiments, we use a modifed version (turnoff the BatchNorm and Dropout) of SimpleNet~\citep{hasanpour2016lets} which is defined below (convolution kernel size is $3 \times 3$):
\begin{table}[H]
\begin{tabular}{c |c |c |c |c |c}
 \hline
 1 & 2 & 3 & 4 & 5 & 6\\\hline
 conv(64) + relu & conv(128) + relu & conv(256) + relu & conv(512) + relu & conv(2048) + relu & conv(256)\\
 conv(64) + relu & conv(128) + relu & conv(256) + relu & & conv(256) + relu &\\
 conv(64) + relu & conv(256) + relu & & & &\\
 conv(64) + relu &  & & & &\\
 maxpool(2) & maxpool(2) & maxpool(2) & maxpool(2) & maxpool(2) & fc(10)\\
 \hline
\end{tabular}
\end{table}
\paragraph{Train FGD on MNIST.} We train a CNN defined above by FGD ($\eps_t=0.005$, momentum $\alpha=0.9$, $m=n/2=30000$) 20 times (with different initialization $w_0$ and $J$). We plot the means and stds (error bar) in Figure~\ref{fig:fgd-mnist-exp-a} and \ref{fig:fgd-mnist-exp-b}. Recall that the bound at step $T$ is the RHS of Theorem~\ref{thm:fgd} ($\eta = 1$, $\delta=0.1, d = 1,407,370$):
\begin{equation*}
\bd = \frac{1}{1-e^{-1}}\left[\risk(W_T,S_I) + \frac{\ln(10) + 3}{30000} + \frac{\ln(d T)}{30000}\sum_{t=1}^T\frac{\gamma_t^2}{\eps_t^2}\norm{\gdiff_t}^2\right],
\end{equation*}
where $\gdiff_t:=\g f(W_{t-1},S) - \g f(W_{t-1}, S_J)$ is deterministic when $w_0, J$ are fixed. 

We also study how the prior size $m$ affects the squared norm of gradient difference $\norm{\gdiff_t}^2$. We test 9 different choices of $m$ (from 1000 to 9000). For each prior size $m=|J|$, we run our experiment 30 times and report the means and stds (error bar) in Figure~\ref{fig:fgd-mnist-exp-c}.

\paragraph{Train FSGD on CIFAR10.} It should be very time-consuming to train our SimpleNet on CIFAR10 by FGD as it requires computing full gradient and demands for more training steps. Hence we use the stochastic FSGD (Algorithm~\ref{alg:fsgd}) to train our model. The learning rate $\gamma_t$ and the precision $\eps_t$ are set to $0.001\cdot 0.9^{\lfloor \frac{t}{200} \rfloor}$ and $0.004$, respectively. At each step, the random mini-batch with size $b=2000$ is made up of $1000$ 
indices uniformly sampled from $I$ and $1000$ indices uniformly sampled from $J$. We run FSGD ($m=n/5 = 10000$) 15 times and report the means and stds (error bar) in Figure~\ref{fig:fsgd-cifar10-exp-a}, \ref{fig:fsgd-cifar10-exp-c} and~ \ref{fig:fsgd-cifar10-exp-d}, where the bound is the RHS of Theorem~\ref{thm:fsgd} ( $\eta=2, \delta=0.1$, $d=18,072,202$):
\begin{equation*}
\bd = \frac{1}{1-e^{-3}}\left[3\risk(W_T,S_I) + \frac{\ln(10) + 3}{40000} + \sum_{t=1}^T\frac{\gamma_t^2}{\eps_t^2}\norm{\gdiff_t}^2\right],
\end{equation*}
where $\gdiff_t:=\g f(W_{t-1}, S_{B_t}) - \g f(W_{t-1}, S_{J \cap B_t})$ is the gradient difference w.r.t. this run. Note that in Theorem~\ref{thm:fsgd}, the bound should take expectation over the randomness of $B_0^T$. However, one can view the random seed generating $B_0^T$ is fixed so that FSGD becomes a deterministic algorithm.

\paragraph{Random labels.}
We conduct the random label experiment designed in \citet{ZhangBHRV17}.
We replace the true labels of some training samples with random labels.
The portion of random labels is specified by $p$ ($0\leq p\leq 1$).
Concretely, if the training dataset includes $n$ samples, the labels of $np$ samples (randomly chosen) are replaced with random labels. 
We use the same neural network architectures as above. 
In Figure~\ref{fig:fgd-mnist-random}, we train a CNN defined above by FGD ($\eps_t=0.0005$, $\gamma_t=0.0005\times0.9^{\lfloor\frac{t}{150}\rfloor}$ momentum $\alpha=0.9$, $m=n/2=30000$) 20 times per random portion $p$. 
In Figure~\ref{fig:fsgd-cifar-random}, we train a SimpleNet by FSGD ($\gamma_t=0.001\cdot 0.9^{\lfloor \frac{t}{200} \rfloor}$ and $\epsilon_t=0.004$) 10 times per random portion $p$.
One can see that even for such datasets with larger true test errors,
our bounds are still non-vacuous.

\begin{figure}[t]
    \centering
    \subcaptionbox{training accuracy\label{fig:fgd-mnist-random-a}}
      {\includegraphics[width=0.3\linewidth]{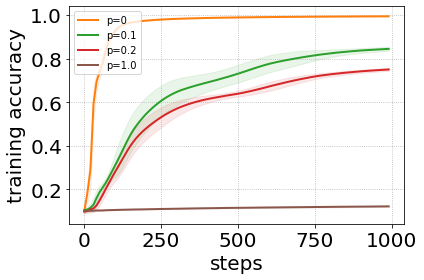}}
    \subcaptionbox{test error\label{fig:fgd-mnist-random-b}}
      {\includegraphics[width=0.3\linewidth]{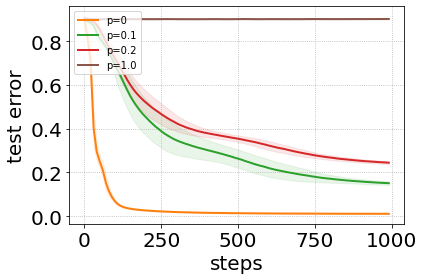}}
    \subcaptionbox{our bound\label{fig:fgd-mnist-random-c}}
      {\includegraphics[width=0.3\linewidth]{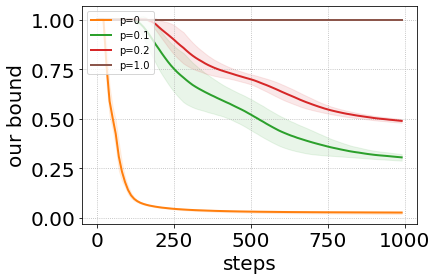}}
    \caption{\label{fig:fgd-mnist-random}Random labels (MNIST + FGD). Here $p$ is the portion of random labels.}
\end{figure}
\begin{figure}[t]
    \centering
    \subcaptionbox{\label{fig:fsgd-cifar10-random-a}}
      {\includegraphics[width=0.3\linewidth]{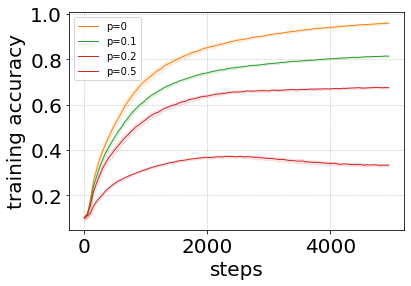}}
    \subcaptionbox{\label{fig:fsgd-cifar10-random-b}}
      {\includegraphics[width=0.3\linewidth]{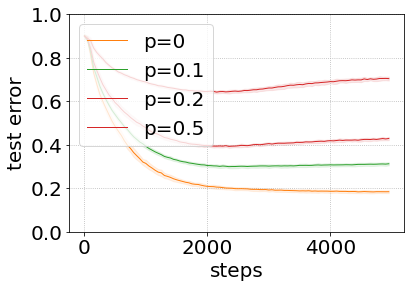}}
    \subcaptionbox{\label{fig:fsgd-cifar10-random-c}}
      {\includegraphics[width=0.3\linewidth]{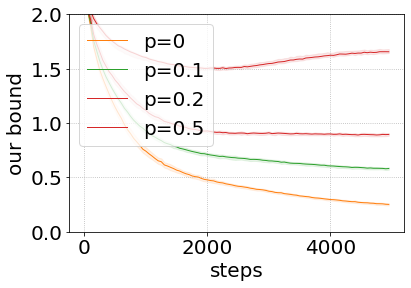}}
    \caption{\label{fig:fsgd-cifar-random} Random labels (FSGD + CIFAR10).}
\end{figure}

\paragraph{FGD vs GD} We attempt to show that the performance of FGD (Algorithm~\ref{alg:fgd}) with reasonable precision $\eps_t$ is similar to the traditional Gradient Descent (GD) defined below (with momentum $\alpha$):
\begin{align*}
    W_t \gets W_{t-1} +\alpha(W_{t-1}-W_{t-2}) + \gamma_t \g f(W_{t-1}, S). \tag{GD}
\end{align*}
We train a CNN defined above on MNIST by GD ($\gamma_t = 0.005\times 0.9^{\lfloor\frac{t}{150}\rfloor}$, $\alpha=0.9$). And we compare the training curves with that of FGD under the same hyper-parameter setting ($\eps_t=\gamma_t=0.005$, $\alpha=0.9$). We repeat our experiment on GD $25$ times. The result is shown in Figure~\ref{fig:fgd-gd-mnist}.
\begin{figure}
    \centering
    \subcaptionbox{FGD}
      {\includegraphics[width=0.3\linewidth]{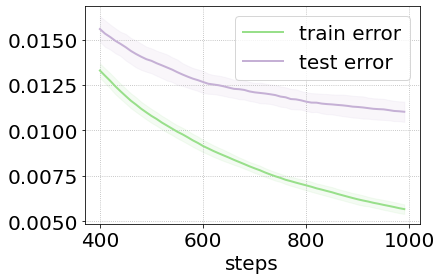}}
    \subcaptionbox{GD}
      {\includegraphics[width=0.3\linewidth]{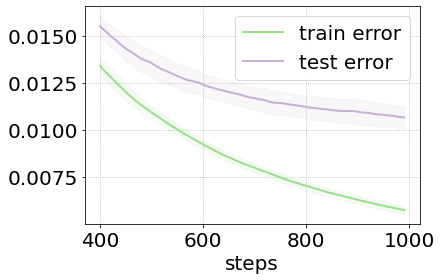}}
    \subcaptionbox{together}
      {\includegraphics[width=0.3\linewidth]{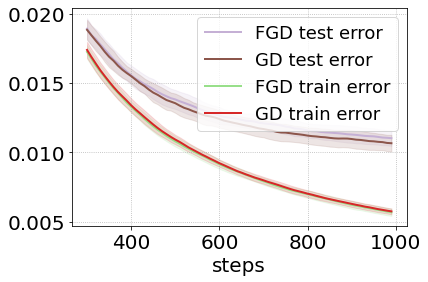}}
    \caption{\label{fig:fgd-gd-mnist}MNIST: FGD vs GD. In (c), we plot FGD and GD together. As we can see that, the curves of FGD and GD are almost coincident.}
\end{figure}
\begin{figure}
    \centering
    \subcaptionbox{FSGD}
      {\includegraphics[width=0.3\linewidth]{figs/cifar10/cifar_0.png}}
    \subcaptionbox{SGD}
      {\includegraphics[width=0.3\linewidth]{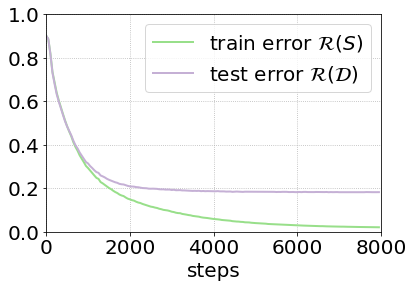}}
    \subcaptionbox{together}
      {\includegraphics[width=0.3\linewidth]{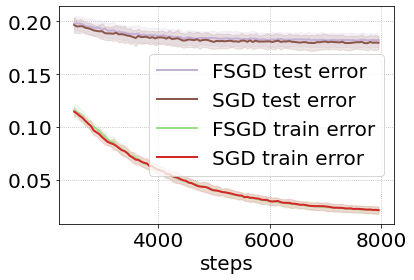}}
    \caption{CIFAR10: FSGD vs SGD. \label{fig:fsgd-sgd-cifar}}
\end{figure}
As we can see from the figures, the optimization as well as generalization performance of GD and FGD are close.
\paragraph{FSGD vs SGD} We also show that the performance of FSGD (Algorithm~\ref{alg:fsgd}) with reasonable precision $\eps_t$ is very close to the ordinary Stochastic Gradient Descent (SGD) defined below (with momentum $\alpha$ and a). The only difference is that we sample a mini-batch $B_t$ before each step.:
\begin{align*}
    W_t \gets W_{t-1} +\alpha(W_{t-1}-W_{t-2}) + \gamma_t \g f(W_{t-1}, S_{B_t}). \tag{SGD}
\end{align*}
We train a SimpleNet defined above on CIFAR10 by SGD ($\gamma_t = 0.001$, $\alpha=0.99$). And we compare the training curves with that of FSGD under the same hyper-parameter setting ($\eps_t=0.001,\gamma_t=0.001\times 0.9^{\lfloor\frac{t}{200}\rfloor}, \alpha=0.99$). We repeat our experiment on SGD $10$ times. The result is shown in Figure~\ref{fig:fsgd-sgd-cifar}.
As we can see from the figures, the optimization as well as generalization performance of FSGD are close to SGD.

\end{document}